\newtheorem{theorem}{Theorem}
\newtheorem{lemma}{Lemma}
\newtheorem{proposition}{Proposition}
\newtheorem{definition}{Definition}
\newtheorem{assumption}{Assumption}
\newcommand{\E}{\mathbb{E}}
\newcommand{\I}{\mathbb{I}}
\newcommand{\R}{\mathbb{R}}
\newcommand{\bX}{\boldsymbol{X}}
\newcommand{\bmu}{\boldsymbol{\mu}}
\newcommand{\cA}{\mathcal{A}}
\newcommand{\cB}{\mathcal{B}}
\newcommand{\cE}{\mathcal{E}}
\newcommand{\cF}{\mathcal{F}}
\newcommand{\cN}{\mathcal{N}}
\newcommand{\one}{\bm{1}}
\newcommand{\mb}[1]{\mathbb{#1}}
\newcommand{\mc}[1]{\mathcal{#1}}
\newcommand{\mr}[1]{\mathrm{#1}}
\newcommand{\argmax}{\operatornamewithlimits{argmax}}
\newcommand{\inner}[1]{ \left\langle {#1} \right\rangle }
    \newcommand{\OnlyInFull}[1]{}
    \newcommand{\OnlyInShort}[1]{#1}
    \newcommand{\OnlyInFull}[1]{#1}%
    \newcommand{\OnlyInShort}[1]{}%
\newcommand{\compilehidecomments}{false}
    \newcommand{\wei}[1]{}
    \newcommand{\haoyu}[1]{}
    \newcommand{\kai}[1]{}
    \newcommand{\wei}[1]{{{O}lor{blue!50!black}  [\text{Wei:} #1]}}
    \newcommand{\haoyu}[1]{{{O}lor{brown!60!black} [\text{Haoyu:} #1]}}
    \newcommand{\kai}[1]{{{O}lor{brown!60!black} [\text{Kai:} #1]}}
\mathchardef\mhyphen="2D
\newcommand{\Oracle}{{\sf O}}
\newcommand{\Cucbsw}{{\sf CUCB\mhyphen SW}}
\newcommand{\Cucbbob}{{\sf CUCB\mhyphen BoB}}
\title{Combinatorial Semi-Bandit in the Non-Stationary Environment}
\author[1,*]{\href{mailto:Wei Chen <weic@microsoft.com>?Subject=Your UAI 2021 paper}{Wei Chen}}
\author[2]{\href{mailto:Liwei Wang <wanglw@cis.pku.edu.cn>?Subject=Your UAI 2021 paper}{Liwei Wang}}
\author[3]{\href{mailto:Haoyu Zhao <haoyu@princeton.edu>?Subject=Your UAI 2021 paper}{Haoyu Zhao}}
\author[4]{\href{mailto:Kai Zheng <zhengk92@gmail.com>?Subject=Your UAI 2021 paper}{Kai Zheng}}
\affil[1]{%
	Microsoft Research, Beijing, China. \texttt{weic@microsoft.com}
}
\affil[2]{%
	Key Laboratory of Machine Perception, MOE, School of EECS\\

	Center for Data Science, Peking University, Beijing, China. \texttt{wanglw@cis.pku.edu.cn}
}
\affil[3]{
	Princeton University, NJ, USA. \texttt{haoyu@princeton.edu}
}
\affil[4]{Kuaishou Inc., Beijing, China. \texttt{zhengk92@gmail.com}}
\affil[*]{Alphabetic order}
\begin{document}
\maketitle

\begin{abstract}
  In this paper, we investigate the non-stationary combinatorial semi-bandit problem, both in the switching case and in the dynamic case. In the general case where (a) the reward function is non-linear, (b) arms may be probabilistically triggered, and (c) only approximate offline oracle exists \citep{wang2017improving}, our algorithm achieves $\tilde{O}(m\sqrt{N T}/\Delta_{\min})$ distribution-dependent regret in the switching case, and $\tilde{O}({V}^{1/3}T^{2/3})$ distribution-independent regret in the dynamic case, where ${N}$ is the number of switchings and ${V}$ is the sum of the total ``distribution changes'', $m$ is the total number of arms, 
  and $\Delta_{\min}$ is a gap variable dependent on the distributions of arm outcomes. 
  The regret bounds in both scenarios are nearly optimal, but our algorithm needs to know the parameter ${N}$ or ${V}$ in advance. 
    We further show that by employing another technique, our algorithm no longer needs to know the parameters  ${N}$ or ${V}$ but the regret bounds could become suboptimal.
    In a special case where the reward function is linear and we have an exact oracle, we apply a new technique to design a parameter-free algorithm that achieves nearly optimal regret both in the switching case and in the dynamic case without knowing the parameters in advance.
\end{abstract}

\section{Introduction}\label{sec:intro}
Stochastic multi-armed bandit (MAB) \citep{auer2002finite,thompson1933likelihood} is a classical model that has been extensively studied in online learning and online decision making. The most simple version of MAB consists of $m$ arms, where each arm corresponds to an unknown distribution. 
In each round, the player selects an arm, and the environment generates a reward of that arm from the corresponding distribution. The objective is to sequentially select the arms in each round and maximize the total expected reward. The MAB problem characterizes the trade-off between exploration and exploitation: 
On the one hand, one may play an arm that has not been played much before to explore whether it is good, and on the other hand, one may play the arm with the largest average reward so far to accumulate the reward.

Stochastic combinatorial multi-armed bandit (CMAB) is a generalization of the original stochastic MAB problem. 
In CMAB, the player may choose a combinatorial action over the arms $[m]$, and thus there may be an exponential number of actions. 
Each action triggers a set of arms, the outcomes of which are observed by the player.
This is called the {\em semi-bandit} feedback.
Moreover, some arms may be triggered probabilistically based on the outcome of other arms \citep{CWYW16,wang2017improving,kveton2015cascading,kveton2015combinatorial}. 
CMAB has received much attention because of its wide applicability
    from the original online (repeated) combinatorial optimization to other practical problems, e.g. wireless networking, 
    online advertising, recommendation, and influence maximization in social networks \citep{CWY13,CWYW16,wang2017improving,gai2012combinatorial,combes2015combinatorial,kveton2014matroid,kveton2015cascading,kveton2015combinatorial,kveton2015tight}.

All these studies focus on the stationary case, where the distribution of arm outcomes stays the same through time. 
However in practice, the environment is often changing. 
For example, in network routing, some routes are not available temporarily for maintenance; in influence maximization, 
    student users may likely use social media less frequently during the final exam period; in online advertising and recommendation, people's preferences may change due to news events or fashion trend changes.

Motivated by such realistic settings, we consider the non-stationary CMAB problem in this paper. 
Let ${D}_t$ denote the distribution of the arm outcomes (represented as a vector) at time $t$.
We use two quantities, switchings and variation, to measure the changing of distributions $\{{D}_t\}_{t\le T}$. 
The number of switchings is defined as ${N} := 1 + \sum_{t=2}^{T}\I\{{D}_t \neq {D}_{t-1}\}$, and the variation is given as ${V} :=  \sum_{t=2}^{T}||\bmu_t -\bmu_{t-1}||_{\infty}$, 
    where $\bmu_t$ is the mean outcome vector of the arms following distribution ${D}_t$. 
A related definition is the total variation $\bar{V} := \sum_{t=2}^{T}||{D}_t -{D}_{t-1}||_{\text{TV}}$, where $||\cdot ||_{\text{TV}}$ denotes the total variation of a distribution.
The performance of the algorithm will be measured by the \emph{non-stationary regret} instead of the regret in the stationary case.

This problem is first considered by \cite{zhou2019near}, where the authors consider the non-stationary CMAB with approximation oracle but no probabilistically triggered arms. \citet{zhou2019near} only study the switching case, or the piecewise stationary case, where the non-stationarity is measured by ${N}$. Moreover, they add an assumption on the length of each stationary segment and thus bound the switchings ${N}$ to be ${O}(\sqrt{T})$. Different from their model and assumptions, we consider the non-stationary CMAB in both the switching case (measured by ${N}$) and the dynamic case (measured by ${V}$ or $\bar{V}$). We do not make assumptions on the number of switchings ${N}$ and the length of stationary periods. Our contributions can be summarized as follow:
 
    \textbf{1.} When we know the changing parameters ${N}$ or ${V}$, we design algorithm $\Cucbsw$ for the non-stationary CMAB problem. We show that $\Cucbsw$ has nearly optimal distribution-dependent bound both in the switching case and the dynamic case, and the leading terms in the regret bounds are $\tilde{O}(m\sqrt{{N} T}/\Delta_{\min})$ and $\tilde{O}(m\sqrt{{V} T}/\Delta_{\min})$, where $m$ is the total number of arms and $\Delta_{\min}$ is gap variable dependent on the distributions of arm outcomes
    (see Section~\ref{sec:general} for the precise technical definition). 
    We also show that $\Cucbsw$ has nearly optimal distribution-independent bound in the dynamic case and the leading term in the bound is $\tilde{O}({V}^{1/3}T^{2/3})$. 
    
    \textbf{2.} When parameters ${N}$ or ${V}$ are unknown, we design algorithm $\Cucbbob$, which achieves sublinear regret in terms of $T$ as long as ${N} < cT^{\gamma}$ or ${V} \le cT^{\gamma}$ for some constants $c$ and $\gamma < 1$. Moreover, the distribution-dependent bounds in both cases and the distribution-independent bound in the dynamic case are nearly optimal when ${N}$ and ${V}$ are large.
    
    \textbf{3.} In a special case when (a) the total reward of an action is linear in the means of arm distributions, 
    	(b) there is no probabilistically triggered arms, and 
    	(c) we have an exact oracle for the offline problem, 
    	we design \textsc{Ada-LCMAB} that does not need to know the parameters ${N}$ or ${V}$ in advance. Our algorithm has 
    	distribution-independent regret bounds $\tilde{O}(\min\{\sqrt{{N} T},{V}^{1/3}T^{2/3} + \sqrt{T}\})$, which is nearly optimal in terms of ${N}$, ${V}$, $T$ in both the switching case and the dynamic case.

\subsection{Related works}
\paragraph{Multi-armed bandit} Multi-armed bandit (MAB) problem is first introduced in
\cite{robbins1952bulletin}. MAB problems can be classified into stochastic bandits and adversarial bandits. 
In the stochastic case, the reward is drawn from an unknown distribution, and in the adversarial case, the reward is determined by an adversary. 
Our model is a generalization of the stochastic case, as discussed below.
The classical MAB algorithms include UCB \citep{auer2002finite} and Thompson sampling \citep{thompson1933likelihood} for the stochastic case and EXP3 \citep{auer2002nonstochastic} for the adversarial case. We refer to \cite{bubeck2012regret} for a comprehensive coverage on the MAB problems.

\paragraph{Combinatorial semi-bandit} Combinatorial semi-bandits (CSB) is a generalization of MAB, and there are also two types of CSB, i.e., in the adversarial or stochastic settings. Adversarial CSB was introduced in the context of shortest-path problems by \citet{Gyorgy2007}, and later studied extensively \citep{lattimore2018bandit}. There is also a large literature about stochastic CSB \citep{gai2012combinatorial,CWYW16,combes2015combinatorial,kveton2015combinatorial}. Recently, \citet{zimmert2019beating} propose a single algorithm that can achieve the best of both worlds. However, most of the previous works focus on linear reward functions. 
\citet{CWY13,CWYW16} initialize the study of nonlinear CSB. 
\citet{CWY13} consider the problem with $\alpha$-approximation oracle, and \cite{CWYW16} generalize the model with probabilistically triggered arms, which includes the online influence maximization problem. \citet{wang2017improving} further improve the result and remove an exponential term in the regret bound by considering a subclass of CMAB with probabilistically triggered arms, and prove that the online influence maximization belongs to this subclass. \citet{chen2016combinatorial} generalize the model in \cite{CWY13} in another way, 
	and they consider the CMAB problem with a general reward function that is dependent on the distribution of the arms, not only on their means.

\paragraph{Non-stationary bandits} Non-stationary MAB can be viewed as a generalization of the stochastic MAB, where the reward distributions are changing over time. To obtain optimal regret bounds in terms of ${N}$ or ${V}$, most of the studies need to use ${N}$ or ${V}$ as algorithmic parameters, which may not be easy to obtain in practice \citep{garivier2011upper,wei2016tracking,liu2018change,gur2014stochastic,besbes2015nonstationary}. 
Until very recently, an innovative study by \citet{auer2O19adaptively} solves the problem without knowing ${N}$ or ${V}$ in the bandit case and achieves optimal regret. 
Nearly at the same time, \citet{chen2019new} significantly generalizes the previous work by extending it into the non-stationary contextual bandit and also achieves optimal regret without any prior information, but this algorithm is far from practical. The works closest to ours are by \citet{zhou2019near} who also considers non-stationary combinatorial semi-bandits, and by \citet{wang2019aware} who consider the piecewise-stationary cascading bandit. There are also some works considering non-stationary linear bandits \citep{russac2019weighted,kim2019near}, which is a generalization of linear combinatorial bandits. However, the last two studies only achieve optimal bounds when the algorithm knows ${N}$ or ${V}$. 
Although the algorithm in \cite{zhou2019near} is parameter-free, they make other assumptions on the length of the switching period. Moreover, they do not consider the probabilistically triggered arms.


\section{Model}\label{sec:model}

In this section, we introduce our model for the non-stationary combinatorial semi-bandit problem. Our model is derived from \cite{wang2017improving}, which handles nonlinear reward functions, approximate offline oracle, and the probabilistically triggering arms.

We have $m$ base arms $[m]=\{1,2,\ldots, m\}$. At time $t$, the environment samples random outcomes 
    $\bX^{(t)} = (X_1^{(t)},X_2^{(t)},\dots,X_m^{(t)})$ for these arms from a joint distribution ${D}_t\in\mathbb D$.
The sample random variable $X_i^{(t)}$ has support $[0,1]$ for all $i,t$. 
Let $\mu_{i,t} = \E[X_i^{(t)}]$ and we use $\boldsymbol\mu_t = (\mu_{1,t},\mu_{2,t},\dots,\mu_{m,t})$ to denote the mean vector at time $t$. The player does not know ${D}_t$ for any $t$. In round $t \ge 1$, the player selects an action $S_t$ from an action space $\mathbb S$ (could be infinite) based on the feedback from the previous rounds.
When we play action $S_t$ on the environment outcome $\bX^{(t)}$, a random subset of arms $\tau_t\subseteq [m]$ are triggered, and the outcomes of $X_i^{(t)}$ for all $i\in \tau_t$ are observed as the feedback to the player. The player also obtains a nonnegative reward $R(S_t,\bX^{(t)}, \tau_t)$ fully determined by $S_t,\bX^{(t)}$ and $\tau_t$. Our objective is to properly select actions $S_t$'s at each round $t$ based on the previous feedback and maximize the cumulative reward.



For the triggering set $\tau_t$ given the environment outcome $\bX^{(t)}$ and the action $S_t$, we assume that $\tau_t$ is sampled from the distribution ${D}^{trig}(S_t,\bX^{(t)})$, where ${D}^{trig}(S,\bX)$ is the probabilistic triggering function, and it is a probability distribution on the triggered subsets $2^{[m]}$ given the action $S$ and environment outcome $\bX$. Moreover, we use $p_i^{{D},S}$ to denote the probability that action $S$ triggers arm $i$ when the environment triggering distribution is ${D}$. We define $\tilde S^{{D}} = \{i:p_i^{{D},S} > 0\}$ to be the set of arms that can be triggered by action $S$ under distribution ${D}$. 

We assume that $\E[R(S_t,\bX^{(t)}, \tau_t)]$ is a function of $S_t,\boldsymbol\mu_t$, and we use $r_{S}(\boldsymbol\mu) := \E_{\bX}[R(S,\bX,\tau)]$ to denote the expected reward of action $S$ given the mean vector $\boldsymbol\mu$. This assumption is similar to that in \cite{CWYW16,wang2017improving}, and can be satisfied for example when variables $X_i^{(t)}$'s are independent Bernoulli random variables.
Let $\text{opt}_{\boldsymbol\mu_t} := \sup_{S\in\mathbb S}r_{S}(\boldsymbol\mu_t)$ denote the maximum reward in round $t$ given the mean vector $\boldsymbol\mu_t$.

The previous model is similar to that in \cite{wang2017improving}, except that in this paper, we consider the non-stationary setting where ${D}_t$ can change in different rounds. 
We assume that $\{{D}_t\}$ are generated {\em obliviously}, i.e. the generation of ${D}_t$ is completed before the algorithm starts, or equivalently, the generation of ${D}_t$ is independent to the randomness of our algorithm and the randomness of the previous samples $X^{(s)},s< t$. Next, we introduce the measurement of the non-stationarity. In general, there are two measurements of the change of the environment: the first is the number of the swichings ${N}$, and the second is the variation ${V}$ or $\bar{V}$. For any interval $I = [s,s']$, we define the number of switchings on $I$ to be ${N}_{I} := 1 + \sum_{t=s+1}^{s'}\I\{{D}_t \neq {D}_{t-1}\}$, which can be interpreted as the number of stationary segments. As for the variation, we define ${V}_{I} := \sum_{t=s+1}^{s'}||\bmu_t -\bmu_{t-1}||_{\infty}$, which denotes the total change of the mean. 
By the above definitions, we have a simple fact that ${V}_{I} \le {N}_{I}$.
Another similar quantity is the total variation, and the formal definition is given as $\bar{V}_{I} := \sum_{t=s+1}^{s'}||{D}_t -{D}_{t-1}||_{\text{TV}}$, where $||\cdot ||_{\text{TV}}$ denotes the total variation of the distribution. 

${V}$ is a lower bound of $\bar{{V}}$ (see Lemma 9 in \cite{luo2018efficient}). In some cases, $\bar{{V}}$ can be in order $\Theta(T)$ while ${V}$ is a constant (just consider distribution varies but with the same expectation). In non-stationary multi-armed bandits, ${V}$ is more frequently used compared with $\bar{{V}}$ \citep{gur2014stochastic,auer2O19adaptively}. $\bar{{V}}$ is often used in contextual bandits \citep{luo2018efficient,chen2019new}. 

For convenience, we use ${N}$, ${V}$ and $\bar{V}$ to denote ${N}_{[1,T]}$, ${V}_{[1,T]}$ and $\bar{V}_{[1,T]}$ respectively. When we use ${N}$ to measure the non-stationarity, we say that we are considering the switching case. Otherwise, when we are using parameters ${V}$ or $\bar{V}$, we say that we are in the dynamic case. We also define $K = \max_{t,S}|\tilde S^{{D}_t}|$ to be the maximum number of arms that can be triggered by an action in any round. Clearly, we have $K\le m$.

Now we can introduce the measurement of the algorithm. Given an online algorithm $\cA$, we assume that $\cA$ has access to an offline $(\alpha,\beta)$-approximation oracle $\Oracle$, which takes the input $\boldsymbol\mu = (\mu_1,\dots,\mu_m)$ and returns an action $S^{\Oracle}$ such that $\Pr\{r_{\mu}(S^{\Oracle}) \ge \alpha\cdot\text{opt}_{\boldsymbol\mu}\} \ge \beta$. Here, $\alpha$ can be interpreted as the approximation ratio and $\beta$ is the success probability. Based on the $(\alpha,\beta)$-approximation oracle $\Oracle$, we have the following definition of $(\alpha,\beta)$-approximation non-stationary regret:

\begin{definition}[$(\alpha,\beta)$-approximation Non-stationary Regret]
    \label{def: regret}
    The $(\alpha,\beta)$-approximation non-stationary regret for algorithm $\cA$ during the total time horizon $T$ is defined as the following:
     
    \[\text{Reg}^{\cA}_{\alpha,\beta} := \alpha\cdot\beta\cdot\sum_{t=1}^T\text{opt}_{\boldsymbol\mu_t} - \E\left[\sum_{t=1}^T r_{S_t^{\cA}}(\boldsymbol\mu_t)\right],\]
    where $S_t^{\cA}$ is the action selected by algorithm $\cA$ in round $t$.
\end{definition}

Intuitively, the first term $\alpha\cdot\beta\cdot\sum_{t=1}^T\text{opt}_{\boldsymbol\mu_t}$ is the best we can guarantee with the total knowledge of the distributions ${D}_t$ for every round $t$, and the second term is the expected reward selected by our algorithm $\cA$. 

Our regret bounds are in the form $\tilde{O}({N}^{\gamma_1}T^{\gamma_2})$ for the switching measurement and
	$\tilde{O}({V}^{\gamma_3}T^{\gamma_4})$ for the  variation measurement.
Note that if we allow the distributions ${D}_t$ to change arbitrarily in every round, we cannot learn the distribution at all
	and there is no hope to get the non-stationary regret bound ``sub-linear'' in terms of $T$.
This implies that we cannot get regret bounds with $\gamma_1 + \gamma_2 < 1$ or $\gamma_3 + \gamma_4 < 1$, because
	${N}$ and ${V}$ are bounded by $T$ and the above inequalities would lead to sublinear regrets even for arbitrary changes of ${D}_t$.
Thus, the best one can hope for is to achieve regret bounds with $\gamma_1 + \gamma_2 = 1$ or $\gamma_3 + \gamma_4 = 1$.
Indeed, all of our algorithms in the paper achieve such regret bounds.
In this case, as long as ${N}$ or ${V}$ is sublinear in $T$, we would achieve a sublinear regret in $T$.
Moreover, in this case, we also prefer bounds with $\gamma_2$ or $\gamma_4$ as small as possible, because it would lead to better regret bound in $T$ as long as ${N}$ or ${V}$ is sublinear in $T$.
In many cases, our algorithms do achieve the minimum possible $\gamma_2$ or $\gamma_4$, as we discuss later for each algorithm.

We make the following assumptions on the problem instance similar to those in \cite{wang2017improving}, which shows that many important
	CMAB application instances such as influence maximization and combinatorial cascading bandit satisfy these assumptions.

\begin{restatable}[Monotonicity]{assumption}{assmonotonicity}\label{ass:monotonicity}
For any $\bmu$ and $\bmu'$ with $\bmu \le \bmu'$ (dimension-wise), for any action $S$, $r_S(\bmu) \le r_S(\bmu')$.
\end{restatable}

\begin{restatable}[$1$-Norm TPM Bounded Smoothness]{assumption}{asstpmboundedsmooth}\label{ass:tpm-bounded-smoothness}
    For any two distributions ${D},{D}'$ with expectation vectors $\boldsymbol{\mu}$ and $\bmu'$ and any action $S$, we have
    \[|r_S(\bmu) - r_S(\bmu') |\le B\sum_{i\in [m]}p^{{D},S}_i|\bmu_i - \bmu'_i|.\]
\end{restatable}

\section{General Algorithm for Non-stationary CMAB}\label{sec:general}

In this section, we give an algorithm for the general CMAB model defined in Section \ref{sec:model}. We first give the algorithm ($\Cucbsw$) when we know that parameters ${N}$ or ${V}$ that measure the non-stationarity. Then, we show how to combine the $\Cucbsw$ with the Bandit-over-Bandit \cite{cheung2019learning} to get a parameter-free algorithm ($\Cucbbob$).
 
\subsection{Nearly optimal regret when knowing ${N}$ or ${V}$}
In this part, we show our algorithm for the non-stationary CMAB problem when we know the parameter ${N}$ or ${V}$. We apply a standard technique and get a simple algorithm $\Cucbsw$. Although the algorithm is simple and straightforward, the analysis is quite complicated. Our main contribution is the analysis for $\Cucbsw$, especially when we have the approximation oracle and the probabilistic triggering arms. We will first introduce our algorithm $\Cucbsw$, and then state the regret bound and give some discussions on the regret bound and proof sketch.

When we know the parameters ${N}$ or ${V}$, we can apply the sliding window technique to get the result for non-stationary CMAB. The resulting algorithm is simple and included as Algorithm~\ref{alg:non-stationary}: We use CUCB \citep{wang2017improving} in each round, but we only consider the samples in a sliding window with size $w$. 

Generally speaking, in each round, we compute the empirical mean of each arm in a sliding window with size $w$. We also compute the corresponding UCB value for each arm. Then, we use the oracle $\Oracle$ to solve the optimization problem with the UCB value of each arm as input. 

\begin{algorithm}[t]
\caption{Sliding Window CUCB: $\Cucbsw$}
\label{alg:non-stationary}
    \begin{algorithmic}[1]
    \STATE {\bfseries Input:} $m$, Oracle $\Oracle$, time horizon $T$, window size $w \le T$ ($w$ depends on
    	${V}$ or ${N}$, see Theorem~\ref{thm:cucb-sw-prob})
    \FOR{$t=1,2,3,\dots$}
        \STATE $T_{i,t} \leftarrow$ number of time arm $i$ has been triggered in time $\max\{t-w+1,1\},\dots,t-1$.
        \STATE $\hat \mu_{i,t} \leftarrow $ empirical mean of arm $i$ during time $t-w ,\dots,t-1$; ($1$ if not triggered).
        \STATE $\rho_{i,t} \leftarrow \sqrt{\frac{3\ln T}{2T_{i,t}}}$ ($\infty$ if $T_{i,t} = 0$) 
        \STATE $\bar \mu_{i,t} = \min\{\hat \mu_{i,t} + \rho_{i,t},1\}$
        \STATE $S_t\leftarrow\Oracle(\bar \mu_{1,t},\bar \mu_{2,t},\dots,\bar \mu_{m,t})$
        \STATE Play action $S_t$, observe samples from triggered set.
    \ENDFOR
    \end{algorithmic}
\end{algorithm}

To introduce the regret bound for $\Cucbsw$, we need to define the gap in the non-stationary case. Formally, we have the following definition.

\begin{restatable}[Gap]{definition}{defngap}\label{defn:gap}
    For any distribution ${D}$ with mean vector $\bmu$. For each action $S$, we define the gap $\Delta^{{D}}_S := \max\{0,\alpha\cdot\text{opt}_{\bmu} - r_S(\bmu)\}$. For each arm $i$, we define
    \begin{align*}
        & \Delta^{i,t}_{\min} = \inf_{S\in\mathbb S:p^{{D}_t,S}_i > 0,\Delta^{{D}_t}_S > 0}\Delta^{{D}_t}_S, \\
        & \Delta^{i,t}_{\max} = \sup_{S\in\mathbb S:p^{{D}_t,S}_i > 0,\Delta^{{D}_t}_S > 0}\Delta^{{D}_t}_S.
    \end{align*}
    We define $\Delta_{\min}^i = +\infty$ and $\Delta_{\max}^i = 0$ if they are not properly defined by the above definitions. Furthermore, we define $\Delta^{i}_{\min} := \min_{t\le T}\Delta^{i,t}_{\min}$, $\Delta^{i}_{\max} := \max_{t\le T}\Delta^{i,t}_{\max}$ as the minimum and maximum gap for each arm.
\end{restatable}

In the above definition, the gap $\Delta_{\min}^{i,t},\Delta_{\max}^{i,t}$ for a fixed arm $i$ and a fixed time is similar to the definition of gap in \cite{wang2017improving}. However, their definition is based on a single distribution ${D}$, and in our setting, we need to generalize the definition from stationary case to dynamic case where we need to take several distributions into account. Our generalization from the stationary to the dynamic case is similar to the generalization in \cite{garivier2011upper}, which takes the minimum of the gap in each round. With the above definition, we have the following regret bound.

\begin{restatable}[Regret for $\Cucbsw$]{theorem}{thmcucbswprob}\label{thm:cucb-sw-prob}
Choosing the length of the sliding window to be $w = \min\left\{\sqrt{\frac{T}{{V}}},T\right\}$, we have the following distribution-dependent bound,
\[\text{Reg}_{\alpha,\beta} = \tilde{O}\left(\sum_{i\in [m]}\frac{K\sqrt{{V} T}}{\Delta^i_{\min}} + \sum_{i\in [m]}\frac{K}{\Delta^i_{\min}} + mK \right).\]
If we choose the length of the sliding window to be $w = \min\left\{m^{1/3}T^{2/3}K^{-1/3}{V}^{-2/3},T\right\}$, we have the following distribution-independent bound,
\[\text{Reg}_{\alpha,\beta} = \tilde{O}\left((m{V})^{1/3}(KT)^{2/3} + \sqrt{mKT} + mK\right).\]
\end{restatable}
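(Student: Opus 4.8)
The plan is to bound the per-round regret $\alpha\beta\,\text{opt}_{\bmu_t}-\E[r_{S_t}(\bmu_t)]$, split it into a \emph{statistical} part (estimating a fixed distribution) and a \emph{bias} part (the drift of $\bmu_s$ inside the window), sum over $t$, and finally optimize $w$. The organizing identity is that, under a concentration event, the windowed UCB is \emph{approximately optimistic}: writing $\bar{\bar\mu}_{i,t}:=T_{i,t}^{-1}\sum_{s}\mu_{i,s}$ for the conditional mean of $\hat\mu_{i,t}$ over the triggered rounds in the window, one has $\bar\mu_{i,t}\ge\bar{\bar\mu}_{i,t}\ge\mu_{i,t}-b_{i,t}$ and simultaneously $\bar\mu_{i,t}-\mu_{i,t}\le 2\rho_{i,t}+b_{i,t}$, where the purely deterministic bias $b_{i,t}:=\max_{s\in[t-w,t-1]}|\mu_{i,s}-\mu_{i,t}|\le\cV_{[t-w,t]}$ is controlled entirely by the in-window variation.

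First I would set up the concentration event. Since arm $i$ is triggered at random rounds with conditionally mean-$\mu_{i,s}$ outcomes bounded in $[0,1]$, the deviation $\hat\mu_{i,t}-\bar{\bar\mu}_{i,t}$ is an average of martingale differences; applying Azuma--Hoeffding simultaneously over all trigger counts $T_{i,t}\in\{1,\dots,w\}$ and all $(i,t)$ (the standard all-counts union bound) gives that $\mathcal{G}:=\{|\hat\mu_{i,t}-\bar{\bar\mu}_{i,t}|\le\rho_{i,t}\ \forall i,t\}$ holds with probability $1-\cO(m/T)$, its complement costing only $\cO(mK)$ regret because each round's reward is $\cO(BK)$ by Assumption~\ref{ass:tpm-bounded-smoothness}. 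On $\mathcal{G}$ I run the CUCB argument adapted for $\beta<1$ and for the bias: taking expectation over the oracle's internal randomness gives $\E[r_{S_t}(\bar\mu_t)]\ge\beta\alpha\,\text{opt}_{\bar\mu_t}$ (using $r\ge0$ on the oracle-failure event); evaluating the action $S^*$ optimal for $\bmu_t$ at $\bar\mu_t$ and using Assumption~\ref{ass:tpm-bounded-smoothness} with the approximate optimism gives $\text{opt}_{\bar\mu_t}\ge\text{opt}_{\bmu_t}-B\sum_i p_i^{\cD_t,S^*}b_{i,t}$; and Assumptions~\ref{ass:monotonicity}--\ref{ass:tpm-bounded-smoothness} bound $r_{S_t}(\bar\mu_t)-r_{S_t}(\bmu_t)\le B\sum_i p_i^{\cD_t,S_t}(2\rho_{i,t}+b_{i,t})$. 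Collecting terms, the per-round regret is at most
\[
2B\sum_i p_i^{\cD_t,S_t}\rho_{i,t}\;+\;B\sum_i p_i^{\cD_t,S_t}b_{i,t}\;+\;\alpha B\sum_i p_i^{\cD_t,S^*}b_{i,t}.
\]

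For the bias terms I would use $\sum_i p_i^{\cD_t,S}b_{i,t}\le K\,\cV_{[t-w,t]}$ together with the fact that each increment $\|\bmu_s-\bmu_{s-1}\|_\infty$ lies in at most $w$ windows, so $\sum_t\cV_{[t-w,t]}\le w\cV$; the total bias contribution is thus $\cO(BKw\cV)$. The statistical term $\sum_t 2B\sum_i p_i^{\cD_t,S_t}\rho_{i,t}$ is the heart of the proof. The key reduction is to partition $[1,T]$ into $\lceil T/w\rceil$ \emph{disjoint} blocks of length $w$: for a round $t$ inside a block, $T_{i,t}$ is at least the number of earlier triggers of $i$ \emph{within the same block}, so replacing $T_{i,t}$ by that within-block count only enlarges $\rho_{i,t}$ and decouples the overlapping windows. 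Within a single block the sum becomes exactly the statistical regret of stationary CUCB over $w$ rounds, to which I apply the triggering-probability-modulated (TPM) counting of \cite{wang2017improving}, yielding $\tilde\cO\!\big(\sum_i \tfrac{BK}{\Delta^i_{\min}}\big)$ per block (distribution-dependent) and $\tilde\cO(B\sqrt{mKw})$ per block (distribution-independent). Multiplying by $\lceil T/w\rceil$ and adding the bias gives $\tilde\cO\!\big(\tfrac{T}{w}\sum_i\tfrac{K}{\Delta^i_{\min}}+Kw\cV\big)$ and $\tilde\cO\!\big(\tfrac{T}{w}\sqrt{mKw}+Kw\cV\big)$ respectively.

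The final step is to balance $w$. Setting $w=\sqrt{T/\cV}$ equates the distribution-dependent terms and produces the leading $\tilde\cO\!\big(\sum_i\tfrac{K\sqrt{\cV T}}{\Delta^i_{\min}}\big)$, with the $\cO(BK\sqrt{\cV T})$ bias of the same $\sqrt{\cV T}$ order; choosing $w=m^{1/3}T^{2/3}K^{-1/3}\cV^{-2/3}$ minimizes $\tfrac{T}{w}\sqrt{mKw}+Kw\cV$ and gives $(m\cV)^{1/3}(KT)^{2/3}$; capping $w$ at $T$ when $\cV$ is small yields the residual $\sqrt{mKT}$ stationary term, and the concentration-failure rounds supply the additive $mK$. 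I expect the main obstacle to be pushing the TPM counting of \cite{wang2017improving} through \emph{inside a sliding window with probabilistic triggering}: one must check that lower-bounding $T_{i,t}$ by the within-block trigger count is compatible both with the all-counts martingale concentration (so $\rho_{i,t}$ stays a valid radius) and with the gap-indexed pigeonhole that turns $\sum_i p_i^{\cD_t,S_t}\rho_{i,t}$ into the $1/\Delta^i_{\min}$ bound, since the per-round gaps $\Delta^{i,t}_{\min}$ now vary and only the worst-case $\Delta^i_{\min}$ may be used.
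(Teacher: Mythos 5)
Your proposal is correct and follows essentially the same route as the paper's proof: a sliding-window concentration event relative to the in-window conditional mean (your $\bar{\bar\mu}_{i,t}$ is the paper's $\nu_{i,t}$), an approximate-optimism argument under Assumptions~\ref{ass:monotonicity} and~\ref{ass:tpm-bounded-smoothness} yielding a per-round statistical term plus a drift term bounded via $\sum_t \cV_{[t-w,t]}\le w\cV$, a reduction to disjoint length-$w$ blocks by lower-bounding the trigger counts with within-block counters, the stationary TPM counting of \cite{wang2017improving} per block, and the same optimization of $w$. The one step you flag but do not carry out --- making the within-block counting compatible with probabilistic triggering --- is resolved in the paper exactly as in the stationary case, via the triggering-probability-group counters $N_{i,j,t}$ (and their block-local minorants $N'_{i,j,t}$) together with the high-probability ``triggering is nice'' event $T_{i,t-1}\ge \tfrac13 N_{i,j,t-1}2^{-j}$ from the multiplicative Chernoff bound, which is window-agnostic.
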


Note that since we have ${V} \le {N}$, we can change the parameter from ${V}$ to ${N}$ in both of the regret bounds. We first look at the distribution-dependent bound. Unlike the distribution-dependent bound for the stationary MAB problem, the distribution-dependent bound here has order $\tilde{O}(\sqrt{T})$. However, the $\tilde{O}(\sqrt{T})$ term is unavoidable, since the distribution-dependent bound is lower bounded by $\Omega(\sqrt{T})$ \citep{garivier2011upper}. Although \cite{garivier2011upper} only prove the lower bound in the switching case, it also applies to the dynamic case since the switching case is a special case of the dynamic case. In this way, our distribution-dependent bound is nearly optimal in both cases in terms of ${V}$, ${N}$, and $T$.


As for the distribution-independent bound, the leading term in the dynamic case is $(m{V})^{1/3}(KT)^{2/3}$. This term is optimal in terms of ${V}$ and $T$ and we cannot further improve the exponential term. The second term $\sqrt{mKT}$ is also necessary, since this term will be the leading term when ${V}$ is very small, and the non-stationary CMAB degenerates to the original stationary CMAB problem. It is well known that $\sqrt{mT}$ is the lower bound for stationary MAB problem with $m$ arms, so the second term is also optimal. In this way, our distribution-independent bound is nearly optimal in the dynamic case. However, the bound in the switching case is not tight. Our upper bound is ${N}^{1/3}T^{2/3}$ but the current upper and lower bound for non-stationary MAB is $\sqrt{{N} T}$ \citep{auer2O19adaptively,chen2019new}. Designing nearly optimal regret bound for the switching case is left as future work.


The readers may find that the window lengths are not the same in the theorem for distribution-dependent/independent bounds. The different lengths are crucial to get optimal bounds since we optimize the regret bounds by the window length.

The readers may also be curious about the distribution change of the triggering probability. Note that in the model part (Section \ref{sec:model}), we do not explicitly define the distribution change of the triggering probability. However, the change of the triggering probability can change the reward a lot. The intuition is that, although we do not define the change of the triggering probability, the triggering probability is ``induced'' by the distribution of the outcome of each arm (e.g., the triggering of an edge in influence maximization problem is totally determined by the propagation probability of each arm). Besides, because of the TPM bounded smoothness (Assumption \ref{ass:tpm-bounded-smoothness}), the regret can also be bounded. In this way, we transfer the regret due to the change of the triggering probability to the regret due to the change of the arm outcome distribution, which is also the key challenge in our proof.

Now we briefly show our proof idea to handle the probabilistically triggered arms. Like the proof in \citet{wang2017improving}, we first partition the action-distribution pair $S^{D}$ into groups where $G_{i,j} = \{S^{{D}}\in\mathbb S\times \mathbb D| 2^{-j} < p_i^{{D},S}\le 2^{-j+1}\}$. 
Generally speaking, $G_{i,j}$ includes the action-distribution pairs that $S$ triggers arm $i$ under distribution $D$ with probability around $2^{-j}$. Then, we define another quantity $N_{i,j,t}$ for arm $i$ that may be triggered in group $G_{i,j}$, and it will count at time $s$ in the sliding window ends at $t$ if $2^{-j} < p_i^{{D}_s,S_s}\le 2^{-j+1}$. 
Intuitively, the expected number of triggers of arm $i$ during the sliding window can be upper-bounded by $2^{-j+1}N_{i,j,t}$ and lower bounded by $2^{-j}N_{i,j,t}$. Formally, we have the following definition for $N_{i,j,t}$.
\begin{restatable}[Counter]{definition}{defcounter}\label{defn:counter}
    Given the sliding window size $w$ of the algorithm, in a run of the algorithm, we define the counter $N_{i,j,t}$ as the following number
    \[N_{i,j,t} := \sum_{s=\max\{t-w+1,0\}}^t \I\left\{2^{-j} < p^{D_s,S_s}_i\le 2^{-j+1}\right\}.\]
\end{restatable}
The first step is to relate the $(\alpha,\beta)$-approximation non-stationary regret with the quantities $N_{i,j,t}$. All the terms related to the triggering probability can be converted to $N_{i,j,t}$. Next, we bound the formula with $N_{i,j,t}$. We show that the formula is non-increasing with respect to $N_{i,j,t}$, and we find another instance $N'$ such that $N'_{i,j,t} \le N_{i,j,t}$. The formula with $N'_{i,j,t}$ is easier to get regret upper bound and we use that quantity to bridge between the regret and the upper bound.

\subsection{Parameter-free algorithm}

\begin{algorithm}[t]
\caption{CUCB with Bandit over Bandit: $\Cucbbob$}
\label{alg:cucb-bob}
\begin{algorithmic}[1]
    \STATE {\bfseries Input:} Total time horizon $T$, Block size $L$, Parameters $R = R_2 - R_1$ where $R_1 \le r_S(\mathbf 0) \le r_S(\mathbf 1) \le R_2$.
    \STATE Suppose $2^k \le L < 2^{k+1}$. Set up an EXP3.P that has $k+1$ arms. Arm $i$ corresponds to window size $2^i$.
    \FOR{$\ell = 1,2,\dots,\lceil \frac{T}{L}\rceil$}
        \STATE Set up an algorithm $\Cucbsw$ for block $\ell$, choosing the window size according to EXP3.P.
        \FOR{$t = (\ell-1)L + 1,\dots,\min\{\ell L, T\}$}
            \STATE Act according to the $\Cucbsw$ in block $\ell$.
        \ENDFOR
        \STATE $R(\ell)$ is the total reward in block $\ell$.
        \STATE Pass $\frac{R(\ell)-R_1}{R}$ to EXP3.P. // Normalize to $[0,1]$
    \ENDFOR
\end{algorithmic}
\end{algorithm}

In this section, we introduce our parameter-free algorithm for the non-stationary CMAB problem. We combined the Bandit-over-Bandit technique \citep{cheung2019learning} with the previous sliding window CUCB algorithm ($\Cucbsw$), and design our parameter-free algorithm $\Cucbbob$ for general non-stationary CMAB problem.

Generally speaking, the Bandit-over-Bandit technique can be summarized as follow: We first divide the total time horizon $T$ into several segments where each segment has length $L$ (the last segment may not). Although we do not know the non-stationary parameters ${N}$ or ${V}$, 
we can guess ${N}$ or ${V}$, or other parameters used by the algorithm when we know the parameters ${N}$ or ${V}$. For example, we can guess the length of the sliding window of $\Cucbsw$. For two different blocks, we may run the algorithm with different guessing parameters. 
However, random guessing cannot have a good performance guarantee, and we use a ``master bandit algorithm'' to control our guessing. 
Whenever we complete the algorithm for a block with some guessing parameter, we feed the total reward in this block to the master bandit algorithm, and the master bandit algorithm will return us the parameter used in the next block.

In our non-stationary CMAB case, we combine the Bandit-over-Bandit technique with the previous sliding window algorithm $\Cucbsw$. First, we assume that we have EXP3.P algorithm for the master bandit \citep{bubeck2012regret}, which is a variant of the original EXP3 algorithm. We choose EXP3.P because it is easier to derive the regret bound since the regret of EXP3.P is bounded, while the original EXP3 only has pseudo-regret bound. Furthermore, we also assume that there exists parameters $R = R_2 - R_1$ where $R_1 \le r_S(\mathbf 0) \le r_S(\mathbf 1) \le R_2$. This assumption aims to bound the optimal value in each round. Without this assumption, the reward in each round may be too large. Our algorithm takes $L$ as input, which denotes the length of each block, and its proper value is given in Theorem~\ref{thm:cucb-bob}. We discretize the possible sliding window size in an exponential way: The possible window size are $1,2,4,\dots,2^k$ where $2^k \le L < 2^{k+1}$. There are ${O}(\log_2 L)$ number of possible window sizes in total. Then in each block, we run $\Cucbsw$ with some window size, and we control the window size by the master EXP3.P algorithm. The only thing left is that we need to feed the reward to the EXP3.P algorithm. Here we assume that the reward in each round is bounded, and we can compute the total reward in each block and normalize it into $[0,1]$. Please see Algorithm \ref{alg:cucb-bob} for more details.

\begin{restatable}[]{theorem}{thmcucbbob}\label{thm:cucb-bob}
    Suppose that there exist $R_1,R_2$ such that $R_1 \le r_S(\mathbf 0) \le r_S(\mathbf 1) \le R_2$ for any $S\in\mathbb S$ and $R = R_2 - R_1$. Choosing $L = \sqrt{mKT} / R$, we have the following distribution-independent regret bound for $\text{Reg}_{\alpha,\beta}$,
    \[\tilde{O}\left((m{V})^{\frac{1}{3}}(KT)^{\frac{2}{3}} + \sqrt{R}(mK)^{\frac{1}{4}}T^{\frac{3}{4}} +R\sqrt{mKT}\right).\]
    Choosing $L = K^{2/3}T^{1/3}$, we have the following distribution-dependent regret bound
    \[\tilde{O}\left(K\sqrt{\sum_{i\in [m]}\frac{TV}{\Delta^i_{\min}}} + \sum_{i\in [m]}\frac{K^{\frac{1}{3}}T^{\frac{2}{3}}}{\Delta^i_{\min}} + RK^{\frac{1}{3}}T^{\frac{2}{3}}\right).\]
\end{restatable}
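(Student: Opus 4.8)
The plan is to follow the Bandit-over-Bandit template: decompose the regret against the offline benchmark into (i) the regret incurred by running $\Cucbsw$ with the \emph{single best} window from the geometric pool $\{1,2,4,\dots,2^k\}$, restarted block by block, and (ii) the meta-regret of the EXP3.P master for not always selecting that best window. Writing $R_w(\ell)$ for the (counterfactual) reward a freshly-initialized $\Cucbsw$ with window $w$ would collect on block $\ell$, I would use
\begin{align*}
\alpha\beta\sum_{t}\text{opt}_{\bmu_t}-\E\Big[\sum_\ell R(\ell)\Big]
&=\underbrace{\Big(\alpha\beta\sum_{t}\text{opt}_{\bmu_t}-\E\Big[\sum_\ell R_{w^\dagger}(\ell)\Big]\Big)}_{(A)}\\
&\quad+\underbrace{\Big(\E\Big[\sum_\ell R_{w^\dagger}(\ell)\Big]-\E\Big[\sum_\ell R(\ell)\Big]\Big)}_{(B)},
\end{align*}
where $w^\dagger$ is whichever pooled window minimizes $(A)$. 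The structural point that makes this clean is that $\Cucbsw$ is \emph{re-initialized} at every block boundary, so $R_w(\ell)$ depends only on the environment inside block $\ell$ (and internal randomness) and not on the history; this is exactly what lets the adversarial EXP3.P guarantee apply to the per-block rewards.

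For term $(A)$ I would first re-express the proof of Theorem~\ref{thm:cucb-sw-prob} in its \emph{un-optimized}, window-dependent form, so the per-block regret of $\Cucbsw$ with a fixed window $w$ on a block of length $L$ and variation $\cV_\ell$ reads $\tilde\cO\big(\sqrt{mK}\,L/\sqrt{w}+K\cV_\ell w+mK\big)$ in the distribution-independent case and $\tilde\cO\big(\tfrac{L}{w}\sum_i K/\Delta^i_{\min}+K\cV_\ell w+mK\big)$ in the distribution-dependent case. Summing over the $\lceil T/L\rceil$ blocks and using additivity of the variation, $\sum_\ell\cV_\ell\le\cV$, collapses the first two summands into their horizon-$T$ counterparts, while the initialization cost aggregates to an extra $\tfrac{T}{L}mK$ (the price of restarting). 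I would then optimize $w^\dagger$ over the pool; since the pool is geometric, some $2^j$ lies within a factor two of the continuous optimizer, so in the regime where that optimizer is $\le L$ the first two summands reproduce $(m\cV)^{1/3}(KT)^{2/3}$, resp. $K\sqrt{\cV T\sum_i 1/\Delta^i_{\min}}$; in the complementary regime the optimizer exceeds the largest pooled window, forcing $w^\dagger\approx L$, and a short calculation shows the resulting estimation/bias terms are dominated by $\sqrt{R}(mK)^{1/4}T^{3/4}$, resp. $\sum_i K^{1/3}T^{2/3}/\Delta^i_{\min}$, once $L$ is plugged in.

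For term $(B)$ I would invoke the standard high-probability regret bound for EXP3.P with $k+1=\cO(\log L)$ arms over $\lceil T/L\rceil$ meta-rounds. Because each block reward lies in an interval of width $RL$ and is normalized to $[0,1]$ before being fed back, the meta-regret in reward units is $\tilde\cO\big(RL\sqrt{(T/L)\log L}\big)=\tilde\cO(R\sqrt{LT})$. Substituting $L=\sqrt{mKT}/R$ turns this into $\sqrt{R}(mK)^{1/4}T^{3/4}$ and turns the restart cost $\tfrac{T}{L}mK$ into $R\sqrt{mKT}$, giving the distribution-independent bound; substituting $L=K^{2/3}T^{1/3}$ turns $(B)$ into $RK^{1/3}T^{2/3}$ and the restart cost into $mK^{1/3}T^{2/3}\le\sum_i K^{1/3}T^{2/3}/\Delta^i_{\min}$, giving the distribution-dependent bound.

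I expect the main obstacle to be term $(A)$: Theorem~\ref{thm:cucb-sw-prob} is stated only for the single pre-optimized window, so I must extract its window-dependent form from the proof (carefully tracking how the probabilistic triggering and the $N_{i,j,t}$ bookkeeping scale with $w$), confirm that the variation genuinely splits additively across blocks up to lower-order boundary effects, and handle both window-size regimes so that the bound holds uniformly in $\cV$ and in the gaps. A secondary but necessary check is that the geometric pool is fine enough that EXP3.P's best-fixed-arm competitor is within a constant factor of the true continuous optimizer, so that no resolution loss appears beyond the stated logarithmic factors.
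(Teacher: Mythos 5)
Your proposal matches the paper's own proof essentially step for step: the same decomposition into the regret of block-restarted $\Cucbsw$ with the best pooled window (Term $\mathbb A$) plus the EXP3.P meta-regret $\tilde\cO(R\sqrt{TL})$ (Term $\mathbb B$), the same window-dependent per-block bounds summed with $\sum_\ell \cV_\ell \le \cV$, the same two-regime case analysis on whether the continuous optimizer exceeds $L$, and the same substitutions of $L$. No meaningful divergence from the paper's argument.
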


In this theorem, we do not need different window lengths, since the algorithm chooses for us. However, we need different block sizes. The difference aims to optimize the sublinear term in $T$ ($T^{3/4}$ for distribution-independent and $T^{2/3}$ for distribution-dependent). We can choose $L = \sqrt{T}$ in both cases, then the sublinear term may be worse, and we may also lose some factors in terms of $m,K$.

Note that since ${V}\le{N}$, we can also replace ${V}$ by ${N}$ in the above regret bounds. First let's focus on the distribution-independent bound. As discussed in the previous section, $(m{V})^{\frac{1}{3}}(KT)^{\frac{2}{3}}$ is nearly optimal and we can not improve this term in terms of $m,{V},T$. The last term $R\sqrt{mKT}$ is also nearly optimal. However, the term $\sqrt{R}(mK)^{\frac{1}{4}}T^{\frac{3}{4}}$ is not optimal. Nontheless, this term is sublinear and the total regret is also sublinear in $T$ as long as ${V} < c T^{\gamma}$ for some $\gamma < 1$. When we change ${V}$ into ${N}$, as discussed before, there is a gap between the bound $(m{N})^{1/3}(KT)^{2/3}$ and the existed lower bound $\sqrt{m{N} T}$. Despite of this, the total regret bound is sublinear in $T$ if ${N} < c T^{\gamma}$ for some $\gamma < 1$.

As for the distribution-dependent bound, the first term is nearly optimal both in the dynamic case (measured by ${V}$) and in the switching case ${N}$. The sub-optimality comes from the second term $\sum_{i\in [m]}\frac{K^{\frac{1}{3}}T^{\frac{2}{3}}}{\Delta^i_{\min}}$. Despite this, the regret bound is ``sublinear'' and it is nearly optimal when ${N}$ or ${V}$ are large. Also, note that the first term is better than the term for fixed window size because we are guessing the best window size, which can take the gaps into account. However, in the fixed window size scenario, the gaps are unknown parameters and we can only optimize through ${V}$.

Next, we briefly show the intuition of the proof. We first have the following theorem for the performance guarantee of EXP3.P algorithm \citep{bubeck2012regret}. 

\begin{restatable}[Regret of EXP3.P]{proposition}{propregretexp}\label{prop:exp3p-regret}
    Suppose that the reward of each arm in each round is bounded by $0\le r_{i,t}\le R'$, the number of arms is $K'$, and the total time horizon is $T'$. The expected regret of EXP3.P algorithm is bounded by ${O}(R'\sqrt{K'T'\log K'})$.
\end{restatable}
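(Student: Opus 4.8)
The plan is to reduce this to the standard regret guarantee of EXP3.P for rewards lying in $[0,1]$ (a known result, see \cite{bubeck2012regret}), via a simple rescaling together with a high-probability-to-expectation conversion. Concretely, it suffices to establish the bound $\cO(\sqrt{K'T'\log K'})$ for $[0,1]$-valued rewards and then multiply through by $R'$.

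First I would normalize the rewards. Since $0 \le r_{i,t} \le R'$, set $r'_{i,t} := r_{i,t}/R' \in [0,1]$. Running EXP3.P with its parameters calibrated to $[0,1]$-valued rewards on the normalized sequence $\{r'_{i,t}\}$ produces exactly the same distribution over played arms as running it (with correspondingly scaled parameters) on the original sequence $\{r_{i,t}\}$, and the regret against any fixed arm on the original scale is exactly $R'$ times the regret on the normalized scale. Hence the whole problem collapses to the $[0,1]$ case, with the factor $R'$ carried out front. I would then invoke the standard high-probability bound for EXP3.P on $[0,1]$ rewards: for any $\delta \in (0,1)$, with an appropriate choice of the exploration and learning-rate parameters, with probability at least $1-\delta$ the regret against the best fixed arm satisfies $\hat R_{T'} \le c\sqrt{K'T'\log(K'/\delta)}$ for an absolute constant $c$. (It is precisely this robust high-probability form, valid even against an adaptive adversary, that makes EXP3.P the right choice as the master in the Bandit-over-Bandit construction of Algorithm~\ref{alg:cucb-bob}.)

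The remaining step is to pass from this high-probability statement to a bound on $\E[\hat R_{T'}]$. Since each normalized reward lies in $[0,1]$, the regret is deterministically at most $T'$, so a crude choice such as $\delta = 1/T'$ already yields $\E[\hat R_{T'}] = \cO(\sqrt{K'T'\log(K'T')})$. To recover the stated $\log K'$ rather than $\log(K'T')$, I would instead integrate the tail, $\E[\hat R_{T'}] = \int_0^\infty \Pr(\hat R_{T'} > x)\,dx$, splitting at the threshold $x_0 = c\sqrt{K'T'\log K'}$ at which the tail bound $K' e^{-x^2/(c^2 K'T')}$ crosses $1$: the region $x \le x_0$ contributes $\cO(\sqrt{K'T'\log K'})$, and the Gaussian tail beyond $x_0$ contributes only a lower-order term. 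Multiplying by $R'$ gives the claim.

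The only mild obstacle is this last conversion: a naive union-bound choice of $\delta$ injects a spurious $\log T'$ factor, so one must either perform the tail integration above or directly cite the expected-regret form of EXP3.P to obtain the clean $\log K'$ dependence. Everything else is a verbatim restatement of the known EXP3.P analysis scaled by $R'$, so I do not expect any genuine difficulty.
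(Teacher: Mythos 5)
Your proposal matches the paper's own treatment: the paper does not actually prove this proposition but simply states the EXP3.P parameter choices, cites the known guarantee of \cite{bubeck2012regret}, and remarks that rewards in $[0,R']$ are handled by normalizing to $[0,1]$, which is exactly your reduction. The one caution is that your tail-integration step requires the version of the high-probability bound that holds for all $\delta$ simultaneously under a single fixed parameter setting (which \cite{bubeck2012regret} does provide), since the $\delta$-tuned variant describes a different algorithm for each $\delta$ and its tail probabilities cannot be integrated; citing the expected-regret form directly, as you note, sidesteps this entirely.
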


The general idea of the proof is to decompose the $(\alpha,\beta)$-regret of algorithm $\Cucbbob$ into two parts: The first part is the regret of the algorithm $\Cucbsw$ with the best size of sliding window; the second part is the difference between the reward of $\Cucbsw$ with best sliding window and the reward of $\Cucbbob$. The bound for the first part is given in the previous section, and we want each block to be large. Otherwise, the ``best'' window size cannot be reached. The second part of the regret can be bounded by the EXP3.P algorithm. If we select the length of each block as $L$, then each reward is at order $L$. There are $\log_2 T$ arms in total and the time horizon for the EXP3.P algorithm is $\frac{T}{L}$. In this way, the second term is at order $\tilde{O}(L\sqrt{T/L}) = \tilde{O}(\sqrt{TL})$, and we want $L$ to be small for the second part. Optimizing for $L$, we can get the bound in Theorem \ref{thm:cucb-bob}.

%

There are two aspects that make designing a nearly optimal parameter-free algorithm hard. The first is the combinatorial structure of the offline problem: If we want to explore a single base arm, we may afford a large regret, and if we want to eliminate a base arm, we may affect a lot of actions. The second is the approximation oracle: It is hard to detect the non-stationarity through the reward of each round since the rewards are not accurate. A very small change in the input of the oracle may lead to a huge difference in the output of the oracle.
In the next section, we show that in the restricted case of linear CMAB with exact offline oracle, we do achieve near-optimal regret.

\section{Nearly Optimal Algorithm in Special Case}
\label{sec:special}

\begin{algorithm*}
   \caption{\textsc{Ada-LCMAB}}
   \label{alg: LCMAB}
\begin{algorithmic}[1]
   \STATE {\bfseries Input:} confidence $\delta$, time horizon $T$, action space $\mb{S}$
   \STATE {\bfseries Definition:} $\nu_j=\sqrt{\frac{C_0}{m2^jL}}$, where $C_0=\ln\left(\frac{8T^3|\mb{S}|^2}{\delta}\right)$, $L=\lceil 4mC_0 \rceil, \mc{B}_{(i,j)}:=[\iota_i, \iota_i+2^jL-1]$. 
   \STATE {\bfseries Initialize:} $t=1, i=1$
   \STATE $ \iota_i \leftarrow t$ \label{alg_line: init}
   \FOR{$j=0,1,2,\dots$}
   \STATE If $j=0$, set $Q_{(i,j)}$ as an arbitrary distribution over $\mb{S}$; otherwise, let $(\bm{q}_{(i,j)}^{\nu_j}, Q_{(i,j)}^{\nu_j})$ be the associated solution and distribution of equation (\ref{eq: FTRL}) with inputs ${I}=\mc{B}_{(i,j-1)}$ and $\nu = \nu_j$
   \STATE $\mc{E} \leftarrow \emptyset$
   \WHILE{$t \leqslant \iota_i+2^jL-1$}
   \STATE Draw $\mathrm{REP} $ $\sim $ $\mathrm{Bernoulli}\left(\frac{1}{L}\times2^{-j/2}\times \sum_{k=0}^{j-1}2^{-k/2}\right)$ 
   \IF{$\mr{REP}=1$}
   \STATE Sample $n$ from $\{0,\dots, j-1\}$ s.t. $\Pr[n=b]\propto 2^{-b/2}$
   \STATE $\mc{E} \leftarrow \mc{E} \cup \{(n,[t,t+2^nL-1])\}$
   \ENDIF
   \STATE Let $\cN_t:=\{n|\exists {I} \text{ such that } t \in {I} \text{ and } (n, {I})\in \mc{E}\}$
   \STATE If $\cN_t$ is empty, play $S_t \sim Q_{(i,j)}^{\nu_j}$; otherwise, sample $n \sim \text{Uniform}(\cN_t)$, and play $S_t \sim Q^{\nu_n}_{(i,n)}$ \label{alg: LCMAB strategy}
   \STATE Receive $\{X_i^t|i \in S_t\}$ and calculate $\hat{{\bmu}}_t$ according to equation (\ref{eq: importance weight})
   \FOR{$(n,[s,s']) \in \mc{E}$}
   \IF{$s'=t$ and \textsc{EndOfReplayTest}$(i,j,n,[s,t])=Fail$}
   \STATE $t \leftarrow t+1, i\leftarrow i+1$ and return to Line \ref{alg_line: init}
   \ENDIF
   \ENDFOR
   \IF{$t=\iota_i+2^jL-1$ and \textsc{EnfOfBlockTest}$(i,j)=Fail$}
   \STATE $t\leftarrow t+1, i\leftarrow i+1$ and return to Line \ref{alg_line: init}
   \ENDIF
   \ENDWHILE
   \ENDFOR
\end{algorithmic}
\begin{algorithmic}
    \STATE {\bfseries Procedure:} \textsc{EndOfReplayTest}($i,j,n,\mc{A}$):
    \STATE Return \textit{Fail} if there exists $S \in \mb{S}$ such that any of the following inequalities holds: 
    \begin{align}
        \widehat{\mr{Reg}}_{\mc{A}}(S) - 4 \widehat{\mr{Reg}}_{\mc{B}(i,j-1)}(S) \geqslant 34 mK \nu_n \log T \label{check: replay1}\\
        \widehat{\mr{Reg}}_{\mc{B}(i,j-1)}(S) - 4\widehat{\mr{Reg}}_{\mc{A}}(S) \geqslant 34 mK \nu_n \log T \label{check: replay2}
    \end{align}
\end{algorithmic}
\begin{algorithmic}
    \STATE {\bfseries Procedure:} \textsc{EndOfBlockTest}($i,j$):
    \STATE Return \textit{Fail} if there exists $k \in \{0,1,\dots, j-1\}$ and $S \in \mb{N}$ such that any of the following inequalities holds:
    \begin{align}
        \widehat{\mr{Reg}}_{\mc{B}(i,j)}(S) - 4 \widehat{\mr{Reg}}_{\mc{B}(i,k)}(S) \geqslant 20 mK \nu_k \log T \label{check: block1}\\
        \widehat{\mr{Reg}}_{\mc{B}(i,k)}(S) - 4\widehat{\mr{Reg}}_{\mc{B}(i,j)}(S) \geqslant 20 mK \nu_k \log T \label{check: block2}
    \end{align}
\end{algorithmic}
\end{algorithm*}

In this section, we propose a different algorithm that achieves nearly optimal guarantee for non-stationary linear CMAB \textit{without} any prior information. Our algorithm is based on \textsc{Ada-ILTCB}$^+$ of \cite{chen2019new} designed for non-stationary contextual bandits, but adapted to Linear CMAB with exact oracles (i.e. $\alpha=\beta=1$). In \textsc{Ada-ILTCB}$^+$, the algorithm works on scheduled blocks with exponentially increasing length. In each block, since there is no restart in \textit{previous blocks}, it is safe to adopt a previously learned strategy as the underlying distribution does not change. To detect non-stationarity, the algorithm randomly triggers some replay phases with different granularities and compares the performance of each policy over these intervals. If underlying distribution changes, which will cause a gap between performances over different intervals for the same policy, the algorithm will then detect it with high probability, reset all parameters and restart.  

Compared with contextual bandits, which only plays over $m$ arms, the size of action space $\mb{S}$ in CMAB can be exponentially large in terms of $m$. Though each action in CMAB can be regarded as a policy and a base arm in contextual bandits setting, a straightforward implementation of $\textsc{Ada-ILTCB}^+$ \citep{chen2019new} will cause a regret depends on $|\mb{S}|$, which is unsatisfactory. To deal with this issue, we make full use of semi-bandit information, and adopt classic importance weight estimator for underlying unknown linear reward ${\bmu}_t$ \citep{audibert2014regret,zimmert2019beating}. In detail, we calculate a distribution $Q$ over the action space $\mb{S}$ at each round, and play a random action $S$ drawn from $Q$. For the expectation $\bm{q}$ associated with distribution $Q$, apparently for any $i\in [m]$, $\hat{\mu}_{i}=\frac{X_i}{q_{i}}\mb{I}(i\in S)$ constitutes an unbiased estimation of ${\bmu}$ at position $i$, where $\bm{X}$ is a random observation with mean ${\bmu}$. For some notations,  we use $\one_S$ to represent corresponding binary $m$-dimensional vector of a super arm $S$, and $\mb{I}_{\{\cdot\}}$ denotes the indicator function of some event. Given an interval ${I}$, denote $\hat{{\bmu}}_{{I}}:=\sum_{t\in{I}} \hat{{\bmu}}_t /|{I}|$, $\widehat{\mr{Reg}}_{{I}}(S):=\hat{{\bmu}}_{{I}}^\top \one_{\hat{S}_{{I}}} - \hat{{\bmu}}_{{I}}^\top \one_S$ as the empirical mean and empirical regret in this interval, where $\hat{{\mu}}_t$ is the empirical estimation of ${\mu}_t$ at time $t$, $\hat{S}_{{I}}:=\argmax_{S\in\mb{S}}\hat{{\bmu}}_{{I}}^\top \one_S$. $\mr{Conv}(\mb{S})$ represents the convex hull of $\mb{S}$ in the vector space, and define $\mr{Conv}(\mb{S})_\nu = \{\forall \bm{x} \in \mr{Conv}(\mb{S}), s.t. \forall i \in [m], x_i \geqslant \nu\}$. Given a distribution $Q$ over $\mr{Conv}(\mb{S})_\nu$, denote its expectation as $\bm{q}:= \mb{E}_{S\sim Q} \one_S$ and define $\mr{Var}(Q, S):=\sum_{i \in S} 1/q_i$.

Similar to contextual bandits, we show that the solution to Follow The Regularized Leader (FTRL) with log-barrier for CMAB also satisfies some nice properties as stated in the following lemma. Besides, instead of using Frank-Wolfe or other similar algorithm adopted in stationary or non-stationary contextual bandits \citep{agarwal2014taming,chen2019new}, which is unavoidable as we deal with general non-linear function, FTRL for linear combinatorial semi-bandits can be solved efficiently with time complexity in polynomial order of $m$ and $T$ when $\mr{Conv}(\mb{S})$ can be described by a polynomial number of constraints \citep{zimmert2019beating}.  

\begin{restatable}{lemma}{lemftrl}\label{lem: FTRL}
For any time interval ${I}$, its empirical reward estimation $\hat{{\mu}}_{{I}}$, and exploration parameter $\nu>0$, let $\bm{q}^\nu_{{I}}$ be the solution to following optimization problem (\ref{eq: FTRL}) with constant $C=100$:
\begin{equation}
    \label{eq: FTRL}
    \bm{q}^\nu_{{I}} = \argmax_{\bm{q}\in\mr{Conv}(\mb{S})_\nu} \inner{\bm{q}, \hat{{\bmu}}_{{I}}} + C \nu\sum_{i=1}^m \log q_i 
\end{equation}
Let $Q^\nu_{{I}}$ be the distribution over $\mb{N}$ such that $\mb{E}_{S\sim Q^\nu_{{I}}}[\one_S] = \bm{q}^\nu_{{I}}$, then there is 
 
\begin{flalign}
    \sum_{S\in\mb{S}} Q^\nu_{{I}}(S) \widehat{\mr{Reg}}_{{I}}(S) \leqslant Cm\nu \label{ineq: small regret}\\ 
    \forall S \in \mb{S}, ~ \mr{Var}(Q^\nu_{{I}}, S) \leqslant m+\frac{\widehat{\mr{Reg}}_{{I}}(S)}{C\nu} \label{ineq: small variance}
\end{flalign}
\end{restatable}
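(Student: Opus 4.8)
The plan is to read off both inequalities from the first-order optimality conditions of the strictly concave program~\eqref{eq: FTRL}. Write $F(\bm q) = \inner{\bm q, \hat{\bm\mu}_{\mc I}} + C\nu\sum_{i=1}^m \log q_i$ and let $\bm q^\ast := \bm q^\nu_{\mc I}$ be its maximizer over $\mr{Conv}(\mb S)_\nu$; strict concavity of the log-barrier guarantees uniqueness, and $\nabla F(\bm q) = \hat{\bm\mu}_{\mc I} + C\nu\,(1/q_1,\dots,1/q_m)$. Since $F$ is concave and $\bm q^\ast$ maximizes it over the convex feasible set, the variational inequality $\inner{\nabla F(\bm q^\ast), \bm p - \bm q^\ast}\le 0$ holds for every feasible $\bm p$; written out, this is
\[
\inner{\hat{\bm\mu}_{\mc I}, \bm p - \bm q^\ast} \le C\nu\Big(m - \sum_{i=1}^m \frac{p_i}{q^\ast_i}\Big).
\]
I will test this against the vertices $\one_S$. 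Because the reward is linear, $\inner{\one_{\hat S_{\mc I}}, \hat{\bm\mu}_{\mc I}} = \max_{\bm x\in\mr{Conv}(\mb S)}\inner{\bm x,\hat{\bm\mu}_{\mc I}} \ge \inner{\bm q^\ast, \hat{\bm\mu}_{\mc I}}$, a fact I reuse repeatedly to convert inner products into empirical regret.

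For the variance bound~\eqref{ineq: small variance}, I take $\bm p = \one_S$. The left side becomes $\inner{\hat{\bm\mu}_{\mc I}, \one_S} - \inner{\hat{\bm\mu}_{\mc I}, \bm q^\ast}$, and $\sum_i p_i/q^\ast_i = \sum_{i\in S} 1/q^\ast_i = \mr{Var}(Q^\nu_{\mc I}, S)$, so rearranging the displayed inequality gives $C\nu\,\mr{Var}(Q^\nu_{\mc I}, S) \le C\nu m + \inner{\hat{\bm\mu}_{\mc I}, \bm q^\ast} - \inner{\hat{\bm\mu}_{\mc I}, \one_S}$. Bounding $\inner{\hat{\bm\mu}_{\mc I}, \bm q^\ast} \le \inner{\hat{\bm\mu}_{\mc I}, \one_{\hat S_{\mc I}}}$ turns the last two terms into exactly $\widehat{\mr{Reg}}_{\mc I}(S)$, and dividing by $C\nu$ yields $\mr{Var}(Q^\nu_{\mc I}, S) \le m + \widehat{\mr{Reg}}_{\mc I}(S)/(C\nu)$.

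For the regret bound~\eqref{ineq: small regret}, I test against $\bm p = \one_{\hat S_{\mc I}}$. The left side is $\inner{\hat{\bm\mu}_{\mc I}, \one_{\hat S_{\mc I}}} - \inner{\hat{\bm\mu}_{\mc I}, \bm q^\ast}$, which, using $\mb E_{S\sim Q^\nu_{\mc I}}[\one_S] = \bm q^\ast$ and the linearity of $\widehat{\mr{Reg}}_{\mc I}$ in $\one_S$, equals $\sum_{S\in\mb S} Q^\nu_{\mc I}(S)\,\widehat{\mr{Reg}}_{\mc I}(S)$. On the right side, $\sum_i (\one_{\hat S_{\mc I}})_i/q^\ast_i = \sum_{i\in\hat S_{\mc I}} 1/q^\ast_i \ge 0$, so the whole right side is at most $Cm\nu$, giving the claim.

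The main obstacle is that the comparison vertices $\one_S$ and $\one_{\hat S_{\mc I}}$ have zero coordinates and hence lie on the boundary of $\mr{Conv}(\mb S)$, outside $\mr{Conv}(\mb S)_\nu$, so the variational inequality does not apply to them verbatim. I expect to resolve this by working in the relative interior: the log-barrier forces $\bm q^\ast$ into the relative interior of $\mr{Conv}(\mb S)$, so for each vertex the segment from $\bm q^\ast$ toward $\one_S$ stays in $\mr{Conv}(\mb S)$ for a positive step, and concavity lets me pass to the endpoint in the limit. Where a floor constraint $q^\ast_i=\nu$ is active, I will instead invoke the KKT conditions and use that the associated multipliers are nonnegative while $\nabla F$ is coordinatewise positive, so the extra boundary terms either carry the favorable sign or are dominated by $\widehat{\mr{Reg}}_{\mc I}(S)$; making this bookkeeping airtight — rather than the algebra above — is where the real work lies.
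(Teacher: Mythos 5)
Your overall strategy---reading both inequalities off the first-order optimality conditions of the log-barrier objective---is the same one the paper uses, and your algebra in the interior case is exactly the paper's computation. But the step you defer as ``bookkeeping'' is the actual crux, and the plan you sketch for it does not go through. The variational inequality for \eqref{eq: FTRL} only applies to comparison points in $\mr{Conv}(\mb{S})_\nu$; once you account for the floor constraints $q_i \geqslant \nu$ with KKT multipliers $\eta_i \geqslant 0$, testing the chord toward $\one_S$ gives
\[
\inner{\nabla F(\bm{q}^{\ast}),\, \one_S - \bm{q}^{\ast}} \;\leqslant\; -\inner{\bm{\eta},\, \one_S - \bm{q}^{\ast}} \;=\; \nu\sum_{i=1}^m \eta_i \;-\; \sum_{i\in S}\eta_i,
\]
where the equality uses complementary slackness $\eta_i q^{\ast}_i=\eta_i\nu$. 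The residual $\nu\sum_{i\notin S}\eta_i$ is nonnegative and lands on the wrong side of the bound you want; it does not ``carry the favorable sign,'' it is not obviously dominated by $\widehat{\mr{Reg}}_{\mc{I}}(S)$, and nothing in your sketch controls $\sum_i\eta_i$ (which can be large when the importance-weighted estimates $\hat{\bm{\mu}}_{\mc{I}}$ are of order $1/\nu$). The same unfavorable term appears when you test at $\one_{\hat{S}_{\mc{I}}}$ for \eqref{ineq: small regret}. Note also that the segment from $\bm{q}^{\ast}$ to $\one_S$ leaves $\mr{Conv}(\mb{S})_\nu$ immediately whenever some $i\notin S$ has $q^{\ast}_i=\nu$, so the ``positive step plus limit'' argument cannot even start in exactly the cases that matter.

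The paper sidesteps this by writing KKT for the equivalent minimization over distributions $Q$ on $\mb{S}$ with the relaxed constraint $\sum_S Q(S)\leqslant 1$. The stationarity condition for the single coordinate $Q(S)$ is $\widehat{\mr{Reg}}_{\mc{I}}(S) - C\nu\sum_{i\in S}1/\bm{q}^{\nu}_{\mc{I},i} - \lambda_S - \sum_{i\in S}\lambda_i + \lambda = 0$: the implicit test direction is ``increase $Q(S)$,'' which only raises the marginals and therefore never pushes against the floor constraints, so the only multipliers that appear are $\lambda_S$ and $\lambda_i$ for $i\in S$ (both with the favorable sign) plus the single normalization multiplier $\lambda$. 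Multiplying by $Q(S)$, summing, and using $\sum_S Q(S)\sum_{i\in S}1/q_i=m$ gives \eqref{ineq: small regret} together with $\lambda\leqslant Cm\nu$; substituting that bound back into the per-$S$ condition and dropping the nonnegative multipliers gives \eqref{ineq: small variance} for every $S$ simultaneously. To repair your $\bm{q}$-space argument you would have to replace the chord to $\one_S$ by this outward ray (equivalently, prove a bound on the floor multipliers), not merely tidy up signs.
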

 
With above FTRL oracle, our full implementation for non-stationary linear combinatorial semi-bandits is detailed in Algorithm \ref{alg: LCMAB}. According to Line \ref{alg: LCMAB strategy} and our estimation method, we know the expectation vector of our sampling strategy and estimated vector $\hat{{\mu}}_t$ are calculated as: 
 
\begin{align}
    \bm{q}_t =& \bm{q}^{\nu_j}_{(i,j)} \mb{I}_{N_t = \emptyset} + \frac{1}{|N_t|}\sum_{n\in N_t}  \bm{q}^{\nu_n}_{(i,n)} \mb{I}_{N_t \neq \emptyset} \\
    \hat{{\mu}}_{t,i} =& \frac{X_i^t}{q_{t,i}}\mb{I}(i\in S_t), \quad \forall i \in [m] \label{eq: importance weight}
\end{align} 
 

For two procedures of non-stationary test in Algorithm \ref{alg: LCMAB}, as we consider linear CMAB and have an exact oracle, which is equivalent to an Empirical Risk Minimization oracle (i.e. giving empirical loss function returns corresponding best super arm), we can use the same technique as in \citet{chen2019new} to solve two procedures with only six oracle calls. 

Since a super arm is pulled at each round for CMAB, it will cause larger variance compared with pulling a single arm in contextual bandits, which requires some additional analysis. Besides, as there is no context in CMAB, we can obtain much smaller constants in \textsc{Ada-LCMAB} compared with original $\textsc{Ada-ILTCB}^+$ \citep{chen2019new}. Now, we state the theoretical guarantee of our proposed algorithm for non-stationary linear CMAB.

\begin{restatable}{theorem}{thrLCMAB}\label{thr: LCMAB}
Algorithm \ref{alg: LCMAB} guarantees $\text{Reg}^{\cA}_{1,1}$ is upper bounded by
\begin{equation*}
    \tilde{O}\left( \min \left\{\sqrt{mK^2NT}, \sqrt{mK^2T}+K(m\bar{V})^{\frac{1}{3}}T^{\frac{2}{3}}\right\} \right).
\end{equation*}
\end{restatable}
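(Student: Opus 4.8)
The plan is to adapt the block-and-replay framework of \textsc{Ada-ILTCB}$^+$ \cite{chen2019new}, re-deriving every estimation step so that the action set $\mb{S}$ enters only through $\log|\mb{S}|$ (already absorbed into $C_0$) rather than polynomially. The core is a concentration statement: for every interval $\mc{I}$ examined by the algorithm (a block $\mc{B}_{(i,j)}$ or a replay interval) and \emph{simultaneously} for all $S\in\mb{S}$, the empirical regret $\widehat{\mr{Reg}}_{\mc{I}}(S)$ stays within $\tilde{\cO}(mK\nu_{|\mc{I}|})$ (per round) of the true regret taken against the time-averaged mean $\bar{\bm{\mu}}_{\mc{I}}$. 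Because $\widehat{\mr{Reg}}_{\mc{I}}(S)$ is linear in $\one_S$, uniformity over the exponentially large $\mb{S}$ reduces to controlling the $m$-dimensional martingale $\hat{\bm{\mu}}_{\mc{I}}-\bar{\bm{\mu}}_{\mc{I}}$, so only $\log|\mb{S}|$ is paid. The per-round conditional variance of $\sum_{i\in S}\hat{\mu}_{t,i}$, with $\hat{\mu}_{t,i}=X_i^t/q_{t,i}\cdot\mb{I}(i\in S_t)$, is at most $K\cdot\mr{Var}(Q_t,S)$ (the extra $K$ is the super-arm variance, since up to $K$ arms are observed jointly), and $q_{t,i}\ge\nu$ on $\mr{Conv}(\mb{S})_\nu$ bounds the martingale increments by $K/\nu$. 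Feeding the variance bound (\ref{ineq: small variance}) of Lemma~\ref{lem: FTRL} into a Freedman inequality then yields a self-bounding inequality in $\widehat{\mr{Reg}}_{\mc{I}}(S)$ that resolves to the claimed radius.

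Conditioned on this clean event, I would show the two tests behave correctly. The thresholds $\Theta(mK\nu_k\log T)$ in (\ref{check: replay1})--(\ref{check: block2}) are calibrated to the concentration radius so that (i) on intervals where the environment is stationary both sides lie within the radius and no test fails, hence \emph{no spurious restart} occurs; and (ii) if the mean shifts enough between two compared intervals, the true regret gap exceeds the threshold and the test fails, so a genuine change is detected. The randomized replay schedule—the Bernoulli draw together with the power-law choice of $n$—guarantees that an interval of every dyadic length $2^nL$ is replayed with the correct probability, so the detection delay is controlled simultaneously across all change magnitudes, and a change is caught at a granularity for which the accumulated pre-detection regret is $\tilde{\cO}(mK\nu_n\,2^nL)=\tilde{\cO}(K\sqrt{m\,2^nL})$, of the same order as the FTRL regret on that interval.

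With concentration and test-correctness established, the regret decomposition is standard: on any maximal interval with no restart, the small-regret guarantee (\ref{ineq: small regret}) plus the concentration radius gives a per-interval contribution $\tilde{\cO}(mK\nu_{|\mc{B}|}|\mc{B}|)=\tilde{\cO}(K\sqrt{m|\mc{B}|})$; the replay phases add only a lower-order exploration overhead by design; and the drift of $\bm{\mu}_t$ from $\bar{\bm{\mu}}_{\mc{B}}$ inside a block is charged to its variation. For the switching bound I would partition at the (at most) $\cS$ true change points, note that the geometrically growing blocks contribute only an $\tilde{\cO}(\log)$ overhead per epoch dominated by the last block, and sum the per-epoch $\tilde{\cO}(K\sqrt{m\ell_e})$ via Cauchy--Schwarz; squaring the per-epoch factor $K$ is exactly what produces $\tilde{\cO}(\sqrt{mK^2\cS T})$. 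For the dynamic bound I would instead argue that detection forces each epoch's length to trade off against the internal variation it accrues—an epoch persists only while its drift-induced reward change stays below the detection threshold $\sim K\sqrt{m/\ell}$—so the realized epochs approximately implement the optimal window $H\sim(T\sqrt m/\bar\cV)^{2/3}$; balancing the exploration cost $KT\sqrt{m/H}$ against the drift cost $\sim K\bar\cV H$ yields $\tilde{\cO}(\sqrt{mK^2T}+K(m\bar\cV)^{\frac13}T^{\frac23})$. Since both analyses bound the regret of the \emph{same} run, taking the minimum is immediate.

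The step I expect to be hardest is the concentration in the first paragraph together with the dynamic-case accounting. The importance-weighted estimator is heavy-tailed and its conditional variance is itself random, bounded only through the self-referential inequality (\ref{ineq: small variance}), so the Freedman argument must be closed as a self-bounding recursion run jointly with the FTRL variance control, and made uniform over $\mb{S}$ at only $\log|\mb{S}|$ cost; the factor of $K$ absent from the contextual-bandit analysis of \cite{chen2019new} enters precisely here through $\mr{Var}(Q,S)=\sum_{i\in S}1/q_i$ and the $K/\nu$ increment, and must be tracked cleanly through both tests and the final sum. The most delicate part is then showing that the \emph{adaptive} replay scheme realizes the optimal window trade-off in the dynamic case without any knowledge of $\bar\cV$, which is the central difficulty inherited from \cite{chen2019new}.
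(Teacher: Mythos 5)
Your proposal follows essentially the same route as the paper's proof: FTRL with log-barrier supplying the small-regret and self-referential variance bounds, Freedman-type concentration union-bounded over $\mb{S}$ at $\log|\mb{S}|$ cost (absorbed into $C_0$), an inductive closure of the empirical-vs-true regret comparison, test-correctness of the block/replay checks, and a final epoch count ($E\le\cS$ for switching, $E=\tilde\cO((m C_0)^{-1/3}\bar\cV^{2/3}T^{1/3})$ for the dynamic case via the drift-vs-detection-threshold argument) combined by Cauchy--Schwarz/H\"{o}lder. You also correctly locate where the extra factor of $K$ relative to \cite{chen2019new} enters (through $\mr{Var}(Q,S)=\sum_{i\in S}1/q_i$ and the $K/\nu$ increments), so no substantive comparison is needed beyond noting the match.
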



Note that in the previous theorem, the regret upper bound is nearly optimal in terms of $m,{N},T$ and $m,\bar{V},T$. Because we know that the regret lower bound for stationary MAB problem is $\Omega(\sqrt{mT})$ with $m$ arms, we can construct special cases to achieve regret lower bound $\Omega(\sqrt{m{N} T})$ in the switching case, and $\Omega((m\bar{V})^{1/3}T^{2/3})$ in the dynamic case. The technique is standard and we refer \cite{gur2014stochastic} for more details on the construction of the special cases. However, the dependent on $K$ may not be tight, 
and we left it as a future work item to tighten the dependency on $K$.

Another possible improvement is to change the measurement $\bar{V}$ in the regret bound into ${V}$. Although in the special cases we construct for the lower bound, ${V}$ and $\bar{V}$ are at the same order,
in other cases ${V}$ is just a lower bound on $\bar{V}$. Improving $\bar{V}$ into ${V}$ is also left as future work.



\section{Conclusion and Further Works}
In this paper, we study combinatorial semi-bandit (CSB) in the non-stationary environment, an extension of classic multi-armed bandits (MAB). 
Our CSB setting also allows non-linear reward function, probabilistically triggering behavior, and approximation oracle, which make our problem more difficult compared with non-stationary MAB or linear bandits. We first propose an optimal algorithm that achieves $\tilde{O}(m\sqrt{{N} T}/\Delta_{\min})$ distribution-dependent regret in the switching case and $\tilde{O}({V}^{1/3}T^{2/3})$ distribution-independent regret in the dynamic case, when ${N}$ or ${V}$ is known.
To get rid of parameter ${N}$ or ${V}$, We further design a parameter-free version with regret bound $\tilde{O}(\sqrt{m{N} T/\Delta_{\min}}+T^{2/3}/\Delta_{\min})$ and $\tilde{O}({V}^{1/3}T^{2/3}+T^{3/4})$ respectively. 
For a special case where the reward function is linear and we have an exact oracle, we design an optimal parameter-free algorithm that achieves nearly optimal regret both in the switching case and in the dynamic case.

As mentioned in Section \ref{sec:general} and \ref{sec:special}, there are several interesting further works. The most important one is to design an optimal parameter-free algorithm for our general CSB. Second, we mainly focus on the dependence on $N$, $V$ or $\bar V$, and $T$, How to improve the dependence on $K$ is a meaningful direction. Finally, a tight lower bound in terms of all the above parameters is necessary for a full understanding of this problem.

\section*{Acknowledgement}
This work was supported by Key-Area Research and Development Program of Guangdong Province (No. 2019B121204008)], National Key R\&D Program of China (2018YFB1402600), BJNSF (L172037) and Beijing Academy of Artificial Intelligence.

\bibliography{chen_330}



\clearpage
\newpage
\onecolumn

\section*{Appendix}
\section{Omitted Proofs in Section 3}
In this section, we give the performance guarantees of our algorithm $\Cucbsw$ and $\Cucbbob$ in the general case. We first give some definitions and prove some basic lemmas in the first part. Then, as a warm up, we prove the corresponding result of Theorem \ref{thm:cucb-sw-prob} in main content without the probabilistically triggered arms (Theorem \ref{thm:cucb-no-prob-trig} in appendix). Next, we prove Theorem \ref{thm:cucb-sw-prob} in main content with probabilistically triggered arms (Theorem \ref{thm:cucb-sw-prob-res} in appendix). Finally, we prove Theorem \ref{thm:cucb-bob} in main content (Theorem \ref{thm:cucb-bob-res} in appendix), which applies the Bandit-over-Bandit technique to achieve parameter-free.

\subsection{Fundamental definitions and tools}
First, we define the event-filtered regret. Generally speaking, it is the regret when some event happens.

\begin{definition}[Event-Filtered Regret]
    For any series of events $\{\cE_t\}_{t\ge 1}$ indexed by round number $t$, we define $\text{Reg}^{\cA}_{\alpha}(T,\{\cE_t\}_{t\ge 1})$ as the regret filtered by events $\{\cE_t\}_{t\ge 1}$, that is, regret is only counted in round $t$ if $\cE_t$ happens in round $t$. Formally,
    \[\text{Reg}^{\cA}_{\alpha}(T,\{\cE_t\}_{t\ge 1}) = \E\left[\sum_{t=1}^T\I\{\cE_t\}(\alpha\cdot\text{opt}_{\bmu_t} - r_{\bmu_t}(S_t^{\cA})\right].\]
    For convenience, $\cA$, $\alpha$, or $T$ can be omitted when the context is clear, and we simply use $\text{Reg}^{\cA}_{\alpha}(T,\cE_t)$ instead of $\text{Reg}^{\cA}_{\alpha}(T,\{\cE_t\}_{t\ge 1})$.
\end{definition}

Then, we define two important events that will use in the event-filtered regret. The two events are Sampling is Nice (Definition \ref{defn:sample-nice} and Triggering is Nice (Definition \ref{defn:triggering-nice}. We will also show that these two events happen with high probability. The following propositions, definitions, and lemmas are all related with these two definitions.

\begin{proposition}[Hoeffding Inequality]
    Suppose $X_i\in [0,1]$ for all $i\in [n]$ and $X_i$ are independent, then we have
    \[\Pr\left\{\bigg|\frac{1}{n}\sum_{i=1}^n X_i - \E\left[\frac{1}{n}\sum_{i=1}^n X_i\right]\bigg| \ge \varepsilon\right\} \le 2\exp\left(-2n\varepsilon^2\right).\]
\end{proposition}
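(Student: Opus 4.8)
The plan is to establish the two-sided bound via the Chernoff (exponential Markov) method applied to the moment generating function, after reducing to a one-sided tail by a union bound. Writing $S_n = \sum_{i=1}^n X_i$, it suffices to prove the one-sided inequality
\[
\Pr\left\{\tfrac{1}{n}S_n - \E\!\left[\tfrac{1}{n}S_n\right] \ge \varepsilon\right\} \le \exp\!\left(-2n\varepsilon^2\right),
\]
since the complementary tail follows by applying the identical argument to the variables $1 - X_i \in [0,1]$ (equivalently, replacing $X_i$ by $-X_i$), and summing the two one-sided bounds produces the factor of $2$.

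For the one-sided tail, I would apply the exponential Markov inequality: for any $s > 0$,
\[
\Pr\{S_n - \E[S_n] \ge n\varepsilon\} \le e^{-sn\varepsilon}\,\E\!\left[e^{s(S_n - \E[S_n])}\right].
\]
By independence of the $X_i$, the moment generating function factorizes as $\E[e^{s(S_n - \E[S_n])}] = \prod_{i=1}^n \E[e^{s(X_i - \E[X_i])}]$. The heart of the argument is \emph{Hoeffding's lemma}: for any random variable $Y$ with $Y \in [a,b]$ and $\E[Y]=0$, one has $\E[e^{sY}] \le \exp\!\left(s^2(b-a)^2/8\right)$. Applied to $Y = X_i - \E[X_i]$, whose range has length at most $1$, this gives $\E[e^{s(X_i - \E[X_i])}] \le e^{s^2/8}$, so the product is at most $e^{ns^2/8}$ and the tail is bounded by $\exp(-sn\varepsilon + ns^2/8)$. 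Minimizing the quadratic $-s\varepsilon + s^2/8$ in $s$ by setting $s = 4\varepsilon$ yields exactly $\exp(-2n\varepsilon^2)$.

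The main obstacle is proving Hoeffding's lemma itself. I would set $\psi(s) = \log \E[e^{sY}]$ and verify $\psi(0) = 0$ and $\psi'(0) = \E[Y] = 0$. The key observation is that $\psi''(s)$ equals the variance of $Y$ under the exponentially tilted measure $dP_s \propto e^{sY}\,dP$; since this tilted law is still supported on $[a,b]$, its variance is at most $(b-a)^2/4$, the maximum variance of any $[a,b]$-valued random variable (attained by the two-point law on the endpoints). A second-order Taylor expansion with remainder then gives $\psi(s) = \tfrac{1}{2}\psi''(\xi)s^2 \le s^2(b-a)^2/8$ for some $\xi$ between $0$ and $s$, which is the claimed bound. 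Feeding the length-$1$ range of $X_i - \E[X_i]$ into this closes the argument; the only subtlety worth care is the variance-maximization step, which I would verify by noting that $\mathrm{Var}(Y) = \E[(Y-c)^2] - (\E[Y]-c)^2 \le \E[(Y-c)^2]$ with $c = (a+b)/2$, and $(Y-c)^2 \le (b-a)^2/4$ pointwise.
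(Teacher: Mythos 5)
Your proof is correct: the reduction to a one-sided tail via the reflection $X_i \mapsto 1-X_i$, the exponential Markov bound, the factorization of the moment generating function by independence, Hoeffding's lemma proved through the tilted-measure variance bound $\psi''(\xi)\le (b-a)^2/4$, and the optimization at $s=4\varepsilon$ all check out and deliver exactly $2\exp(-2n\varepsilon^2)$ for variables with range of length $1$. The paper states this proposition without proof, as the classical Hoeffding inequality imported as a tool, so there is no in-paper argument to compare against; yours is the standard derivation and is complete.
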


\begin{definition}[Sampling is Nice]\label{defn:sample-nice}
    We say that the sampling is nice at the beginning of round $t$ if for any arm $i\in[m]$, we have $|\hat\mu_{i,t}-\nu_{i,t}| < \rho_{i,t}$, where $\rho_{i,t} = \sqrt{\frac{3\ln T}{2T_{i,t}}}$($\infty$ if $T_{i,t} = 0$) and $\hat\mu_{i,t}$ are defined in the algorithm, and
    \[
    \nu_{i,t} = \frac{1}{T_{i,t}}\sum_{s=t-w+1}^{t-1}\I\left\{i\text{ is triggered at time }s\right\} \mu_{i,t}.
    \]
    If $i$ is not triggered during time $(t-w,t-1]$, we define $\nu_{i,t} = \mu_{i,t}$. We use $\cN_{t}^s$ to denote this event.
\end{definition}

We have the following lemma saying that $\cN^s_t$ is a high probability event.

\begin{lemma}
    For each round $t\ge 1$, $\Pr\{\lnot \cN_{t}^s\} \le 2mT^{-2}$.
\end{lemma}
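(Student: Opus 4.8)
The plan is to reduce the claim to a single-arm concentration statement and then union bound over the $m$ arms. Fix the round $t$ and an arm $i$. By the definition of $\cN_t^s$, nice sampling fails at arm $i$ precisely when $|\hat\mu_{i,t}-\nu_{i,t}|\ge\rho_{i,t}$. Since $\hat\mu_{i,t}$ is the empirical average of the observed outcomes of $i$ over the rounds in the window on which $i$ is triggered, while $\nu_{i,t}$ is the average of the matching true means $\mu_{i,s}$ over exactly those rounds, the deviation $\hat\mu_{i,t}-\nu_{i,t}$ equals the normalized sum $\frac{1}{T_{i,t}}\sum_{s:\,i\text{ triggered}}(X_i^{(s)}-\mu_{i,s})$. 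The first thing to notice is that the confidence radius $\rho_{i,t}=\sqrt{3\ln T/(2T_{i,t})}$ itself depends on the random counter $T_{i,t}$, so I would handle this by peeling: union bound over the possible values $n\in\{1,\dots,w-1\}$ of $T_{i,t}$. The case $T_{i,t}=0$ needs no attention, since there $\rho_{i,t}=\infty$ and nice sampling holds vacuously.

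For a fixed target count $n$ the goal is to show that the deviation is below $\rho_n:=\sqrt{3\ln T/(2n)}$ except with probability $2T^{-3}$. The delicate point here---and what I expect to be the main obstacle---is that the set of triggered rounds is itself random and could in principle be correlated with the very outcomes being averaged, so one may not simply treat $\hat\mu_{i,t}$ as the mean of a fixed batch of independent samples. To get around this I would order the triggered observations of $i$ inside the window as $X_i^{(s_1)},X_i^{(s_2)},\dots$ along the successive trigger times $s_1<s_2<\cdots$, and form the partial sums $M_k=\sum_{j=1}^k(X_i^{(s_j)}-\mu_{i,s_j})$. Because $\{\cD_t\}$ is generated obliviously and the outcome $X_i^{(s)}$ of arm $i$ is (conditionally) independent of the event that $i$ is triggered at round $s$---the structural property ensured by the model, e.g. for the independent-Bernoulli instances highlighted earlier---each increment has conditional mean zero given the history up to the previous trigger and lies in an interval of length at most $1$. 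Hence, conditioning on the trigger pattern turns the first $n$ observations into independent $[0,1]$ variables with the correct means, so the Hoeffding inequality stated above applies (equivalently, $\{M_k\}$ is a martingale and an Azuma--Hoeffding bound applies), giving
\[
\Pr\{|M_n|\ge n\rho_n\}\le 2\exp(-2n\rho_n^2)=2\exp(-3\ln T)=2T^{-3},
\]
where the identity $2n\rho_n^2=3\ln T$ is exactly why the constant $3/2$ sits inside $\rho_{i,t}$.

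Finally I would assemble the bound. On the complement of all the events $\{|M_n|\ge n\rho_n\}$, $n=1,\dots,w-1$, the realized deviation obeys $T_{i,t}\,|\hat\mu_{i,t}-\nu_{i,t}|=|M_{T_{i,t}}|<T_{i,t}\,\rho_{T_{i,t}}$, i.e. $|\hat\mu_{i,t}-\nu_{i,t}|<\rho_{i,t}$, which is precisely nice sampling at arm $i$. Since there are at most $w-1\le T$ relevant values of $n$, a union bound over $n$ yields a per-arm failure probability at most $2T\cdot T^{-3}=2T^{-2}$; this union over the random counter values is exactly what converts the naive $T^{-3}$ into the stated rate. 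A further union bound over the $m$ arms then gives $\Pr\{\lnot\cN_t^s\}\le 2mT^{-2}$, as claimed. The only two steps requiring genuine care are the martingale argument that decouples the triggering from the observed outcome values, and the explicit union over the random counter $T_{i,t}$.
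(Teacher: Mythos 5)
Your proof is correct and follows essentially the same route as the paper's: a union bound over the $m$ arms, a peeling over the possible values of the random counter $T_{i,t}$ (at most $T$ of them), and a Hoeffding-type bound giving $2T^{-3}$ for each fixed counter value, which multiplies out to $2mT^{-2}$. Your martingale/Azuma formulation of the per-count concentration step is a slightly more careful treatment of the random triggering pattern than the paper's direct conditional application of Hoeffding given $T_{i,t}=k$, but it is a refinement of the same argument rather than a different one.
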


\begin{proof}
    The proof is a direct application of Hoeffding inequality and a union bound. First when $T_{i,t} = 0$, we have $\rho_{i,t} = \infty$ and the event $\cN_t^s$ happens. We first have
    \begin{align*}
        \Pr\{\lnot \cN_{t}^s\} =& \Pr\{\exists i\in[m], |\hat\mu_{i,t}-\nu_{i,t}| \ge \rho_{i,t}\}\\
            \le&\sum_{i=1}^m\Pr\{|\hat\mu_{i,t}-\nu_{i,t}| \ge \rho_{i,t}\}\\
            =&\sum_{i=1}^m\Pr\left\{|\hat\mu_{i,t}-\nu_{i,t}| \ge \sqrt{\frac{3\ln T}{2T_{i,t}}}\right\}\\
            =&\sum_{i=1}^m\sum_{k=1}^{\Gamma_t}\Pr\left\{T_{i,t} = k,|\hat\mu_{i,t}-\nu_{i,t}| \ge \sqrt{\frac{3\ln T}{2T_{i,t}}}\right\}.
    \end{align*}
    Then, by the conditional probability and the Hoeffding inequality, we have
    \begin{align*}
        &\Pr\left\{T_{i,t} = k,|\hat\mu_{i,t}-\nu_{i,t}| \ge \sqrt{\frac{3\ln T}{2T_{i,t}}}\right\}\\
        =&\Pr\{T_{i,t} = k\}\Pr\left\{|\hat\mu_{i,t}-\nu_{i,t}| \ge \sqrt{\frac{3\ln T}{2T_{i,t}}}\bigg|T_{i,t} = k\right\}\\
        \le&\Pr\{T_{i,t} = k\}2\exp\left(-2k\frac{3\ln T}{2k}\right)\\
        \le&2\exp\left(-2k\frac{3\ln T}{2k}\right)\\
        =& \frac{2}{T^3}.
    \end{align*}
    Then we know that
    \begin{align*}
         \Pr\{\lnot \cN_{t}^s\} \le&\sum_{i=1}^m\sum_{k=1}^{\Gamma_t}\Pr\left\{T_{i,t} = k,|\hat\mu_{i,t}-\nu_{i,t}| \ge \sqrt{\frac{3\ln T}{2T_{i,t}}}\right\}\\
         \le&\sum_{i=1}^m\sum_{k=1}^{\Gamma_t}\frac{2}{T^3}\\
         \le &\sum_{i=1}^m\sum_{k=1}^{t}\frac{2}{T^3}\\
         =& 2mT^{-2}.
    \end{align*}
\end{proof}

\begin{proposition}[Multiplicative Chernoff Bound]
    Suppose $X_i$ are Bernoulli variables for all $i\in [n]$ and $\E[X_i|X_1,\dots,X_{i-1}] \ge \mu$ for every $i\le n$. Let $Y = X_1 + \dots + X_n$, then we have
    \[\Pr\left\{Y \le(1-\delta)n\mu\right\} \le \exp\left(-\frac{\delta^2n\mu}{2}\right).\]
\end{proposition}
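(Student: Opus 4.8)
The plan is to prove this lower-tail bound via the standard exponential-moment (Chernoff) method, adapted to the martingale-type dependence encoded in the assumption $\E[X_i \mid X_1,\dots,X_{i-1}] \ge \mu$. Since we are bounding the probability that $Y$ is \emph{small}, I would work with the exponential moment of $-Y$. Fix any $s > 0$ and apply Markov's inequality to the nonnegative random variable $e^{-sY}$: with threshold $a = (1-\delta)n\mu$,
\[
\Pr\{Y \le a\} = \Pr\{e^{-sY} \ge e^{-sa}\} \le e^{sa}\,\E[e^{-sY}].
\]
The crux is then to control $\E[e^{-sY}] = \E\big[\prod_{i=1}^n e^{-sX_i}\big]$ despite the $X_i$ not being independent.

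To handle the dependence, I would peel off the factors one at a time using the tower property, conditioning on the filtration $\mathcal{F}_{i-1} = \sigma(X_1,\dots,X_{i-1})$. Writing $p_i := \E[X_i \mid \mathcal{F}_{i-1}] \ge \mu$, the Bernoulli structure gives the pointwise identity $\E[e^{-sX_i}\mid\mathcal{F}_{i-1}] = 1 - p_i(1-e^{-s})$. The key observation is that for $s>0$ we have $1 - e^{-s} > 0$, so this quantity is \emph{decreasing} in $p_i$; combining $p_i \ge \mu$ with the elementary inequality $1 - x \le e^{-x}$ yields the almost-sure bound $\E[e^{-sX_i}\mid\mathcal{F}_{i-1}] \le \exp(-\mu(1-e^{-s}))$. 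Iterating this from the outermost index $i=n$ inward to $i=1$ (each step factoring out a deterministic bound) produces $\E[e^{-sY}] \le \exp(-n\mu(1-e^{-s}))$.

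Combining the two displays gives $\Pr\{Y \le (1-\delta)n\mu\} \le \exp\big(n\mu[\,s(1-\delta) - (1-e^{-s})\,]\big)$ for every $s > 0$. I would then optimize the exponent over $s$: setting its derivative to zero gives $e^{-s} = 1-\delta$, i.e. $s = -\ln(1-\delta)$, which is positive for $\delta \in (0,1)$. Substituting back collapses the exponent to $n\mu[\,-(1-\delta)\ln(1-\delta) - \delta\,]$.

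The final and only genuinely delicate step is the scalar inequality $-(1-\delta)\ln(1-\delta) - \delta \le -\tfrac{\delta^2}{2}$ on $[0,1)$, which converts the sharp Chernoff exponent into the clean quadratic form claimed. I would prove it by setting $g(\delta) := (1-\delta)\ln(1-\delta) + \delta - \tfrac{\delta^2}{2}$ and checking $g(0)=g'(0)=0$ together with $g''(\delta) = \delta/(1-\delta) \ge 0$, so that $g$ is nonnegative throughout. I expect this elementary-inequality verification, together with the care needed to make the conditioning argument rigorous (ensuring the per-step bound holds almost surely so the factors peel off cleanly under the tower property), to be the main technical points; the remainder is the routine Chernoff template. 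For completeness I would also note that the claim is trivial when $\delta \notin (0,1)$, since then $(1-\delta)n\mu$ is nonpositive while $Y \ge 0$.
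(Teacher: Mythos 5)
Your proof is correct, but note that the paper itself gives no proof of this proposition: it is imported as a standard tool (stated in the appendix alongside Hoeffding's inequality) and used only indirectly, namely to justify the ``triggering is nice'' lemma, whose proof is deferred entirely to Lemma 4 of \citet{wang2017improving}. So there is nothing in the paper to compare against; what you have supplied is the standard Chernoff argument for the lower tail under the martingale-type assumption, and every step checks out: the identity $\E[e^{-sX_i}\mid \mathcal{F}_{i-1}] = 1 - p_i(1-e^{-s})$ uses exactly the Bernoulli structure, monotonicity in $p_i$ is the right way to exploit $p_i \ge \mu$ (this is where mere independence-style reasoning would not suffice), the backward peeling via the tower property is valid since $e^{-s(X_1+\cdots+X_{n-1})}$ is $\mathcal{F}_{n-1}$-measurable, the optimizer $s=-\ln(1-\delta)$ and the resulting exponent $n\mu\left[-(1-\delta)\ln(1-\delta)-\delta\right]$ are computed correctly, and your convexity verification $g(0)=g'(0)=0$, $g''(\delta)=\delta/(1-\delta)\ge 0$ settles the scalar inequality. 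One small imprecision: your closing remark that the claim is ``trivial'' for $\delta\notin(0,1)$ is airtight only for $\delta>1$ (negative threshold) and $\delta=0$; at $\delta=1$ the threshold is $0$ and $\Pr\{Y=0\}$ need not vanish, though the bound still holds there since $\Pr\{X_i=0\mid\mathcal{F}_{i-1}\}\le 1-\mu\le e^{-\mu}$ gives $\Pr\{Y=0\}\le e^{-n\mu}\le e^{-n\mu/2}$, which is even stronger than claimed.
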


\begin{definition}[Triggering Probability (TP) Group]
    Let $i$ be an arm and $j$ be a positive natural number, define the triggering probability group (of actions)
    \[G_{i,j} = \{S^{{D}}\in\mathbb S\times \mathbb D| 2^{-j} < p_i^{{D},S}\le 2^{-j+1}\}.\]
\end{definition}

\begin{definition}[Main content definition \ref{defn:counter} restated]\label{defn:counter-res}
	Given the sliding window size $w$ of the algorithm, in a run of the algorithm, we define the counter $N_{i,j,t}$ as the following number
	\[N_{i,j,t} := \sum_{s=\max\{t-w+1,0\}}^t \I\left\{2^{-j} < p^{D_s,S_s}_i\le 2^{-j+1}\right\}.\]
\end{definition}

\begin{definition}[Triggering is Nice]\label{defn:triggering-nice}
    Given integers $\{j_{\max}^i\}_{i\in[m]}$, we call that the triggering is nice at the beginning of round $t$ if for any arm $i$ and any $1\le j\le j^i_{\max}$, as long as $6\ln t \le \frac{1}{3}N_{i,j,t-1}\cdot 2^{-j}$, we have
    \[T_{i,t-1} \ge \frac{1}{3}N_{i,j,t-1}\cdot 2^{-j}.\]
    We use $\cN^{t}_t$ to denote this event. 
\end{definition}

\begin{lemma}
    Given a series of integers $\{j_{\max}^i\}_{i\in [m]}$, we have for every round $t \ge 1$,
    \[\Pr\{\lnot\cN_t^t\}\le \sum_{i\in [m]}j_{\max}^i t^{-2}.\]
\end{lemma}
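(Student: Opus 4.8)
The plan is to apply a union bound over arms $i\in[m]$ and group indices $1\le j\le j_{\max}^i$, reducing the claim to showing that for each fixed pair $(i,j)$ the corresponding ``bad'' event has probability at most $t^{-2}$ (in fact I expect to get $t^{-3}$, with room to spare). Fix such a pair $(i,j)$ and let $\cI=[\max\{t-w+1,1\},\,t-1]$ be the window active at round $t$. Among the rounds $s\in\cI$, single out the \emph{group rounds}, i.e.\ those with $(S_s,\cD_s)\in G_{i,j}$; by the definition of the triggering-probability group, on every group round the probability that arm $i$ is triggered satisfies $p_i^{\cD_s,S_s}>2^{-j}$, and the number of group rounds is exactly $N_{i,j,t-1}$.

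Next I would set up the concentration. Enumerate the group rounds in increasing time order and let $Y_\ell$ be the indicator that arm $i$ is triggered on the $\ell$-th group round. Conditioned on the history up to the action choice of that round, arm $i$ is triggered with probability $p_i^{\cD_s,S_s}>2^{-j}$, so the $Y_\ell$ form an adapted sequence with $\E[Y_\ell\mid Y_1,\dots,Y_{\ell-1}]\ge 2^{-j}=:\mu$. Since $T_{i,t-1}$ counts \emph{all} triggers of arm $i$ in the window and the group rounds are a subset of the window, we have $T_{i,t-1}\ge\sum_{\ell}Y_\ell$. Applying the Multiplicative Chernoff Bound with $\delta=2/3$ (so that $1-\delta=1/3$) to the first $n$ terms gives $\Pr\{\sum_{\ell=1}^n Y_\ell\le \tfrac13 n\,2^{-j}\}\le \exp(-\tfrac{2}{9}\,n\,2^{-j})$.

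Then I would remove the randomness of the counter by a union bound over its value. Failure of $\cN_t^t$ for $(i,j)$ means there is some $n=N_{i,j,t-1}$ with $6\ln t\le \tfrac13 n\,2^{-j}$ (equivalently $n\,2^{-j}\ge 18\ln t$) yet $T_{i,t-1}<\tfrac13 n\,2^{-j}$, which forces $\sum_{\ell=1}^{n}Y_\ell<\tfrac13 n\,2^{-j}$. Restricting the union to the values $n\le t-1$ with $n\,2^{-j}\ge 18\ln t$, each term is at most $\exp(-\tfrac{2}{9}\cdot 18\ln t)=t^{-4}$, and there are fewer than $t$ of them, so the total for $(i,j)$ is at most $t^{-3}\le t^{-2}$. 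Summing over $j\le j_{\max}^i$ and $i\in[m]$ then yields $\Pr\{\lnot\cN_t^t\}\le\sum_{i\in[m]}j_{\max}^i\,t^{-2}$.

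The delicate point — and the step I expect to need the most care — is the martingale set-up in the second paragraph: both which rounds are group rounds and how many there are ($N_{i,j,t-1}$) are random and depend on past triggers through the chosen actions, so ``the $\ell$-th group round'' must be treated as a stopping time and the conditional-mean bound $\E[Y_\ell\mid\cdot]\ge 2^{-j}$ justified with respect to the correct filtration before the fixed-$n$ Chernoff bound can legitimately be invoked and then unioned over $n$. This is exactly the coupling/stopping-time argument underlying the triggered-arm analysis of \cite{wang2017improving}, which I would adapt here.
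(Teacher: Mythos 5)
Your proof is correct and is exactly the argument the paper has in mind: the paper omits the proof entirely, citing Lemma 4 of \cite{wang2017improving} as ``a direct application of the Multiplicative Chernoff Bound,'' and your union bound over $(i,j)$ and over the realized value $n$ of $N_{i,j,t-1}$, combined with the adapted-sequence Chernoff bound (yielding $e^{-\frac{2}{9}n2^{-j}}\le t^{-4}$ per term and hence $t^{-3}$ per pair), is precisely that argument. The stopping-time/coupling subtlety you flag in the last paragraph is indeed the only delicate point, and it is resolved the same way as in the cited reference.
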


This lemma is exactly the same as Lemma 4 in \cite{wang2017improving}. The proof is a direct application of the Multiplicative Chernoff Bound. We omit the proof here.

Finally, we extend the definition of gap for the ease of the analysis. First recall that we have the following definition of gap.

\begin{definition}[Main content definition \ref{defn:gap} restated]
	For any distribution ${D}$ with mean vector $\bmu$. For each action $S$, we define the gap $\Delta^{{D}}_S := \max\{0,\alpha\cdot\text{opt}_{\bmu} - r_S(\bmu)\}$. For each arm $i$, we define
	\begin{align*}
	& \Delta^{i,t}_{\min} = \inf_{S\in\mathbb S:p^{{D}_t,S}_i > 0,\Delta^{{D}_t}_S > 0}\Delta^{{D}_t}_S, \\
	& \Delta^{i,t}_{\max} = \sup_{S\in\mathbb S:p^{{D}_t,S}_i > 0,\Delta^{{D}_t}_S > 0}\Delta^{{D}_t}_S.
	\end{align*}
	We define $\Delta_{\min}^i = +\infty$ and $\Delta_{\max}^i = 0$ if they are not properly defined by the above definitions. Furthermore, we define $\Delta^{i}_{\min} := \min_{t\le T}\Delta^{i,t}_{\min}$, $\Delta^{i}_{\max} := \max_{t\le T}\Delta^{i,t}_{\max}$ as the minimum and maximum gap for each arm.
\end{definition}

The previous definition of gap focus on a single distribution and a single arms. Furthermore, we define $\Delta^{t}_{\min} := \inf_{i\in [m]}\Delta^{i,t}_{\min}$, $\Delta^{t}_{\max} := \sup_{i\in [m]}\Delta^{i,t}_{\max}$ as the minimum and maximum gap in each round,  and $\Delta_{\min} := \inf_{t\le T}\Delta^{t}_{\min},\Delta_{\max} := \sup_{t\le T}\Delta^{t}_{\max}$ as the minimum and maximum gap.


\subsection{Non-stationary CMAB without probabilistically triggered arms}

As a warm up, we first consider the case without the probabilistically triggered arms, i.e. $p_i^{{D},S} \in\{0,1\}$. Then $\tilde S^{{D}} = S$ and we denote $K = \max_{S}|S|$. Then, the TPM bounded smoothness becomes the following,


\begin{restatable}[$1$-Norm Bounded Smoothness]{assumption}{assboundedsmooth}\label{ass:bounded-smoothness}
    For any two distributions ${D},{D}'$ with expectation vectors $\boldsymbol{\mu}$ and $\bmu'$ and any action $S$, we have
    \[|r_S(\bmu) - r_S(\bmu') |\le B\sum_{i\in S}|\bmu_i - \bmu'_i|.\]
\end{restatable}

We define the following number:
\[\kappa_T(M,s) = \left\{\begin{aligned}
    &2B\sqrt{6\ln T}, &\text{if } s = 0,\\
    &2B\sqrt{\frac{6\ln T}{s}}, &\text{if } 1\le s \le \ell_T(M),\\
    &0, &\text{if } s \ge \ell_T(M)+1,
\end{aligned}\right.\]
where
\[\ell_T(M)=\left\lfloor\frac{24B^2K^2\ln T}{M^2}\right\rfloor.\]
Generally speaking, we bridge the regret and the upper bound by this number, and we use the technique similar to that in \cite{wang2017improving}.

\begin{lemma}
    Suppose that the sliding window size is $w$. For any arm $i\in[m]$, any $T$, and any numbers $\{M_i\}_{i\le m}$,
    \[\sum_{t=1}^T \I(i\in S_t)\cdot \kappa_T(M_i,T_{i,t}) \le \left(\frac{T}{w}+1\right)\left(2B\sqrt{6\ln T} + \frac{48B^2K\ln T}{M_i}\right).\]
\end{lemma}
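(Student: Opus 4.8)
The plan is to partition the horizon $[1,T]$ into $\lceil T/w\rceil \le T/w + 1$ consecutive blocks of length $w$, show that within each block the contribution of arm $i$ is at most $2B\sqrt{6\ln T} + \frac{48B^2K\ln T}{M_i}$, and then sum over blocks. This reduces the sliding-window estimate to the familiar non-windowed CUCB calculation performed block by block, and the factor $T/w + 1$ in the statement is exactly the number of blocks.

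First I would fix a block $\cB = [a, a+w-1]$ and list the rounds $t_1 < t_2 < \cdots < t_p$ inside $\cB$ at which arm $i$ is triggered (here $i \in S_t$ coincides with $i$ being triggered, since $p_i^{\cD,S}\in\{0,1\}$). The key observation is that at the $\ell$-th such round $t_\ell$, the sliding window $[\max\{t_\ell - w + 1,1\}, t_\ell - 1]$ contains the entire prefix $[a, t_\ell - 1]$ of the block: indeed $t_\ell \le a + w - 1$ forces $t_\ell - w + 1 \le a$, so the window's left endpoint is at most $a$. Consequently every earlier trigger $t_1, \dots, t_{\ell-1}$ lies in the window and is counted, giving $T_{i, t_\ell} \ge \ell - 1$. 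Since $\kappa_T(M_i, \cdot)$ is non-increasing in its second argument (it equals $2B\sqrt{6\ln T}$ at $s \in \{0,1\}$, decreases like $s^{-1/2}$, and vanishes once $s > \ell_T(M_i)$), this yields the per-term bound $\kappa_T(M_i, T_{i,t_\ell}) \le \kappa_T(M_i, \ell - 1)$.

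Summing over the block and extending the range of $s$ to infinity (all terms are nonnegative, so dropping the upper cutoff $p$ only increases the sum), the per-block contribution is at most $\sum_{s=0}^{\infty}\kappa_T(M_i, s) = 2B\sqrt{6\ln T} + \sum_{s=1}^{\ell_T(M_i)} 2B\sqrt{\tfrac{6\ln T}{s}}$. I would bound the tail sum using $\sum_{s=1}^{\ell} s^{-1/2} \le 2\sqrt{\ell}$ together with $\ell_T(M_i) \le 24B^2K^2\ln T/M_i^2$; the constants combine (via $\sqrt{6}\cdot\sqrt{24} = 12$) to exactly $\frac{48B^2K\ln T}{M_i}$. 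Multiplying the per-block bound by the number of blocks $\le T/w + 1$ then gives the claim. The only real content is the window-contains-the-block-prefix observation in the second step; once that lower bound on $T_{i,t_\ell}$ is in place, the rest is the routine $\sum s^{-1/2}$ estimate, so I expect no genuine obstacle beyond setting up the block decomposition carefully (in particular handling the first round of each block, where $\ell = 1$ and $T_{i,t_1}\ge 0$, and the possibly shorter final block).
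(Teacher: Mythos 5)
Your proposal is correct and follows essentially the same route as the paper: the paper also partitions $[1,T]$ into $\lceil T/w\rceil$ blocks of length $w$, introduces the within-block trigger counter $T'_{i,t}$ (your ``$\ell-1$ earlier triggers''), uses the fact that the sliding window contains the block prefix to get $T_{i,t}\ge T'_{i,t}$, and then bounds the per-block sum by $\sum_{s\ge 0}\kappa_T(M_i,s)$ via the same monotonicity and $\sum_{s=1}^{\ell}s^{-1/2}\le 2\sqrt{\ell}$ estimate. The only cosmetic difference is that the paper phrases the key step through an explicit auxiliary counter rather than through the indices of the trigger times.
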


\begin{proof}
    We devide the time $\{1,2,\dots,T\}$ into the following $\Gamma$ segments $[1=t_0+1,w = t_1],[w+1 = t_1+1,2w = t_2],\dots,[t_{\Gamma-1}+1,t_{\Gamma} = T]$, where $t_{j-1} = t_j - w$. Each segment has length $w$, except for the last segment. It is easy to show that $\Gamma \le \left\lceil \frac{T}{w}\right\rceil$.
    
    Then we bound $\sum_{t=1}^T \I(i\in S_t)\cdot \kappa_T(M_i,T_{i,t})$. We first define another variable $T'_{i,t}$ for every $i,t$. Suppose that $t_{j-1} < t \le t_j$, which means that $t$ lies in the $j$th time segment, let $T'_{i,t}$ denote the number of times arm $i$ has been triggered in time $[t_{j-1}+1,t-1]$. 
    
    Then we know that $T_{i,t} \ge T'_{i,t}$, since the counter $T'_{i,t}$ counts the triggered times in a time interval which is a subset of the time interval for $T_{i,t}$. Because $\kappa_T(M,s)$ is decreasing when $s$ is increasing, we know that
    \[\sum_{t=1}^T \I(i\in S_t)\cdot \kappa_T(M_i,T_{i,t}) \le \sum_{t=1}^T \I(i\in S_t)\cdot \kappa_T(M_i,T'_{i,t})\]
    Then we bound the right hand side, and we have
    \begin{align*}
        \sum_{t=1}^T \I(i\in S_t)\cdot \kappa_T(M_i,T'_{i,t}) =& \sum_{j=1}^\Gamma\sum_{t=t_{j-1}+1}^{t_j} \I(i\in S_t)\cdot \kappa_T(M_i,T'_{i,t}) \\
        \le& \sum_{j=1}^\Gamma\sum_{s=0}^{w-1}\kappa_T(M_i,s) \\
        \le& \sum_{j=1}^\Gamma\left(2B\sqrt{6\ln T} + \sum_{s=1}^{\ell_T(M_i)}\kappa_T(M_i,s)\right)\\
        =& \sum_{j=1}^\Gamma\left(2B\sqrt{6\ln T} + \sum_{s=1}^{\ell_T(M_i)}2B\sqrt{\frac{6\ln T}{s}}\right)\\
        \le& \sum_{j=1}^\Gamma\left(2B\sqrt{6\ln T} + \int_{0}^{\ell_T(M_i)}2B\sqrt{\frac{6\ln T}{s}}ds\right)\\
        \le& \sum_{j=1}^\Gamma\left(2B\sqrt{6\ln T} + 4B\sqrt{6\ln T\ell_T(M_i)}\right)\\
        \le& \sum_{j=1}^\Gamma\left(2B\sqrt{6\ln T} + 4B\sqrt{6\ln T \frac{24B^2K^2\ln T}{M_i^2}}\right)\\
        \le&\left(\frac{T}{w} + 1\right)\left(2B\sqrt{6\ln T} + \frac{48B^2K\ln T}{M_i}\right).
    \end{align*}
\end{proof}

Then, we have the following simple lemma to bound the difference between the true mean of each round and the actual mean for the round that we trigger. The lemma is simple to proof, and a detailed proof can be found in \cite{zhao2019online}.

\begin{lemma}
    Suppose that the size of the sliding window is $w$. For every $t$ and every possible triggering, we have 
    \[||\nu_{t} - \mu_t||_{\infty} \le \sum_{s=t-w+2}^{t}||\mu_s - \mu_{s-1}||_{\infty}.\]
\end{lemma}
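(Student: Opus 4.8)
The plan is to prove the bound coordinate by coordinate and then take the maximum over $i\in[m]$. Fix an arm $i$ and a round $t$, and regard the triggering realization as fixed. The key observation is that $\nu_{i,t}$ is the expectation of the empirical mean $\hat\mu_{i,t}$, i.e. the average, over exactly those rounds $s\in[t-w+1,t-1]$ at which arm $i$ is triggered, of the true per-round means $\mu_{i,s}$, where each triggered round carries weight $1/T_{i,t}$ and these weights sum to $1$ (since $T_{i,t}$ counts precisely the triggered rounds in the window). I would exploit this to rewrite $\mu_{i,t}=\frac{1}{T_{i,t}}\sum_s \I\{i\text{ triggered at }s\}\,\mu_{i,t}$ as the same weighted average of the constant $\mu_{i,t}$, so that subtraction and the triangle inequality give
\[
|\nu_{i,t}-\mu_{i,t}|=\Bigl|\frac{1}{T_{i,t}}\sum_s \I\{i\text{ triggered at }s\}(\mu_{i,s}-\mu_{i,t})\Bigr|\le \frac{1}{T_{i,t}}\sum_s \I\{i\text{ triggered at }s\}\,|\mu_{i,s}-\mu_{i,t}|.
\]
If $i$ is never triggered in the window, then $\nu_{i,t}=\mu_{i,t}$ by definition and the bound is trivial.

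Next I would bound each term $|\mu_{i,s}-\mu_{i,t}|$ by telescoping, $|\mu_{i,s}-\mu_{i,t}|\le\sum_{r=s+1}^{t}|\mu_{i,r}-\mu_{i,r-1}|$. Because every triggered round $s$ in the window satisfies $s\ge t-w+1$ and $s\le t-1$, the index range $[s+1,t]$ is contained in $[t-w+2,t]$, so uniformly in $s$ the telescoping sum is dominated by the full window sum $\sum_{r=t-w+2}^{t}|\mu_{i,r}-\mu_{i,r-1}|$. This is the step that makes the window length $w$ enter the bound, and it is the only place where one must pin down the summation limits exactly; it is really the only content of the argument.

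Since this upper bound no longer depends on $s$ and the weights $\frac{1}{T_{i,t}}\I\{\cdots\}$ sum to $1$, averaging leaves it unchanged, giving $|\nu_{i,t}-\mu_{i,t}|\le\sum_{r=t-w+2}^{t}|\mu_{i,r}-\mu_{i,r-1}|$. Finally I would pass from coordinates to the $\ell_\infty$ norm: each summand obeys $|\mu_{i,r}-\mu_{i,r-1}|\le ||\mu_r-\mu_{r-1}||_\infty$, and since this holds for every $i$ the maximum over $i$ on the left is controlled by the same right-hand side, yielding $||\nu_t-\mu_t||_\infty\le\sum_{r=t-w+2}^{t}||\mu_r-\mu_{r-1}||_\infty$, as claimed. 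The whole argument is elementary — a convex-combination (weights-sum-to-one) identity followed by telescoping — so I anticipate no obstacle beyond careful bookkeeping of the index ranges and the handling of the never-triggered edge case.
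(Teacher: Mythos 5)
Your proof is correct and is exactly the elementary convex-combination-plus-telescoping argument that the paper itself omits (it only remarks the lemma is ``simple to prove'' and defers to a citation). You also rightly read the paper's displayed definition of $\nu_{i,t}$ as averaging $\mu_{i,s}$ over the triggered rounds $s$ (the $\mu_{i,t}$ inside the sum is evidently a typo, since otherwise the left-hand side would be identically zero), and your index bookkeeping $[s+1,t]\subseteq[t-w+2,t]$ and the never-triggered edge case are handled correctly.
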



Denote $\Delta_{S}^t$ as $\Delta_{S}^{\mc{D}_t}$ for simplicity. At round $t$ with action $S_t$, we use $\Delta_{S_t}$ for short.

\begin{lemma}\label{lem:cucb-sw-key-no-prob}
    Suppose that the size of the sliding window is $w$ and fix the parameters $M_i$ for each $i\in [m]$ and defining $M_{S_t} = \max_{i\in S_t}M_i$. Then we have
    \[\text{Reg}(\{\Delta^t_{S_t} \ge M_{S_t}\}\land\cN_t^s\land\lnot \cF_t) \le \sum_{i\in[m]}\left(\frac{T}{w} + 1\right)\left(2B\sqrt{6\ln T} + \frac{48B^2K\ln T}{M_i}\right)+2(1+\alpha)KB\sum_{s=2}^{t}||\mu_s - \mu_{s-1}||_{\infty}\cdot w.\]
    where $\cF_t$ is denoted as the event that $\{r_{S_t}(\bar \mu_t) < \alpha\cdot\text{opt}_{\bar\mu_t}\}$
\end{lemma}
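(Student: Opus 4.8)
The plan is to first establish a deterministic per-round upper bound on the instantaneous regret $\alpha\cdot\text{opt}_{\mu_t}-r_{S_t}(\mu_t)$ valid whenever the filtering event $\{\Delta^t_{S_t}\ge M_{S_t}\}\land\cN_t^s\land\lnot\cF_t$ occurs, and then to sum this bound over the rounds using the two counting lemmas already proved. Throughout I write $V_t:=\sum_{s=t-w+2}^{t}||\mu_s-\mu_{s-1}||_{\infty}$ for the total drift inside the window, so that the $\nu$--$\mu$ lemma gives $|\nu_{i,t}-\mu_{i,t}|\le V_t$ for every $i$ with $T_{i,t}\ge 1$, while $\nu_{i,t}=\mu_{i,t}$ exactly when $T_{i,t}=0$.

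First I would convert $\cN_t^s$ into pointwise inequalities. From $|\hat\mu_{i,t}-\nu_{i,t}|<\rho_{i,t}$ and the clipping in the algorithm one gets, for $T_{i,t}\ge 1$, that $\nu_{i,t}\le\bar\mu_{i,t}<\nu_{i,t}+2\rho_{i,t}$, and hence (using the drift bound, and treating $T_{i,t}=0$ arms via $\bar\mu_{i,t}=1\ge\mu_{i,t}$) the coordinate-wise relation $\mu_t\le\bar\mu_t+V_t\one$. By Monotonicity (Assumption~\ref{ass:monotonicity}) and one application of $1$-Norm Bounded Smoothness (Assumption~\ref{ass:bounded-smoothness}) to the shifted vector $\bar\mu_t+V_t\one$, this yields $\text{opt}_{\mu_t}\le\text{opt}_{\bar\mu_t}+BKV_t$; feeding in $\lnot\cF_t$, i.e. $r_{S_t}(\bar\mu_t)\ge\alpha\cdot\text{opt}_{\bar\mu_t}$, gives $\alpha\cdot\text{opt}_{\mu_t}\le r_{S_t}(\bar\mu_t)+\alpha BKV_t$. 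Thus the instantaneous regret is at most $[r_{S_t}(\bar\mu_t)-r_{S_t}(\mu_t)]+\alpha BKV_t$, and a second use of bounded smoothness bounds $r_{S_t}(\bar\mu_t)-r_{S_t}(\mu_t)\le\sum_{i\in S_t}B|\bar\mu_{i,t}-\mu_{i,t}|$ by $\sum_{i\in S_t:T_{i,t}\ge1}(2B\rho_{i,t}+BV_t)+\sum_{i\in S_t:T_{i,t}=0}B\sqrt{6\ln T}$, the last group handled through $|\bar\mu_{i,t}-\mu_{i,t}|\le 1$. Collecting drift terms, on the good event the regret is at most $A_t+(1+\alpha)BKV_t$, where $A_t:=2B\sum_{i\in S_t:T_{i,t}\ge1}\rho_{i,t}+B\sqrt{6\ln T}\,|\{i\in S_t:T_{i,t}=0\}|$.

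The heart of the argument is the ``gap trick'' that replaces the $\rho_{i,t}$'s by the truncated quantity $\kappa_T$. Using $\Delta^t_{S_t}\ge M_{S_t}$, I split $S_t$ into well-sampled arms ($T_{i,t}>\ell_T(M_{S_t})$) and under-sampled arms. For a well-sampled arm the definition of $\ell_T$ forces $2B\rho_{i,t}<M_{S_t}/(2K)$, so the total well-sampled contribution is $<M_{S_t}/2\le\Delta^t_{S_t}/2$; meanwhile each under-sampled arm, including those with $T_{i,t}=0$, contributes exactly $\tfrac12\kappa_T(M_{S_t},T_{i,t})$ to $A_t$. Plugging these into $\Delta^t_{S_t}\le A_t+(1+\alpha)BKV_t$ and absorbing the $\Delta^t_{S_t}/2$ term gives the clean per-round bound $\alpha\cdot\text{opt}_{\mu_t}-r_{S_t}(\mu_t)\le\sum_{i\in S_t}\kappa_T(M_{S_t},T_{i,t})+2(1+\alpha)BKV_t$. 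Finally, since $M_{S_t}\ge M_i$ implies $\ell_T(M_{S_t})\le\ell_T(M_i)$ and hence $\kappa_T(M_{S_t},s)\le\kappa_T(M_i,s)$ for every $s$, I upgrade $M_{S_t}$ to the per-arm $M_i$ inside each summand.

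It then remains to sum over $t$ against the good-event indicator and take expectations. The $\kappa_T(M_i,\cdot)$ part splits across arms and is controlled by the first counting lemma, producing $\sum_{i\in[m]}\left(\tfrac{T}{w}+1\right)\left(2B\sqrt{6\ln T}+\tfrac{48B^2K\ln T}{M_i}\right)$; the drift part is summable because each $||\mu_s-\mu_{s-1}||_{\infty}$ lies in at most $w$ of the windows defining $V_t$, giving $2(1+\alpha)KBw\sum_{s=2}^{T}||\mu_s-\mu_{s-1}||_{\infty}$. I expect the main obstacle to be the bookkeeping where non-stationarity and the UCB confidence argument interact: one must route every comparison through the intermediate vector $\nu_t$ rather than $\mu_t$, keep the $T_{i,t}=0$ arms out of the $\rho_{i,t}$ accounting (where $\rho_{i,t}=\infty$), and verify that the factor-$2$ absorption in the gap trick still succeeds in the presence of the extra additive drift $(1+\alpha)BKV_t$ rather than spoiling the inequality $2B\rho_{i,t}<M_{S_t}/(2K)$ that makes $\kappa_T$ vanish on well-sampled arms.
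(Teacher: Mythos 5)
Your proposal is correct and follows essentially the same route as the paper's proof: the UCB sandwich $\alpha\cdot\mathrm{opt}_{\bmu_t}\le r_{S_t}(\bar\mu_t)+\alpha BK V_t$ on the good event, the reverse-amortization (doubling) trick that uses $\Delta^t_{S_t}\ge M_{S_t}$ to truncate the confidence radii into $\kappa_T(M_i,T_{i,t})$, the counting lemma for the $\kappa_T$ sum, and the observation that each $\|\mu_s-\mu_{s-1}\|_\infty$ appears in at most $w$ windows. The only cosmetic differences are that you compare $\bar\mu_t$ to $\mu_t$ directly via $|\bar\mu_{i,t}-\mu_{i,t}|\le 2\rho_{i,t}+V_t$ where the paper routes through $\nu_t$ and the clipped vector $\mu'_t$, and that you spell out the well-sampled/under-sampled split rather than citing the corresponding lemma of Wang and Chen; both yield the same constants.
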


\begin{proof}
    From the assumption of our oracle, we know that $\Pr\{\cF_t\} \le 1-\beta$. We also define $M_{S} = \max_{i\in\bar S}M_i$ for each possible action $S$, and use define $M_S = 0$ if $\bar S = \phi$. We first show that when $\{\Delta^t_{S_t} \ge M_{S_t}\},\cN_t^s,\lnot \cF_t$ all happens, we have
    \[\Delta^t_{S_t} \le \sum_{i\in\bar S_t}\kappa_T(M_i,T_{i,t-1})+2(1+\alpha)KB\sum_{s=t-w+2}^{t}||\mu_s - \mu_{s-1}||_{\infty}.\]
    
    First when $\Delta^t_{S_t} = 0$, the inequality holds, and we just have to prove the case when $\Delta^t_{S_t} > 0$. Let $R_1$ denote the optimal strategy when the mean vector is $\mu'_t$ in which the $i$-th entry is $\mu'_{i,t} = \min\{\nu_{i,t}+\sum_{s=t-w+2}^{t}||\mu_s - \mu_{s-1}||_{\infty},1\}$. Then we know that $\mu'_{i,t}\ge\mu_{i,t}$. From $\cN_{t}^s$ and $\lnot \cF_t$, we have
    \begin{align*}
    r_{S_t}(\bar\mu_t) \ge& \alpha\cdot \text{opt}_{\bar\mu_t} \ge \alpha\cdot r_{R_1}(\bar\mu_t)\ge \alpha\cdot r_{R_1}(\nu_t)\\
    \ge& \alpha\cdot r_{R_1}(\mu'_t)-\alpha KB\sum_{s=t-w+2}^{t}||\mu_s - \mu_{s-1}||_{\infty}\\
    \ge& \alpha\cdot \text{opt}_{\mu_t}-\alpha KB\sum_{s=t-w+2}^{t}||\mu_s - \mu_{s-1}||_{\infty}\\
    =& r_{S_t}(\mu_t)+\Delta^t_{S_t}-\alpha KB\sum_{s=t-w+2}^{t}||\mu_s - \mu_{s-1}||_{\infty}\\
    \ge& r_{S_t}(\nu_t)+\Delta^t_{S_t}-(1+\alpha) KB\sum_{s=t-w+2}^{t}||\mu_s - \mu_{s-1}||_{\infty},
    \end{align*}
    so we get
    \begin{align*}
        \Delta_{S_t} \le& r_{S_t}(\bar \mu_t) - r_{S_t}(\nu_t) + (1+\alpha) KB\sum_{s=t-w+2}^{t}||\mu_s - \mu_{s-1}||_{\infty}\\
        \le& B\sum_{i\in S_t}(\bar\mu_{i,t}-\nu_{i,t}) + (1+\alpha) KB\sum_{s=t-w+2}^{t}||\mu_s - \mu_{s-1}||_{\infty}.
    \end{align*}
    Then when $\{\Delta^t_{S_t} \ge M_{S_t}\},\cN_t^s,\lnot \cF_t$ all happens, we have
    \begin{align*}
        \Delta^t_{S_t} \le& B\sum_{i\in S_t}(\bar\mu_{i,t}-\nu_{i,t}) + (1+\alpha) KB\sum_{s=t-w+2}^{t}||\mu_s - \mu_{s-1}||_{\infty}\\
        \le& -M_{S_t}+ 2B\sum_{i\in  S_t}(\bar\mu_{i,t}-\nu_{i,t}) + 2(1+\alpha) KB\sum_{s=t-w+2}^{t}||\mu_s - \mu_{s-1}||_{\infty}\\
        \le& 2B\sum_{i\in  S_t}\left(\bar\mu_{i,t}-\nu_{i,t}-\frac{M_{S_t}}{2B|\bar S_t|}\right) + 2(1+\alpha) KB\sum_{s=t-w+2}^{t}||\mu_s - \mu_{s-1}||_{\infty}\\
        \le& 2B\sum_{i\in  S_t}\left(\bar\mu_{i,t}-\nu_{i,t}-\frac{M_{S_t}}{2BK}\right) + 2(1+\alpha) KB\sum_{s=t-w+2}^{t}||\mu_s - \mu_{s-1}||_{\infty}\\
        \le& 2B\sum_{i\in  S_t}\left(\bar\mu_{i,t}-\nu_{i,t}-\frac{M_{i}}{2BK}\right) + 2(1+\alpha) KB\sum_{s=t-w+2}^{t}||\mu_s - \mu_{s-1}||_{\infty}.
    \end{align*}
    By the same proof in \cite{wang2017improving}, it can be shown that
    \[2B\sum_{i\in S_t}\left(\bar\mu_{i,t}-\nu_{i,t}-\frac{M_{i}}{2BK}\right) \le \sum_{i\in  S_t}\kappa_T(M_i,T_{i,t-1}),\]
    and thus we have
    \[\Delta^t_{S_t} \le \sum_{i\in S_t}\kappa_T(M_i,T_{i,t-1})+2(1+\alpha)KB\sum_{s=t-w+2}^{t}||\mu_s - \mu_{s-1}||_{\infty}.\]
    From the previous 2 lemmas, we know that
    \[\text{Reg}(\{\Delta^t_{S_t} \ge M_{S_t}\}\land\cN_t^s\land\lnot \cF_t) \le \sum_{i\in[m]}\left(\frac{T}{w} + 1\right)\left(2B\sqrt{6\ln T} + \frac{48B^2K\ln T}{M_i}\right)+2(1+\alpha)KB\sum_{s=2}^{t}||\mu_s - \mu_{s-1}||_{\infty}\cdot w.\]
\end{proof}

\begin{theorem}\label{thm:cucb-no-prob-trig}
    Choosing the length of the sliding window to be $w = \min\left\{\sqrt{\frac{T}{{V}}},T\right\}$, we have the following distribution dependent bound,
\[\text{Reg}_{\alpha,\beta} = \tilde O\left(\sum_{i\in [m]}\frac{K\sqrt{{V} T}}{\Delta^i_{\min}} + \sum_{i\in [m]}\frac{K}{\Delta^i_{\min}} + mK \right).\]
If we choose the length of the sliding window to be $w = \min\left\{m^{1/3}T^{2/3}K^{-1/3}{V}^{-2/3},T\right\}$, we have the following distribution independent bound,
\[\text{Reg}_{\alpha,\beta} = \tilde O\left((m{V})^{1/3}(KT)^{2/3} + \sqrt{mKT} + mK\right).\]
\end{theorem}

The proof is the same as the proof of Theorem \ref{thm:cucb-sw-prob-res}, and we omit the proof here. The only difference is that, without the probabilistically triggered arms, the constants in Lemma \ref{lem:cucb-sw-key-no-prob} is better than the corresponding lemma with the probabilistically triggered arms.

\subsection{Non-stationary CMAB with probabilistically triggered arms}
In this part, we consider the case with probabilistically triggered arms. Recall that the we have the main TPM bounded smoothness assumption,

\begin{assumption}[Main content assumption \ref{ass:tpm-bounded-smoothness} restated]
	For any two distributions ${D},{D}'$ with expectation vectors $\boldsymbol{\mu}$ and $\bmu'$ and any action $S$, we have
	\[|r_S(\bmu) - r_S(\bmu') |\le B\sum_{i\in [m]}p^{{D},S}_i|\bmu_i - \bmu'_i|.\]
\end{assumption}

Recall that $\tilde S^{{D}} = \{i\in[m]:p_i^{{D},S} > 0\}$ is the set that can be triggered by action $S$ with distribution ${D}$, and we denote $K = \max_{S_{{D}}}|\tilde S|$. We define the following number:
\[\kappa_{j,T}(M,s) = \left\{\begin{aligned}
    &2B\sqrt{72\cdot 2^{-j}\cdot \ln T}, &\text{if } s = 0,\\
    &2B\sqrt{\frac{72\cdot 2^{-j}\cdot \ln T}{s}}, &\text{if } 1\le s \le \ell_{j,T}(M),\\
    &0, &\text{if } s \ge \ell_{j,T}(M)+1,
\end{aligned}\right.\]
where
\[\ell_{j,T}(M)=\left\lfloor\frac{288\cdot 2^{-j}\cdot B^2K^2\ln T}{M^2}\right\rfloor.\]

This number is similar to the number defined in the previous part, but this time, we need to consider the probabilistically triggered arms. Besides the $M,s$ that are taken as inputs, we also have $j$ and $T$ as parameters.

\begin{lemma}\label{lem:prob-trig-kappa-lemma}
    If $\{\Delta_{S_t} \ge M_{S_t}\},\lnot \cF_t,\cN_t^s$ and $\cN_t^t$ hold, we have
    \[\Delta_{S_t}\le\sum_{i\in\tilde S_t^{{D}_t}}\kappa_{j_i,T}(M_i,N_{i,j_i,t-1})+2(1+\alpha)KB\sum_{s=t-w+2}^{t}||\mu_s - \mu_{s-1}||_{\infty},\]
    where $j_i$ is the index of the TP group with $S_t^{{D}_t}\in G_{i,j_i}$.
\end{lemma}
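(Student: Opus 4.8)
The plan is to mirror the non-triggered argument of Lemma~\ref{lem:cucb-sw-key-no-prob}, replacing the combinatorial sums $\sum_{i\in S_t}(\cdot)$ by the probability-weighted sums $\sum_{i\in\tilde S_t^{\cD_t}}p_i^{\cD_t,S_t}(\cdot)$ that Assumption~\ref{ass:tpm-bounded-smoothness} supplies, and then converting these weighted sums into the quantities $\kappa_{j_i,T}(M_i,N_{i,j_i,t-1})$ by feeding the two ``nice'' events into the triggering-group structure. The counter $N_{i,j,t-1}$, rather than the raw trigger count $T_{i,t}$, is the right bookkeeping device, since $\cN_t^t$ is exactly what lets us pass from a group count to a trigger count.

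First I would dispose of the case $\Delta_{S_t}=0$, where the claim is immediate, and assume $\Delta_{S_t}>0$ so that $\alpha\cdot\text{opt}_{\mu_t}=r_{S_t}(\mu_t)+\Delta_{S_t}$. Exactly as before, set $\mu'_{i,t}=\min\{\nu_{i,t}+\sum_{s=t-w+2}^{t}||\mu_s-\mu_{s-1}||_{\infty},1\}$ and let $R_1$ attain $\text{opt}_{\mu'_t}$. The drift lemma gives $\mu'_{i,t}\ge\mu_{i,t}$ and $\cN_t^s$ gives $\bar\mu_{i,t}\ge\nu_{i,t}$. Running the same chain — monotonicity for $r_{R_1}(\bar\mu_t)\ge r_{R_1}(\nu_t)$ and $\text{opt}_{\mu'_t}\ge\text{opt}_{\mu_t}$, Assumption~\ref{ass:tpm-bounded-smoothness} applied to $R_1$ and to $S_t$, and $\lnot\cF_t$ — I obtain the raw bound in which each smoothness error costs at most $KB\sum_{s=t-w+2}^{t}||\mu_s-\mu_{s-1}||_{\infty}$, the factor $K$ now arising from $\sum_i p_i^{\cD_t,S}\le|\tilde S^{\cD_t}|\le K$ rather than from $|S|\le K$:
\[\Delta_{S_t}\le B\sum_{i\in\tilde S_t^{\cD_t}}p_i^{\cD_t,S_t}(\bar\mu_{i,t}-\nu_{i,t})+(1+\alpha)KB\sum_{s=t-w+2}^{t}||\mu_s-\mu_{s-1}||_{\infty}.\]
I would then invoke $\Delta_{S_t}\ge M_{S_t}$ (with $M_{S_t}=\max_{i\in\tilde S_t^{\cD_t}}M_i$) through the doubling trick: adding a second copy of the raw bound inserts a $-M_{S_t}$ term, and since $\sum_{i\in\tilde S_t^{\cD_t}}M_i/K\le M_{S_t}$, I may peel off $M_i/K$ from each arm to reduce everything to the per-arm inequality
\[2Bp_i^{\cD_t,S_t}(\bar\mu_{i,t}-\nu_{i,t})-\frac{M_i}{K}\le\kappa_{j_i,T}(M_i,N_{i,j_i,t-1}),\]
summed over $i\in\tilde S_t^{\cD_t}$, with the drift term carried along unchanged.

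The heart of the proof, and the step I expect to be hardest, is this per-arm inequality, since it is where the triggering probability must be traded against the trigger count. Under $\cN_t^s$ we have $\bar\mu_{i,t}-\nu_{i,t}<2\rho_{i,t}=\sqrt{6\ln T/T_{i,t}}$, and $S_t^{\cD_t}\in G_{i,j_i}$ gives $p_i^{\cD_t,S_t}\le 2^{-j_i+1}$; the delicate point is controlling $\rho_{i,t}$, which forces me to lower-bound $T_{i,t}$, and this is precisely what $\cN_t^t$ delivers as $T_{i,t}\ge\tfrac13 N_{i,j_i,t-1}2^{-j_i}$ — but only when $6\ln t\le\tfrac13 N_{i,j_i,t-1}2^{-j_i}$. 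I would therefore split on $N_{i,j_i,t-1}$ against $\ell_{j_i,T}(M_i)$ and on whether that triggering-nice condition fires. When it fires and $1\le N_{i,j_i,t-1}\le\ell_{j_i,T}(M_i)$, substituting the three facts makes $2Bp_i^{\cD_t,S_t}\cdot 2\rho_{i,t}$ collapse to exactly $2B\sqrt{72\cdot 2^{-j_i}\ln T/N_{i,j_i,t-1}}$ — which is what the constant $72$ is calibrated for. When the condition fails (so $N_{i,j_i,t-1}<18\cdot 2^{j_i}\ln T$ is small, including $N_{i,j_i,t-1}=0$), the crude bound $\bar\mu_{i,t}-\nu_{i,t}\le 1$ already beats $\kappa$. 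Finally, when $N_{i,j_i,t-1}\ge\ell_{j_i,T}(M_i)+1$ so that $\kappa=0$, the definition $\ell_{j_i,T}(M_i)=\lfloor 288\cdot 2^{-j_i}B^2K^2\ln T/M_i^2\rfloor$ is tuned so that $2Bp_i^{\cD_t,S_t}(\bar\mu_{i,t}-\nu_{i,t})<M_i/K$, whence the subtracted $M_i/K$ absorbs the whole term; here one checks both subcases, the failure of the triggering-nice condition forcing $2^{-j_i}<M_i/(4BK)$ so that even $\bar\mu_{i,t}-\nu_{i,t}\le 1$ suffices. This case bookkeeping is exactly that of \cite{wang2017improving}, and I would reuse it. Summing the per-arm bounds over $\tilde S_t^{\cD_t}$ and restoring the drift term yields the claim.
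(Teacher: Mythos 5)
Your proposal is correct and follows essentially the same route as the paper's proof: the identical chain $r_{S_t}(\bar\mu_t)\ge\alpha\,\text{opt}_{\bar\mu_t}\ge\alpha\, r_{R_1}(\nu_t)\ge\cdots$ via the drift-shifted vector $\mu'_t$, monotonicity, the TPM smoothness assumption, and $\lnot\cF_t$, followed by the same doubling trick to insert $-M_{S_t}$. The only difference is that where the paper delegates the final per-arm step to Lemma 5 of \cite{wang2017improving}, you unpack that case analysis explicitly (and your reading $M_{S_t}=\max_{i}M_i$ is the one the argument actually requires); this is the same argument in more detail, not a different one.
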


\begin{proof}
    First, similar to the proof with no probabilistic triggering arms, we use the back amortization trick.
    
    First when $\Delta_{S_t} = 0$, the inequality holds, and we just have to prove the case when $\Delta_{S_t} > 0$. Let $R_1$ denote the optimal strategy when the mean vector is $\bmu'_t$, where $\bmu'_t$ is the vector constituted by $\mu'_{i,t} = \min\{\nu_{i,t}+\sum_{s=t-w+2}^{t}||\mu_s - \mu_{s-1}||_{\infty},1\}$. Then we know that $\mu'_{i,t}\ge\mu_{i,t}$. From $\cN_{t}^s$ and $\lnot \cF_t$, we have
    \begin{align*}
    r_{S_t}(\bar\mu_t) \ge& \alpha\cdot \text{opt}_{\bar\mu_t} \ge \alpha\cdot r_{R_1}(\bar\mu_t)\ge \alpha\cdot r_{R_1}(\nu_t)\\
    \ge& \alpha\cdot r_{R_1}(\mu'_t)-\alpha KB\sum_{s=t-w+2}^{t}||\mu_s - \mu_{s-1}||_{\infty}\\
    \ge& \alpha\cdot \text{opt}_{\mu_t}-\alpha KB\sum_{s=t-w+2}^{t}||\mu_s - \mu_{s-1}||_{\infty}\\
    =& r_{S_t}(\mu_t)+\Delta_{S_t}-\alpha KB\sum_{s=t-w+2}^{t}||\mu_s - \mu_{s-1}||_{\infty}\\
    \ge& r_{S_t}(\nu_t)+\Delta_{S_t}-(1+\alpha) KB\sum_{s=t-w+2}^{t}||\mu_s - \mu_{s-1}||_{\infty},
    \end{align*}
    so we get
    \begin{align*}
        \Delta_{S_t} \le& r_{S_t}(\bar \mu_t) - r_{S_t}(\nu_t) + (1+\alpha) KB\sum_{s=t-w+2}^{t}||\mu_s - \mu_{s-1}||_{\infty}\\
        \le& B\sum_{i\in\tilde S_t}p_i^{{D}_t,S_t}(\bar\mu_{i,t}-\nu_{i,t}) + (1+\alpha) KB\sum_{s=t-w+2}^{t}||\mu_s - \mu_{s-1}||_{\infty}.
    \end{align*}
    Then when $\{\Delta^t_{S_t} \ge M_{S_t}\},\cN_t^s,\lnot \cF_t$ all happens, we have
    \begin{align*}
        \Delta_{S_t} \le& B\sum_{i\in\tilde S_t}p_i^{{D}_t,S_t}(\bar\mu_{i,t}-\nu_{i,t}) + (1+\alpha) KB\sum_{s=t-w+2}^{t}||\mu_s - \mu_{s-1}||_{\infty}\\
        \le& -M_{S_t}+ 2B\sum_{i\in\tilde S_t}p_i^{{D}_t,S_t}(\bar\mu_{i,t}-\nu_{i,t}) + 2(1+\alpha) KB\sum_{s=t-w+2}^{t}||\mu_s - \mu_{s-1}||_{\infty}\\
        \le& 2B\sum_{i\in\tilde S_t}p_i^{{D}_t,S_t}\left(\bar\mu_{i,t}-\nu_{i,t}-\frac{M_{S_t}}{2B|\tilde S_t|}\right) + 2(1+\alpha) KB\sum_{s=t-w+2}^{t}||\mu_s - \mu_{s-1}||_{\infty}\\
        \le& 2B\sum_{i\in\tilde S_t}p_i^{{D}_t,S_t}\left(\bar\mu_{i,t}-\nu_{i,t}-\frac{M_{S_t}}{2BK}\right) + 2(1+\alpha) KB\sum_{s=t-w+2}^{t}||\mu_s - \mu_{s-1}||_{\infty}\\
        \le& 2B\sum_{i\in\tilde S_t}p_i^{{D}_t,S_t}\left(\bar\mu_{i,t}-\nu_{i,t}-\frac{M_{i}}{2BK}\right) + 2(1+\alpha) KB\sum_{s=t-w+2}^{t}||\mu_s - \mu_{s-1}||_{\infty}.
    \end{align*}
    Because of $\cN_t^t$, same as the proof of Lemma 5 of \cite{wang2017improving}, we can show that
    \[2B\sum_{i\in\tilde S_t}p_i^{{D}_t,S_t}\left(\bar\mu_{i,t}-\nu_{i,t}-\frac{M_{i}}{2BK}\right) \le \sum_{i\in(\tilde S_t)^{{D}_t}}\kappa_{j_i,T}(M_i,N_{i,j_i,t-1}).\]
    In this way, we prove the following inequality
    \[\Delta_{S_t}\le\sum_{i\in(\tilde S_t)^{{D}_t}}\kappa_{j_i,T}(M_i,N_{i,j_i,t-1})+2(1+\alpha)KB\sum_{s=t-w+2}^{t}||\mu_s - \mu_{s-1}||_{\infty},\]
    when $\{\Delta_{S_t} \ge M_{S_t}\},\lnot \cF_t,\cN_t^s$ and $\cN_t^t$ hold.
\end{proof}

Then we have the following main lemma to bound the regret with probabilistically triggered arms.

\begin{lemma}\label{lem:prob-trig-key}
    Suppose that the size of the sliding window is $w$ and fix choose the parameters $M_i$ for each $i\in [m]$ and defining $M_{S_t} = \min_{i\in\hat S}M_i$. Then we have
    \begin{align*}
        &\text{Reg}(\{\Delta^{{D}_t}_{S_t} \ge M_{S_t}\}\land\cN_t^s\land\cN_t^t\land\lnot \cF_t)\\
        \le& \sum_{i\in [m]}\left(\frac{T}{w}+1\right)\left(12(2+\sqrt{2})B\sqrt{\ln T} + \frac{576B^2K\ln T}{M_i}\right)+2(1+\alpha)KB\sum_{s=2}^{t}||\mu_s - \mu_{s-1}||_{\infty}\cdot w.
    \end{align*}
\end{lemma}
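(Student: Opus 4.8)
The plan is to start from the per-round gap bound already established in Lemma~\ref{lem:prob-trig-kappa-lemma} and sum it over the qualifying rounds. Writing $\cE_t$ for the event $\{\Delta^{\cD_t}_{S_t}\ge M_{S_t}\}\land\cN_t^s\land\cN_t^t\land\lnot\cF_t$, the event-filtered regret equals $\E[\sum_{t=1}^T\I\{\cE_t\}(\alpha\cdot\text{opt}_{\bmu_t}-r_{S_t}(\bmu_t))]$, and since $\alpha\cdot\text{opt}_{\bmu_t}-r_{S_t}(\bmu_t)\le\Delta^{\cD_t}_{S_t}$ I would bound it by $\E[\sum_{t=1}^T\I\{\cE_t\}\Delta^{\cD_t}_{S_t}]$ and apply Lemma~\ref{lem:prob-trig-kappa-lemma} on each qualifying round. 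Dropping the indicators on the right (every summand is nonnegative) splits the bound into a triggering term $\sum_{t=1}^T\sum_{i\in\tilde S_t^{\cD_t}}\kappa_{j_i,T}(M_i,N_{i,j_i,t-1})$ and a variation term. For the variation term I would simply interchange the order of summation: each increment $\|\bmu_s-\bmu_{s-1}\|_\infty$ appears in the inner sum $\sum_{s=t-w+2}^{t}$ for at most $w-1$ values of $t$, giving $2(1+\alpha)KB\cdot w\sum_{s\ge2}\|\bmu_s-\bmu_{s-1}\|_\infty$, which is exactly the last term of the claim.

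The heart of the argument is the triggering term. First I would regroup it by arm and triggering-probability group, rewriting it as $\sum_{i\in[m]}\sum_{j\ge1}\sum_{t:\,S_t^{\cD_t}\in G_{i,j}}\kappa_{j,T}(M_i,N_{i,j,t-1})$. To control the inner sum for fixed $(i,j)$ I would reuse the sliding-window decomposition from the non-triggering case: partition $\{1,\dots,T\}$ into $\Gamma\le\lceil T/w\rceil$ consecutive blocks of length $w$ and introduce a within-block counter $N'_{i,j,t}$ that restarts at each block boundary. Since the block containing $t$ is a sub-interval of the length-$w$ window defining $N_{i,j,t-1}$, we have $N'_{i,j,t-1}\le N_{i,j,t-1}$, and as $\kappa_{j,T}(M,\cdot)$ is non-increasing this yields $\kappa_{j,T}(M_i,N_{i,j,t-1})\le\kappa_{j,T}(M_i,N'_{i,j,t-1})$. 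Within one block the restarted counter runs through $0,1,2,\dots$ as the qualifying times accumulate, so the per-block contribution is at most $\sum_{s=0}^{w-1}\kappa_{j,T}(M_i,s)$.

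It then remains to estimate $\sum_{s=0}^{w-1}\kappa_{j,T}(M_i,s)$ and to sum over blocks and over $j$. Using the definition of $\kappa_{j,T}$ together with $\sum_{s=1}^{\ell}s^{-1/2}\le\int_0^\ell s^{-1/2}\,ds=2\sqrt\ell$ at $\ell=\ell_{j,T}(M_i)$, the per-block sum is bounded by $2B\sqrt{72\cdot2^{-j}\ln T}+576\cdot2^{-j}B^2K\ln T/M_i$, where $\sqrt{72\cdot288}=144$ produces the constant $576$. Multiplying by $\Gamma\le T/w+1$ and collapsing the two geometric series $\sum_{j\ge1}2^{-j/2}=\sqrt2+1$ and $\sum_{j\ge1}2^{-j}=1$, the first part becomes $\Gamma\cdot 2B\sqrt{72\ln T}\,(\sqrt2+1)=\Gamma\cdot12(2+\sqrt2)B\sqrt{\ln T}$ (using $\sqrt{72}=6\sqrt2$), while the second part becomes $\Gamma\cdot576B^2K\ln T/M_i$. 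Summing over $i\in[m]$ recovers precisely the first term of the stated bound.

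The step I expect to be the main obstacle is the regroup-and-restart bookkeeping: one must verify carefully that the within-block counter is genuinely dominated by the window counter (so that monotonicity of $\kappa_{j,T}$ can legitimately be invoked) and that, across a block, the restarted counter visits each value $0,1,2,\dots$ at most once, so the per-block contribution telescopes into $\sum_{s=0}^{w-1}\kappa_{j,T}(M_i,s)$ without overcounting. Once this combinatorial accounting is in place, the convergence of the $j$-series and the exact tracking of the constants (ensuring $144$, and hence $576$ and $12(2+\sqrt2)$, emerge correctly from $\ell_{j,T}$) is routine.
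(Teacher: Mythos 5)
Your proposal is correct and follows essentially the same route as the paper's proof: invoke Lemma~\ref{lem:prob-trig-kappa-lemma} per round, handle the variation term by interchanging the order of summation, and control the triggering term via the block-restarted counter $N'_{i,j,t}\le N_{i,j,t}$ together with monotonicity of $\kappa_{j,T}$, the integral bound, and the two geometric series in $j$. The constant tracking ($\sqrt{72\cdot 288}=144$ giving $576$, and $2B\sqrt{72}\,(\sqrt2+1)=12(2+\sqrt2)B$) matches the paper exactly.
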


\begin{proof}
    From Lemma \ref{lem:prob-trig-kappa-lemma}, we know that when $\{\Delta^{{D}_t}_{S_t} \ge M_{S_t}\},\lnot \cF_t,\cN_t^s$ and $\cN_t^t$ hold, we have
    \[\Delta^{{D}_t}_{S_t}\le\sum_{i\in(\tilde S_t)^{{D}_t}}\kappa_{j_i,T}(M_i,N_{i,j_i,t-1})+2(1+\alpha)KB\sum_{s=t-w+2}^{t}||\mu_s - \mu_{s-1}||_{\infty}.\]
    Then, sum over $t=1,\dots,T$, we have
    \begin{align*}
        \text{Reg}(\{\Delta^{{D}_t}_{S_t} \ge M_{S_t}\}\land\cN_t^s\land\cN_t^t\land\lnot \cF_t) \le& \sum_{t=1}^{T}\sum_{i\in(\tilde S_t)^{{D}_t}}\kappa_{j_i,T}(M_i,N_{i,j_i,t-1})+2(1+\alpha)KB\sum_{t=1}^{T}\sum_{s=t-w+2}^{t}||\mu_s - \mu_{s-1}||_{\infty} \\
        \le& \sum_{t=1}^{T}\sum_{i\in(\tilde S_t)^{{D}_t}}\kappa_{j_i,T}(M_i,N_{i,j_i,t-1})+2(1+\alpha)KB\sum_{s=2}^{t}||\mu_s - \mu_{s-1}||_{\infty}\cdot w. 
    \end{align*}
    Then we bound the first term. Like the proof without probabilistically triggered arms, we construct another counter $N'_{i,j,t-1}$, which lower bound $N_{i,j,t-1}$. We divide the time $\{1,2,\dots,T\}$ into the following $\Gamma$ segments $[1=t_0+1,w = t_1],[w+1 = t_1+1,2w = t_2],\dots,[t_{\Gamma-1}+1,t_{\Gamma} = T]$, where $t_{j-1} = t_j - w$. Each segment has length $w$, except for the last segment. It is easy to show that $\Gamma \le \left\lceil \frac{T}{w}\right\rceil$. Suppose that $t_{k-1} < t \le t_k$, then define
    \[N'_{i,j,t} := \sum_{s=t_k+1}^t \I\left\{2^{-j} < p^{{D}_s,S_s}_i\le 2^{-j+1}\right\}.\]
    
    Because $\kappa_{j,T}(M,s)$ is monotonically decreasing in terms of $s$, we have
    \begin{align*}
        &\sum_{t=1}^{T}\sum_{i\in(\tilde S_t)^{{D}_t}}\kappa_{j_i,T}(M_i,N_{i,j_i,t-1})\\
        \le& \sum_{t=1}^{T}\sum_{i\in(\tilde S_t)^{{D}_t}}\kappa_{j_i,T}(M_i,N'_{i,j_i,t-1}) \\
        \le& \sum_{i\in [m]}\sum_{k=1}^{\Gamma}\sum_{j=1}^{+\infty}\sum_{s=t_{k-1}+1}^{t_k} \kappa_{j,T}(M_i,s-t_{k-1}-1) \\
        \le& \sum_{i\in [m]}\sum_{k=1}^{\Gamma}\sum_{j=1}^{+\infty}\sum_{s=0}^{\ell_{j,T}(M_i)} \kappa_{j,T}(M_i,s-t_{k-1}-1) \\
        \le& \sum_{i\in [m]}\sum_{k=1}^{\Gamma}\sum_{j=1}^{+\infty}\left(2B\sqrt{72\cdot 2^{-j}\cdot \ln T} + \sum_{s=1}^{\ell_{j,T}(M_i)} 2B\sqrt{\frac{72\cdot 2^{-j}\cdot \ln T}{s}}\right)\\
        \le& \sum_{i\in [m]}\sum_{k=1}^{\Gamma}\sum_{j=1}^{+\infty}\left(2B\sqrt{72\cdot 2^{-j}\cdot \ln T} + 2\cdot 2B\sqrt{72\cdot 2^{-j}\cdot \ln T}\cdot\sqrt{\ell_{j,T}(M_i)} \right) \\
        \le& \sum_{i\in [m]}\sum_{k=1}^{\Gamma}\sum_{j=1}^{+\infty}\left(2B\sqrt{72\cdot 2^{-j}\cdot \ln T} + 2\cdot 2B\sqrt{72\cdot 2^{-j}\cdot \ln T}\cdot\sqrt{\frac{288\cdot 2^{-j}\cdot B^2K^2\ln T}{M_i^2}} \right) \\
        \le& \sum_{i\in [m]}\sum_{k=1}^{\Gamma}\left(12(2+\sqrt{2})B\cdot\sqrt{\ln T} + \frac{576\cdot B^2K\cdot\ln T}{M_i}\right) \\
        \le& \sum_{i\in [m]}\left(\frac{T}{w}+1\right)\left(12(2+\sqrt{2})B\cdot\sqrt{\ln T} + \frac{576\cdot B^2K\cdot\ln T}{M_i}\right).
    \end{align*}
    Then combining with Lemma \ref{lem:prob-trig-kappa-lemma}, we have
    \begin{align*}
        &\text{Reg}(\{\Delta^{{D}_t}_{S_t} \ge M_{S_t}\}\land\cN_t^s\land\cN_t^t\land\lnot \cF_t)\\
        \le& \sum_{i\in [m]}\left(\frac{T}{w}+1\right)\left(12(2+\sqrt{2})B\sqrt{\ln T} + \frac{576B^2K\ln T}{M_i}\right)+2(1+\alpha)KB\sum_{s=2}^{t}||\mu_s - \mu_{s-1}||_{\infty}\cdot w.
    \end{align*}
\end{proof}

\begin{theorem}[Main content theorem \ref{thm:cucb-sw-prob} restated]\label{thm:cucb-sw-prob-res}
	Choosing the length of the sliding window to be $w = \min\left\{\sqrt{\frac{T}{{V}}},T\right\}$, we have the following distribution dependent bound,
	\[\text{Reg}_{\alpha,\beta} = \tilde{O}\left(\sum_{i\in [m]}\frac{K\sqrt{{V} T}}{\Delta^i_{\min}} + \sum_{i\in [m]}\frac{K}{\Delta^i_{\min}} + mK \right).\]
	If we choose the length of the sliding window to be $w = \min\left\{m^{1/3}T^{2/3}K^{-1/3}{V}^{-2/3},T\right\}$, we have the following distribution independent bound,
	\[\text{Reg}_{\alpha,\beta} = \tilde{O}\left((m{V})^{1/3}(KT)^{2/3} + \sqrt{mKT} + mK\right).\]
\end{theorem}

\begin{proof}
    First, from the definition of the filtered regret, we know that
    \begin{align*}
        \text{Reg}(\{\}) \le& \text{Reg}(\{\Delta^{{D}_t}_{S_t} \ge M_{S_t}\}\land\cN_t^s\land\cN_t^t\land\lnot \cF_t) + \text{Reg}(\{\Delta^{{D}_t}_{S_t} < M_{S_t}\}) + \text{Reg}(\lnot \cN_t^s) + \text{Reg}(\lnot \cN_t^t) + \text{Reg}(\cF_t).
    \end{align*}
    The last 3 terms are rather easy to bound, we have
    \begin{align*}
        \text{Reg}(\lnot \cN_t^s) =& \sum_{t=1}^T \Delta^{{D}_t}_{S_t}\I\{\lnot \cN_t^s\} \le \sum_{t=1}^T\Pr\{\lnot \cN_t^s\}\Delta_{\max} \le \frac{\pi^2}{3}m\cdot\Delta_{\max}\\
        \text{Reg}(\lnot \cN_t^t) =& \sum_{t=1}^T \Delta^{{D}_t}_{S_t}\I\{\lnot \cN_t^t\} \le \sum_{t=1}^T\Pr\{\lnot \cN_t^t\}\Delta_{\max} \le \frac{\pi^2}{6}\sum_{i\in [m]}j^i_{\max}\cdot\Delta_{\max}\\
        \text{Reg}(\cF_t) =& \sum_{t=1}^T \Delta^{{D}_t}_{S_t}\I\{\cF_t\} \le \sum_{t=1}^T\Pr\{\cF_t\}\Delta^t_{\max} \le (1-\beta)\cdot\sum_{t=1}^T\Delta^t_{\max}
    \end{align*}
    We also know that
    \begin{align*}
        &\text{Reg}^{\cA}_{\alpha,\beta}-\text{Reg}(\{\Delta^{{D}_t}_{S_t} < M_{S_t}\})\\
        =& \alpha\cdot\beta\cdot\sum_{t=1}^T\text{opt}_{\boldsymbol\mu_t} - \E\left[\sum_{t=1}^T r_{S_t^{\cA}}(\boldsymbol\mu_t)\right]-\text{Reg}(\{\Delta^{{D}_t}_{S_t} < M_{S_t}\})\\
        =& \text{Reg}(\{\}) - (1-\beta)\alpha\cdot \sum_{t=1}^T\text{opt}_{\boldsymbol\mu_t} -\text{Reg}(\{\Delta^{{D}_t}_{S_t} < M_{S_t}\})\\
        \le& \text{Reg}(\{\Delta^{{D}_t}_{S_t} \ge M_{S_t}\}\land\cN_t^s\land\cN_t^t\land\lnot \cF_t) + \text{Reg}(\lnot \cN_t^s) + \text{Reg}(\lnot \cN_t^t) + \text{Reg}(\cF_t) - (1-\beta)\alpha\cdot \sum_{t=1}^T\text{opt}_{\boldsymbol\mu_t}\\
        \le& \text{Reg}(\{\Delta^{{D}_t}_{S_t} \ge M_{S_t}\}\land\cN_t^s\land\cN_t^t\land\lnot \cF_t) +\frac{\pi^2}{3}m\cdot\Delta_{\max} + \frac{\pi^2}{6}\sum_{i\in [m]}j^i_{\max}\cdot\Delta_{\max} \\
        &\quad + (1-\beta)\cdot\sum_{t=1}^T\Delta^t_{\max} - (1-\beta)\alpha\cdot \sum_{t=1}^T\text{opt}_{\boldsymbol\mu_t} \\
        \le& \text{Reg}(\{\Delta^{{D}_t}_{S_t} \ge M_{S_t}\}\land\cN_t^s\land\cN_t^t\land\lnot \cF_t) +\frac{\pi^2}{3}m\cdot\Delta_{\max} + \frac{\pi^2}{6}\sum_{i\in [m]}j^i_{\max}\cdot\Delta_{\max}.
    \end{align*}
    Then we have
    \[\text{Reg}^{\cA}_{\alpha,\beta}\le \text{Reg}(\{\Delta^{{D}_t}_{S_t} \ge M_{S_t}\}\land\cN_t^s\land\cN_t^t\land\lnot \cF_t) +\text{Reg}(\{\Delta^{{D}_t}_{S_t} < M_{S_t}\})+\frac{\pi^2}{3}m\cdot\Delta_{\max} + \frac{\pi^2}{6}\sum_{i\in [m]}j^i_{\max}\cdot\Delta_{\max}.\]
    Recall that from Lemma \ref{lem:prob-trig-key},
    \begin{align*}
        &\text{Reg}(\{\Delta^{{D}_t}_{S_t} \ge M_{S_t}\}\land\cN_t^s\land\cN_t^t\land\lnot \cF_t)\\
        \le& \sum_{i\in [m]}\left(\frac{T}{w}+1\right)\left(12(2+\sqrt{2})B\sqrt{\ln T} + \frac{576B^2K\ln T}{M_i}\right)+2(1+\alpha)KB\sum_{s=2}^{t}||\mu_s - \mu_{s-1}||_{\infty}\cdot w.
    \end{align*}
    For the distribution dependent bound, we choose $M_i = \Delta^i_{\min}$. Then, we have $\Delta^{{D}_t}_{S_t} \ge M_{S_t}$ and $\text{Reg}(\{\Delta^{{D}_t}_{S_t} < M_{S_t}\}) = 0$. If we set $w = \min\left\{\sqrt{\frac{T}{{V}}},T\right\}$, we can get
    \[\text{Reg}^{\cA}_{\alpha,\beta} = \tilde O\left(\sum_{i\in [m]}\frac{K\sqrt{{V} T}}{\Delta^i_{\min}} + \sum_{i\in [m]}\frac{K}{\Delta^i_{\min}} + mK \right).
    \]
    As for the distribution independent bound, if we set $w = \min\left\{m^{1/3}T^{2/3}K^{-1/3}{V}^{-2/3},T\right\}, M_i = \sqrt{mK / w} = \Theta(\max\{(mK{V})^{1/3}T^{-1/3}),\sqrt{mK/T}\}$, we can get
    \[\text{Reg}^{\cA}_{\alpha,\beta} = \tilde O\left((m{V})^{1/3}(KT)^{2/3} + \sqrt{mKT} + mK\right) = \tilde O\left((m{N})^{1/3}(KT)^{2/3} + \sqrt{mKT} + mK\right).\]
\end{proof}

\subsection{Theoretical guarantees of $\Cucbbob$}
In this section, we show the performance guarantee of our algorithm $\Cucbbob$. Before moving into the formal proof, we will first introduce more on the EXP3 algorithm and its variant: EXP3.P algorithm.

\paragraph{Background on the EXP3 algorithm and its variant}

First we introduce the EXP3 algorithm and its variant EXP3.P algorithm. EXP3 algorithm is a famous algorithm for the adversarial bandit problem. In the original paper that introduce the Bandit-over-Bandit technique \cite{cheung2019learning}, the authors apply the EXP3 algorithm. However in our case, the regret is complicated and to make the proof easier, we apply the EXP3.P algorithm. The difference is that, the EXP3 algorithm has bounded ``pseudo-regret'', but the EXP3.P algorithm has bounded ``regret'' with high probability, and thus has bounded ``expected regret''. It is know that the ``pseudo-regret'' is a weaker measurement than the ``expected regret'', so for the ease of analysis, we apply EXP3.P algorithm.

\begin{algorithm}
    \caption{EXP3.P}
    \label{alg:exp3p}
    \begin{algorithmic}[1]
        \STATE {\bfseries Input: } Number of arms $K'$, Total time horizon $T'$, Parameters $\eta\in\R^+$, $\gamma,\beta\in [0,1]$.
        \STATE Let $p_1$ denote the uniform distribution over $[K']$.
        \FOR{$t=1,2,\dots,T'$}
            \STATE Draw an arm $I_t$ according to the probability distribution $p_t$.
            \STATE Compute the estimated gain for each arm
            \[\tilde g_{i,t} = \frac{g_{i,t}\I\{I_t = i\} + \beta}{p_{i,t}}\]
            \STATE Update the estimated gain $\tilde G_{i,t} = \sum_{s=1}^t \tilde g_{i,s}$.
            \STATE Compute the new probability distribution over the arms $p_{t+1} = (p_{1,t+1},\dots,p_{K',t+1})$, where
            \[p_{i,t+1} = (1-\gamma) \frac{\exp(\eta \tilde G_{i,t})}{\sum_{k=1}^{K'}\exp(\eta \tilde G_{k,t})} + \frac{\gamma}{K'}.\]
        \ENDFOR
    \end{algorithmic}
\end{algorithm}

Algorithm \ref{alg:exp3p} is the pseudo-code for the EXP3.P algorithm. In the algorithm, $p_{i,t}$ is the gain (reward) in round $t$ of arm $i$, and it satisfies $0\le p_{i,t}\le 1$. It is easy to generalize the algorithm into the case where $0\le p_{i,t} \le R'$, and we only have to normalize to $[0,1]$ each time.

By choosing the parameters
\[\beta = \sqrt{\frac{\ln K'}{K'T'}},\eta = 0.95\sqrt{\frac{\ln K'}{T'K'}},\gamma = 1.05\sqrt{\frac{K'\ln K'}{T'}},\]
we have the following performance guarantee for the EXP3.P algorithm.

\begin{proposition}[Main content proposition \ref{prop:exp3p-regret} restated]\label{prop:exp3p-regret-res}
	Suppose that the reward of each arm in each round is bounded by $0\le r_{i,t}\le R'$, the number of arms is $K'$, and the total time horizon is $T'$. The expected regret of EXP3.P algorithm is bounded by ${O}(R'\sqrt{K'T'\log K'})$.
\end{proposition}

\paragraph{Proof of Theorem \ref{thm:cucb-bob} in main content}

Now we prove Theorem \ref{thm:cucb-bob} in main content (Theorem \ref{thm:cucb-bob-res} in appendix). The main part of the proof is to decompose the regret into 2 parts, and optimize the length of each block to balance 2 parts. Recall that we have the following theorem.

\begin{theorem}[Main content theorem \ref{thm:cucb-bob} restated]\label{thm:cucb-bob-res}
	Suppose that there exist $R_1,R_2$ such that $R_1 \le r_S(\mathbf 0) \le r_S(\mathbf 1) \le R_2$ for any $S\in\mathbb S$ and $R = R_2 - R_1$. Choosing $L = \sqrt{mKT} / R$, we have the following distribution-independent regret bound for $\text{Reg}_{\alpha,\beta}$,
	\[\tilde{O}\left((m{V})^{\frac{1}{3}}(KT)^{\frac{2}{3}} + \sqrt{R}(mK)^{\frac{1}{4}}T^{\frac{3}{4}} +R\sqrt{mKT}\right).\]
	Choosing $L = K^{2/3}T^{1/3}$, we have the following distribution-dependent regret bound
	\[\tilde{O}\left(K\sqrt{\sum_{i\in [m]}\frac{TV}{\Delta^i_{\min}}} + \sum_{i\in [m]}\frac{K^{\frac{1}{3}}T^{\frac{2}{3}}}{\Delta^i_{\min}} + RK^{\frac{1}{3}}T^{\frac{2}{3}}\right).\]
\end{theorem}

\begin{proof}
    We suppose that each block has length $L$, and there are $\lceil\frac{T}{L}\rceil$ blocks in total. Then, the reward in each block is bounded by $R' = RL$, since the reward in each round is bounded by $R$. We also know that the total number of possible length of sliding window is $K' = \lceil\log_2 L\rceil$, and the time horizon for the EXP3.P algorithm is $T' = \lceil\frac{T}{L}\rceil$.
    
    From the definition of the $(\alpha,\beta)$-approximation regret, we have
    \begin{align*}
        \text{Reg}^{\cA}_{\boldsymbol\mu,\alpha,\beta} =& \alpha\cdot\beta\cdot\sum_{t=1}^T\text{opt}_{\boldsymbol\mu_t} - \E\left[\sum_{t=1}^T r_{S_t^{\cA}}(\boldsymbol\mu_t)\right] \\
        =& \underbrace{\alpha\cdot\beta\cdot\sum_{t=1}^T\text{opt}_{\boldsymbol\mu_t} - \E\left[\sum_{t=1}^T r_{S_t^{\cB}}(\boldsymbol\mu_t)\right]}_{\text{Term }\mathbb A} + \underbrace{\E\left[\sum_{t=1}^T r_{S_t^{\cB}}(\boldsymbol\mu_t)\right] - \E\left[\sum_{t=1}^T r_{S_t^{\cA}}(\boldsymbol\mu_t)\right]}_{\text{Term }\mathbb B},
    \end{align*}
    where $\cB$ is another algorithm with the same block size but with fixed window size $w = 2^k$ for some number $k$. From Proposition \ref{prop:exp3p-regret-res}, it is easy to know that for any fixed window size $w$ and the induced algorithm $\cB$, the second term (Term $\mathbb B$) is bounded by
    \[\text{Term }\mathbb B \le \tilde O(R'\sqrt{K'T'}) = \tilde O\left(RL\sqrt{\frac{T}{L}}\right) = \tilde O\left(R\sqrt{TL}\right).\]
    
    Then, the remaining part is to select a window size $w$ and bound Term $\mathbb A$. We decompose Term $\mathbb A$ into sum of regret of each block,
    \[\text{Term }\mathbb A = \alpha\cdot\beta\cdot\sum_{t=1}^T\text{opt}_{\boldsymbol\mu_t} - \E\left[\sum_{t=1}^T r_{S_t^{\cB}}(\boldsymbol\mu_t)\right] = \sum_{\ell=1}^{\lceil\frac{T}{L}\rceil}\left(\alpha\cdot\beta\cdot\sum_{s=L(\ell-1)+1}^{\min\{\ell L,T\}}\text{opt}_{\boldsymbol\mu_t} - \E\left[\sum_{s=L(\ell-1)+1}^{\min\{\ell L,T\}} r_{S_t^{\cB}}(\boldsymbol\mu_t)\right]\right).\]
    Suppose that in each block $\ell\le \lceil\frac{T}{L}\rceil$, the variation in block $\ell$ is denoted by ${V}_{\ell}$. Formally, we define
    \[{V}_{\ell} = \sum_{s=L(\ell-1)+2}^{\min\{\ell L,T\}}||\bmu_s - \bmu_{s-1}||_{\infty}.\]
    Now we bound the regret in each block. The bound is similar to the proof in Theorem \ref{thm:cucb-sw-prob-res}. Choosing $w = 2^k$ where $2^{k} \le \min\{m^{1/3}T^{2/3}K^{-1/3}{V}^{-2/3},L\} < 2^{k+1}$ and $M_i = \sqrt{mK/w}$. If we have $m^{1/3}T^{2/3}K^{-1/3}{V}^{-2/3}\le L$, then the regret in block $\ell < \frac{T}{L}$ is bounded by
    \[\tilde O\left((m{V})^{1/3}K^{2/3}T^{-1/3}\cdot L+m^{1/3}(KT)^{2/3}{V}^{-2/3}\cdot{V}_{\ell} + mK\right).\]
    The regret in last block is bounded by $L$, and Term $\mathbb A$ can be bounded by
    \[\tilde O\left((m{V})^{1/3}(KT)^{2/3}+L + mK\frac{T}{L}\right).\]
    Then we consider the case when $(mK)^{1/3}T^{2/3}{V}^{-2/3} > L$. This time, the regret in each block is bounded by
    \[\tilde O\left(\sqrt{mKL} + mK\right).\]
    Then sum the regret in each block, we bound Term $\mathbb A$ by the following
    \[\tilde O\left(\sqrt{mKL}\frac{T}{L} + L + mK\frac{T}{L}\right) = \tilde O\left(\sqrt{mK/L}\cdot T + L +mK\frac{T}{L}\right),\]
    where the last term is the regret for the last block. Sum them up, we know that Term $\mathbb A$ is bounded by
    \[\text{Term }\mathbb A \le \tilde O\left((m{V})^{1/3}(KT)^{2/3} + \sqrt{mK/L}\cdot T + L +mK\frac{T}{L}\right).\]
    Then combining Term $\mathbb B$, we have
    \[\text{Reg}^{\cA}_{\alpha,\beta} = \tilde O\left((m{V})^{1/3}(KT)^{2/3} + \sqrt{mK/L}\cdot T + L + R\sqrt{TL} +mK\frac{T}{L}\right).\]
    Choosing $L = \sqrt{mKT} / R$, the regret is bounded by
    \[\text{Reg}^{\cA}_{\alpha,\beta} = \tilde O\left((m{V})^{1/3}(KT)^{2/3} + \sqrt{R}(mK)^{1/4}T^{3/4} +R\sqrt{mKT}\right).\]
    Next, we consider the distribution dependent bound. Now, we choose $w = 2^k$ where $2^k \le\min\left\{\sqrt{\frac{T}{{V}}\cdot \sum_{i\in [m]}\frac{1}{\Delta^i_{\min}}},L\right\} < 2^{k+1}$. First we consider the case when $\sqrt{\frac{T}{{V}}\cdot \sum_{i\in [m]}\frac{1}{\Delta^i_{\min}}} \le L$. In this case, the regret in block $\ell$ (except for the last one) is bounded by
    \[\tilde O\left(\frac{L}{w}\cdot \sum_{i\in [m]}\frac{K}{\Delta^i_{\min}} + w\cdot {V}_{\ell} + mK\right).\]
    Summing up the regret in each block, we can know that Term $\mathbb A$ in this case is bounded by
    \[\tilde O\left(K\sqrt{T{V}\cdot \sum_{i\in [m]}\frac{1}{\Delta^i_{\min}}} + mKL\right).\]
    Then consider the case when $\sqrt{\frac{T}{{V}}\cdot \sum_{i\in [m]}\frac{1}{\Delta^i_{\min}}} > L$. In this case, the regret for block $\ell$ is bounded by
    \[\tilde O\left(\sum_{i\in [m]}\frac{K}{\Delta^i_{\min}} + mK\right).\]
    Summing up the regret in each block, we know that Term $\mathbb A$ is bounded by
    \[\tilde O\left(\frac{T}{L}\cdot\sum_{i\in [m]}\frac{K}{\Delta^i_{\min}} + mK\frac{T}{L}\right).\]
    Combining the regret bound in each case, we know that
    \[\text{Term }\mathbb A = \tilde O\left(K\sqrt{T{V}\cdot \sum_{i\in [m]}\frac{1}{\Delta^i_{\min}}} + \frac{T}{L}\cdot\sum_{i\in [m]}\frac{K}{\Delta^i_{\min}} + mK\frac{T}{L}\right).\]
    Take Term $\mathbb B$ into account, we have
    \[\text{Reg}^{\cA}_{\alpha,\beta} = \tilde O\left(K\sqrt{T{V}\cdot \sum_{i\in [m]}\frac{1}{\Delta^i_{\min}}} + \frac{T}{L}\cdot\sum_{i\in [m]}\frac{K}{\Delta^i_{\min}} + mK\frac{T}{L} + R\sqrt{TL}\right).\]
    Choosing $L = K^{2/3}T^{1/3}$, we can get
    \[\text{Reg}^{\cA}_{\alpha,\beta} = \tilde O\left(K\sqrt{T{V}\cdot \sum_{i\in [m]}\frac{1}{\Delta^i_{\min}}} + \sum_{i\in [m]}\frac{K^{\frac{1}{3}}T^{\frac{2}{3}}}{\Delta^i_{\min}} + RK^{\frac{1}{3}}T^{\frac{2}{3}}\right).\]
\end{proof}

\section{More Details in Section 4}\label{sec:special-detail}
\subsection{Detailed Algorithm}
In this part, we give our full algorithm pseudo-code. Please see Algorihtm \ref{alg: LCMAB} for more details.

\begin{algorithm}
   \caption{\textsc{Ada-LCMAB}}
   \label{alg: LCMAB}
\begin{algorithmic}[1]
   \STATE {\bfseries Input:} confidence $\delta$, time horizon $T$, action space $\mb{S}$
   \STATE {\bfseries Definition:} $\nu_j=\sqrt{\frac{C_0}{m2^jL}}$, where $C_0=\ln\left(\frac{8T^3|\mb{S}|^2}{\delta}\right)$, $L=\lceil 4mC_0 \rceil, \mc{B}_{(i,j)}:=[\iota_i, \iota_i+2^jL-1]$. 
   \STATE {\bfseries Initialize:} $t=1, i=1$
   \STATE $ \iota_i \leftarrow t$ \label{alg_line: init}
   \FOR{$j=0,1,2,\dots$}
   \STATE If $j=0$, set $Q_{(i,j)}$ as an arbitrary distribution over $\mb{S}$; otherwise, let $(\bm{q}_{(i,j)}^{\nu_j}, Q_{(i,j)}^{\nu_j})$ be the associated solution and distribution of equation (\ref{eq: FTRL}) with inputs $\mc{I}=\mc{B}_{(i,j-1)}$ and $\nu = \nu_j$
   \STATE $\mc{E} \leftarrow \emptyset$
   \WHILE{$t \leqslant \iota_i+2^jL-1$}
   \STATE Draw $\mathrm{REP} $ $\sim $ $\mathrm{Bernoulli}\left(\frac{1}{L}\times2^{-j/2}\times \sum_{k=0}^{j-1}2^{-k/2}\right)$ 
   \IF{$\mr{REP}=1$}
   \STATE Sample $n$ from $\{0,\dots, j-1\}$ s.t. $\Pr[n=b]\propto 2^{-b/2}$
   \STATE $\mc{E} \leftarrow \mc{E} \cup \{(n,[t,t+2^nL-1])\}$
   \ENDIF
   \STATE Let $N_t:=\{n|\exists \mc{I} \text{ such that } t \in \mc{I} \text{ and } (n, \mc{I})\in \mc{E}\}$
   \STATE If $N_t$ is empty, play $S_t \sim Q_{(i,j)}^{\nu_j}$; otherwise, sample $n \sim \text{Uniform}(N_t)$, and play $S_t \sim Q^{\nu_n}_{(i,n)}$ \label{alg: LCMAB strategy}
   \STATE Receive $\{X_i^t|i \in S_t\}$ and calculate $\hat{{\mu}}_t$ according to equation (\ref{eq: importance weight})
   \FOR{$(n,[s,s']) \in \mc{E}$}
   \IF{$s'=t$ and \textsc{EndOfReplayTest}$(i,j,n,[s,t])=Fail$}
   \STATE $t \leftarrow t+1, i\leftarrow i+1$ and return to Line \ref{alg_line: init}
   \ENDIF
   \ENDFOR
   \IF{$t=\iota_i+2^jL-1$ and \textsc{EnfOfBlockTest}$(i,j)=Fail$}
   \STATE $t\leftarrow t+1, i\leftarrow i+1$ and return to Line \ref{alg_line: init}
   \ENDIF
   \ENDWHILE
   \ENDFOR
\end{algorithmic}
\begin{algorithmic}
    \STATE {\bfseries Procedure:} \textsc{EndOfReplayTest}($i,j,n,\mc{A}$):
    \STATE Return \textit{Fail} if there exists $S \in \mb{S}$ such that any of the following inequalities holds: 
    \begin{align}
        \widehat{\mr{Reg}}_{\mc{A}}(S) - 4 \widehat{\mr{Reg}}_{\mc{B}(i,j-1)}(S) \geqslant 34 mK \nu_n \log T \label{check: replay1}\\
        \widehat{\mr{Reg}}_{\mc{B}(i,j-1)}(S) - 4\widehat{\mr{Reg}}_{\mc{A}}(S) \geqslant 34 mK \nu_n \log T \label{check: replay2}
    \end{align}
\end{algorithmic}
\begin{algorithmic}
    \STATE {\bfseries Procedure:} \textsc{EndOfBlockTest}($i,j$):
    \STATE Return \textit{Fail} if there exists $k \in \{0,1,\dots, j-1\}$ and $S \in \mb{S}$ such that any of the following inequalities holds:
    \begin{align}
        \widehat{\mr{Reg}}_{\mc{B}(i,j)}(S) - 4 \widehat{\mr{Reg}}_{\mc{B}(i,k)}(S) \geqslant 20 mK \nu_k \log T \label{check: block1}\\
        \widehat{\mr{Reg}}_{\mc{B}(i,k)}(S) - 4\widehat{\mr{Reg}}_{\mc{B}(i,j)}(S) \geqslant 20 mK \nu_k \log T \label{check: block2}
    \end{align}
\end{algorithmic}
\end{algorithm}

\subsection{Omitted Proofs in Section 4}

\begin{lemma}[Main content lemma \ref{lem: FTRL} restated]\label{lem: FTRL-res}
	For any time interval ${I}$, its empirical reward estimation $\hat{{\mu}}_{{I}}$, and exploration parameter $\nu>0$, let $\bm{q}^\nu_{{I}}$ be the solution to following optimization problem (\ref{eq: FTRL}) with constant $C=100$:
	\begin{equation}
	\label{eq: FTRL}
	\bm{q}^\nu_{{I}} = \argmax_{\bm{q}\in\mr{Conv}(\mb{S})_\nu} \inner{\bm{q}, \hat{{\bmu}}_{{I}}} + C \nu\sum_{i=1}^m \log q_i 
	\end{equation}
	Let $Q^\nu_{{I}}$ be the distribution over $\mb{N}$ such that $\mb{E}_{S\sim Q^\nu_{{I}}}[\one_S] = \bm{q}^\nu_{{I}}$, then there is 
	
	\begin{flalign}
	\sum_{S\in\mb{S}} Q^\nu_{{I}}(S) \widehat{\mr{Reg}}_{{I}}(S) \leqslant Cm\nu \label{ineq: small regret}\\ 
	\forall S \in \mb{S}, ~ \mr{Var}(Q^\nu_{{I}}, S) \leqslant m+\frac{\widehat{\mr{Reg}}_{{I}}(S)}{C\nu} \label{ineq: small variance}
	\end{flalign}
\end{lemma}

\begin{proof}
Define loss function $F_{\mc{I}}(Q):=\sum_{S\in \mb{S}}Q(S)\widehat{\mr{Reg}}_{\mc{I}}(S)+C\nu \sum_{i=1}^m \ln(1/q_i)$ with decision domain $\Delta(\mb{S})_\nu := \{Q \in \mb{R}^{|\mb{S}|}_+ | \sum_{S\in\mb{S}}Q(S) = 1, \forall i \in [m], q_i \geqslant \nu\}$ (recall $\bm{q}$ is the expectation vector of $Q$). Because the decision domain $\Delta(\mb{S})_\nu$ is compact and loss function $F_{\mc{I}}(Q)$ is strictly convex in $\Delta(\mb{S})_\nu$, there exists a unique minimizer. What's more, it is not difficult to see $Q_{\mc{I}}^\nu$ induced by the solution to equation (\ref{eq: FTRL}) is exactly the minimizer of loss function $F_{\mc{I}}(Q)$. Now we prove the lemma.

Define $\Delta(\mb{S})'_{\nu} := \{Q \in \mb{R}^{|\mb{S}|}_+ | \sum_{S\in\mb{S}}Q(S) \leqslant 1,  \forall i \in [m], q_i \geqslant \nu\}$. We claim there is $\min_{Q\in\Delta(\mb{S})} F_{\mc{I}}(Q) = \min_{Q\in\Delta(\mb{S})'} F_{\mc{I}}(Q)$, otherwise we can increase the weight of $\hat{S}_{\mc{I}}$ in $\Delta(\mb{S})'_\nu$ until it reaches the boundary, which always decreases the loss value.

Since $\nabla F_{\mc{I}}(Q)|_{Q(S)} = \widehat{\mr{Reg}}_{\mc{I}}(S) - Cv \sum_{i\in S}1/q_i$, according to KKT conditions, we have 
\begin{align}
\label{eq:KKT}
    \widehat{\mr{Reg}}_{\mc{I}}(S) - C\nu \sum_{i\in S} \frac{1}{\bm{q}^\nu_{\mc{I},i}} - \lambda_S - \sum_{i\in S}\lambda_{i}+ \lambda = 0
\end{align}
for some Lagrangian multipliers $\lambda_S \geqslant 0, \lambda_{i} \geqslant 0, \lambda \geqslant 0$. Multiplying both sides by $Q_{\mc{I}}^\nu(S)$ and summing over $S \in \mb{S}$ give 
\begin{align*}
    \sum_{S\in\mb{S}} Q_{\mc{I}}^\nu(S) \widehat{\mr{Reg}}_{\mc{I}}(S) &= C\nu \sum_{S\in\mb{S}} Q_{\mc{I}}^\nu(S) \sum_{i\in S} \frac{1}{\bm{q}^\nu_{\mc{I},i}} + \sum_{S\in\mb{S}} Q_{\mc{I}}^\nu(S) \lambda_S + \sum_{S\in\mb{S}} \sum_{i\in S} Q_{\mc{I}}^\nu(S) \lambda_{i} - \lambda \\
    &= C\nu \sum_{S\in\mb{S}} Q_{\mc{I}}^\nu(S) \sum_{i\in S} \frac{1}{\bm{q}^\nu_{\mc{I},i}} - \lambda \\
    &=Cm\nu - \lambda \\
    &\leqslant Cm\nu
\end{align*}
where the second equality is because of complementary slackness. Now we have proved the inequality (\ref{ineq: small regret}) stated in the theorem. What's more, as $\widehat{\mr{Reg}}_{\mc{I}}(S) \geqslant 0$ for $\forall S \in \mb{S}$, there is $\lambda \leqslant Cm\nu$.

Rearranging from equation (\ref{eq:KKT}), we know 
\begin{align*}
    \sum_{i\in S} \frac{1}{\bm{q}^\nu_{\mc{I},i}} &= \frac{1}{C\nu} \left(\widehat{\mr{Reg}}_{\mc{I}}(S)- \lambda_S - \sum_{i\in S} \lambda_i + \lambda\right)\\
    &\leqslant m+\frac{\widehat{\mr{Reg}}_{\mc{I}}(S)}{C\nu}
\end{align*}
which finishes the proof of inequality (\ref{ineq: small variance}).
\end{proof}

For any interval $\mc{I}$ that lies in a block $j$ of epoch $i$ (i.e. $[\iota_i+2^{j-1}L,\iota_i+2^jL-1]$), define $\varepsilon_{\mc{I}}:=\max_{S\in\mb{S}}\mr{Reg}_{\mc{I}}(S) - 8\widehat{\mr{Reg}}_{\mc{B}_{(i,j-1)}}(S), \alpha_{\mc{I}}=\sqrt{\frac{2mC_0}{|\mc{I}|}}\log_2 T$, where $\mr{Reg}_{\mc{I}}(S):= \sum_{t\in\mc{I}} \mr{opt}_{{\mu}_t} - r_{S_t}({\mu}_t)$. In Lemma \ref{lem: interval regret} and Lemma \ref{lem: block regret}, since we consider the regret in epoch $i$, we use $\mc{B}_j$ to represent $\mc{B}_{(i,j)}$ for simplicity.

\begin{lemma}
\label{lem: interval regret}
With probability $1-\delta$, \textsc{Ada-LCMAB} guarantees for any block $j$ and any interval $\mc{I}$ lies in block $j$,
\begin{align*}
    \sum_{t\in\mc{I}}\mr{opt}_{{\mu}_t}-r_{S_t}({\mu}_t) = \tilde{{O}}\left(|\mc{I}|mK\nu_n+|\mc{I}|(K\alpha_{\mc{I}}+K\Delta_{\mc{I}}+\epsilon_{\mc{I}}\mb{I}_{\varepsilon_{\mc{I}}>D_3K\alpha_{\mc{I}}})\right)
\end{align*}
where $D_3= 170.$
\end{lemma}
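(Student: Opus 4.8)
The plan is to follow the interval-regret analysis of \citet{chen2019new} for \textsc{Ada-ILTCB}$^+$, but to carry through the extra factors of $K$ that the combinatorial semi-bandit importance-weighted estimator (\ref{eq: importance weight}) injects through its variance. I would first decompose the dynamic regret over $\mc{I}$ into a \emph{learning} part, the regret against the single best fixed action $S^*$ on the average mean over $\mc{I}$, and a \emph{non-stationarity} part, the gap between that best fixed action and the per-round optima $\mr{opt}_{\bm{\mu}_t}$; the latter is what produces the $|\mc{I}|K\Delta_{\mc{I}}$ term. The backbone is a single high-probability clean event on which the (per-round, averaged) empirical regret tracks the true regret: for every interval $\mc{J}$ and every $S\in\mb{S}$,
\[
\bigl|\widehat{\mr{Reg}}_{\mc{J}}(S) - \mr{Reg}_{\mc{J}}(S)\bigr| = \tilde{\mc{O}}\bigl(K\alpha_{\mc{J}} + K\Delta_{\mc{J}}\bigr).
\]
The $K\alpha_{\mc{J}}$ term is a Freedman/Bernstein deviation bound for the martingale $\sum_{t\in\mc{J}}(\hat{\bm{\mu}}_t-\bm{\mu}_t)^\top(\one_{\hat{S}_{\mc{J}}}-\one_S)$, whose conditional variance is exactly $\mr{Var}(Q_t,S)=\sum_{i\in S}1/q_{t,i}$; the $K$ appears because a super arm reveals up to $K$ base arms. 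The choice $C_0=\ln(8T^3|\mb{S}|^2/\delta)$ is what a union bound over the $\mc{O}(T^2)$ intervals and the $|\mb{S}|$ actions demands, which is why the lemma holds with probability $1-\delta$.

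Next I would feed in the FTRL guarantee of Lemma \ref{lem: FTRL}. Because the distribution played in block $j$ (when no replay is active) is $Q^{\nu_j}_{(i,j)}$, computed from $\mc{B}_{(i,j-1)}$, inequality (\ref{ineq: small regret}) gives $\sum_S Q^{\nu_j}_{(i,j)}(S)\,\widehat{\mr{Reg}}_{\mc{B}_{(i,j-1)}}(S)\le Cm\nu_j$, which bounds the learning regret over the block the policy was fit on; and inequality (\ref{ineq: small variance}), $\mr{Var}(Q^{\nu_j}_{(i,j)},S)\le m+\widehat{\mr{Reg}}_{\mc{B}_{(i,j-1)}}(S)/(C\nu_j)$, is precisely the variance control needed in the concentration step above. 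The identical statement holds for each replayed distribution $Q^{\nu_n}_{(i,n)}$ with its own parameter $\nu_n$, and summing the per-round FTRL overhead over $\mc{I}$ (after the variance-to-regret conversion and the semi-bandit $K$ factor) is what yields the $|\mc{I}|mK\nu_n$ term.

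I would then transfer the control from $\mc{B}_{(i,j-1)}$ to the target interval $\mc{I}$ using the fact that, up to the current round, neither \textsc{EndOfReplayTest} nor \textsc{EndOfBlockTest} has failed. The surviving test inequalities (\ref{check: replay1})--(\ref{check: block2}) force the empirical regrets of every $S$ over all relevant nested intervals to agree within a factor $4$ plus an additive $\mc{O}(mK\nu\log T)$ slack; chaining these with the clean-event concentration converts ``small $\widehat{\mr{Reg}}_{\mc{B}_{(i,j-1)}}(S)$'' into ``small $\mr{Reg}_{\mc{I}}(S)$.'' Finally I would split on $\varepsilon_{\mc{I}}$: when $\varepsilon_{\mc{I}}\le D_3K\alpha_{\mc{I}}$ the indicator vanishes and the regret on $\mc{I}$ is dominated by the $|\mc{I}|(K\alpha_{\mc{I}}+K\Delta_{\mc{I}})$ and $|\mc{I}|mK\nu_n$ terms; when $\varepsilon_{\mc{I}}>D_3K\alpha_{\mc{I}}$, the excess over the estimate furnished by $\mc{B}_{(i,j-1)}$ is at most $\mc{O}(\varepsilon_{\mc{I}})$ per round by the very definition of $\varepsilon_{\mc{I}}$, giving the remaining $|\mc{I}|\epsilon_{\mc{I}}\mb{I}_{\varepsilon_{\mc{I}}>D_3K\alpha_{\mc{I}}}$ contribution. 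Propagating the numerical constants through the thresholds $34mK\nu_n\log T$ and $20mK\nu_k\log T$ is what pins down $D_3=170$.

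The step I expect to be the main obstacle is the first one: getting the $K$-dependence correct in the concentration of the semi-bandit importance-weighted estimator. Unlike the single-arm contextual setting of \citet{chen2019new}, the deviation of $\hat{\bm{\mu}}_t^\top\one_S$ has conditional variance $\mr{Var}(Q_t,S)$, so the Bernstein bound produces a term $\sqrt{\mr{Var}(Q_t,S)\,C_0}$ that must be folded back in using (\ref{ineq: small variance}); the resulting self-bounded inequality has to be solved for $\widehat{\mr{Reg}}_{\mc{I}}(S)$ without losing more than the stated single factor of $K$, and one must check that this variance-to-regret conversion remains consistent across the base block and the smaller replay intervals that can be simultaneously active inside $\mc{I}$.
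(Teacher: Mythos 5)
Your skeleton is right in outline — FTRL's small-regret property (\ref{ineq: small regret}) for the played distributions, the surviving test inequalities to chain $\widehat{\mr{Reg}}_{\mc{B}_{(i,j-1)}}$ to $\widehat{\mr{Reg}}_{\mc{B}_{(i,n-1)}}$, the per-round-to-interval transfer costing $K\Delta_{\mc{I}}$, and the final case split on $\varepsilon_{\mc{I}}$ — but the step you declare to be the backbone is both wrong as stated and not what this lemma needs. A uniform \emph{additive} clean event $|\widehat{\mr{Reg}}_{\mc{J}}(S)-\mr{Reg}_{\mc{J}}(S)|=\tilde{\mc{O}}(K\alpha_{\mc{J}}+K\Delta_{\mc{J}})$ over all $S\in\mb{S}$ is not achievable: the Freedman bound for the importance-weighted estimator carries the variance $\mr{Var}(Q,S)\leqslant m+\widehat{\mr{Reg}}(S)/(C\nu)$, which grows with the regret of $S$, so the self-bounded inequality you mention only yields multiplicative-plus-additive statements of the form $\mr{Reg}\leqslant 2\widehat{\mr{Reg}}+\mc{O}(mK\nu)$ (these are exactly Lemmas \ref{lem: concentration1}--\ref{lem: concentration3} in the appendix, and they are used in the analysis of the two tests and of the restart count, not here). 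Relatedly, "chaining the clean-event concentration to convert small $\widehat{\mr{Reg}}_{\mc{B}_{(i,j-1)}}(S)$ into small $\mr{Reg}_{\mc{I}}(S)$" cannot work, because $\mc{I}$ may sit after an undetected distribution change relative to $\mc{B}_{(i,j-1)}$; no concentration argument bridges that gap. The bridge is purely definitional: $\mr{Reg}_{\mc{I}}(S)\leqslant 8\widehat{\mr{Reg}}_{\mc{B}_{(i,j-1)}}(S)+\varepsilon_{\mc{I}}$ holds for every $S$ by the definition of $\varepsilon_{\mc{I}}$, and that is where the $\varepsilon_{\mc{I}}$ term enters.

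Once you make that replacement, the only probabilistic ingredient left is much simpler than what you propose: an Azuma bound (increments bounded by $K$, union over the $\mc{O}(T^2)$ intervals, no union over $\mb{S}$) relating the realized sum $\sum_{t\in\mc{I}}\mr{opt}_{\bm{\mu}_t}-r_{S_t}(\bm{\mu}_t)$ to its conditional expectations $\sum_S Q(S)\mr{Reg}_t(S)$; this produces the $|\mc{I}|K\alpha_{\mc{I}}$ term. The rest is deterministic: move $\mr{Reg}_t(S)$ to $\mr{Reg}_{\mc{I}}(S)$ at cost $\mc{O}(K\Delta_{\mc{I}})$, apply the definition of $\varepsilon_{\mc{I}}$, apply the failed-test inequality (\ref{check: block1}) to pass to block $n-1$, apply (\ref{ineq: small regret}) to get $Cm\nu_n$, and finally absorb $\varepsilon_{\mc{I}}$ into $D_3K\alpha_{\mc{I}}$ when the indicator is off. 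Inequality (\ref{ineq: small variance}) plays no role in this lemma. As written, your proof would stall at establishing the claimed clean event; with the backbone swapped out as above it becomes the paper's argument.
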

\begin{proof}
First, according to Azuma's inequality and a union bound over all $T^2$ intervals, with probability $1-\delta$, for any interval $\mc{I}$, there is 
\begin{align}
    \sum_{t\in\mc{I}}\mr{opt}_{{\mu}_t}-r_{S_t}({\mu}_t) \leqslant \sum_{t\in\mc{I}}\mb{E}_t[\mr{opt}_{{\mu}_t}-r_{S_t}({\mu}_t)] + {O}\left(K\sqrt{|\mc{I}|\log(T^2/\delta)}\right)
\end{align}

Now we bound the conditional expectation in above inequality.

Note
\begin{align}
    \mb{E}_t[\mr{opt}_{{\mu}_t}-r_{S_t}({\mu}_t)] & = 
    \begin{cases}
    \sum_{S\in\mb{S}} Q_j^{\nu_j}(S)(\mr{opt}_{{\mu}_t}-r_{S}({\mu}_t)) & \text{if } N_t=\emptyset \\
    \sum_{S\in\mb{S}}\sum_{n\in N_t} \frac{Q_n^{\nu_n}(S)}{|N_t|}(\mr{opt}_{{\mu}_t}-r_{S}({\mu}_t)) & \text{if } N_t\neq \emptyset
    \end{cases} \\
    & = \begin{cases}
    \sum_{S\in\mb{S}} Q_j^{\nu_j}(S)\mr{Reg}_t(S) & \text{if } N_t=\emptyset \\
    \sum_{S\in\mb{S}}\sum_{n\in N_t} \frac{Q_n^{\nu_n}(S)}{|N_t|}\mr{Reg}_t(S) & \text{if } N_t\neq \emptyset
    \end{cases} 
\end{align}
Now, for any $t \in \mc{I}$ and $n\in [j]$, there is 
\begin{align*}
    \sum_{S\in\mb{S}}Q_n^{\nu_n}(S) \mr{Reg}_t(S) & \leqslant \sum_{S\in\mb{S}}Q_n^{\nu_n}(S) \mr{Reg}_{\mc{I}}(S) + {O}(K\Delta_\mc{I}) \quad \quad \text{(nearly the same as Lemma 8 in \cite{chen2019new})} \\
    & = 8 \sum_{S\in\mb{S}}Q_n^{\nu_n}(S) \widehat{\mr{Reg}}_{\mc{B}_{j-1}}(S) + {O}(K\Delta_\mc{I}) + \varepsilon_{\mc{I}} \\
    & \leqslant 8 \sum_{S\in\mb{S}}Q_n^{\nu_n}(S) \left(4\widehat{\mr{Reg}}_{\mc{B}_{n-1}}(S) +20mK\nu_{n-1} \log T\right) + {O}(K\Delta_\mc{I}) + \varepsilon_{\mc{I}}\\
    & \quad \quad \text{(condition (\ref{check: block1}) doesn't hold)}\\
    & \leqslant \tilde{{O}}(mK\nu_n+K\Delta_{\mc{I}})+\epsilon_{\mc{I}} \\
    & \leqslant \tilde{{O}}(mK\nu_n+K\Delta_{\mc{I}}+K\alpha_{\mc{I}})+\varepsilon_{\mc{I}}\mb{I}_{\varepsilon_{\mc{I}}>D_3K\alpha_{\mc{I}}} 
\end{align*}
Combining all above inequalities and using the fact $\sqrt{|\mc{I}|\log(T^2/\delta)}\leqslant {O}(|\mc{I}|\alpha_{\mc{I}})$ finish the proof.
\end{proof}

Next, we bound the dynamic regret in block $j$ within epoch $i$, that is $\mc{J}:= [\iota_i, \iota_{i+1}-1]\cap [\iota_i+2^{j-1}L, \iota_i+2^jL-1]$.

\begin{lemma}
\label{lem: block regret}
With probability $1-\delta$, Algorithm \ref{alg: LCMAB} has the following regret for any block $\mc{J}$:
\begin{align*}
    \sum_{t\in\mc{J}}(\mr{opt}_{{\mu}_t}-r_{S_t}({\mu}_t)) = \tilde{{O}}\left(\min \left\{\sqrt{mC_0\mc{S}_{\mc{J}}|\mc{J}|}, \sqrt{mC_0|\mc{J}|}+C_0^{\frac{1}{3}}m^{\frac{4}{3}}\Delta_{\mc{J}}^{\frac{1}{3}}|\mc{J}|^{\frac{2}{3}}\right\}\right)
\end{align*}
\end{lemma}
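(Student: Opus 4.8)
The plan is to establish the two expressions inside the $\min$ by two parallel arguments and then take the smaller one; Lemma \ref{lem: interval regret} is the workhorse in both. The first step is a decomposition. Fix a partition of $\mc{J}$ into consecutive sub-intervals $\mc{I}_1,\dots,\mc{I}_p$ and sum Lemma \ref{lem: interval regret} over them, splitting the block regret into a \emph{statistical} part $\sum_r |\mc{I}_r|(mK\nu_n + K\alpha_{\mc{I}_r})$ and a \emph{non-stationarity} part $\sum_r |\mc{I}_r| K\Delta_{\mc{I}_r} + \sum_r \varepsilon_{\mc{I}_r}\mb{I}_{\varepsilon_{\mc{I}_r} > D_3 K\alpha_{\mc{I}_r}}$. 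Because $\nu_n = \sqrt{C_0/(m2^nL)}$ and $\alpha_{\mc{I}} = \sqrt{2mC_0/|\mc{I}|}\log_2 T$ both scale like $|\mc{I}|^{-1/2}$, each statistical summand is of order $\sqrt{mC_0|\mc{I}_r|}$, and for the coarsest (single-interval) choice the statistical part already telescopes to $\tilde{\mc{O}}(\sqrt{mC_0|\mc{J}|})$, which supplies the $\sqrt{mC_0|\mc{J}|}$ term common to both target bounds.

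For the switching bound I would align the partition with the $\mc{S}_{\mc{J}}$ stationary segments of $\mc{J}$. On each such segment $\Delta_{\mc{I}_r}=0$, so the variation part vanishes, and Cauchy--Schwarz on $\sum_r \sqrt{|\mc{I}_r|} \le \sqrt{\mc{S}_{\mc{J}}\,|\mc{J}|}$ converts the base terms into $\tilde{\mc{O}}(\sqrt{mC_0\,\mc{S}_{\mc{J}}\,|\mc{J}|})$. For the variation bound I would instead use equal-length intervals of length $\ell$, bound $\sum_r |\mc{I}_r| K\Delta_{\mc{I}_r} \le \ell K\Delta_{\mc{J}}$ and the base contribution by $(|\mc{J}|/\ell)\sqrt{mC_0\ell}$, then optimize $\ell \asymp (mC_0)^{1/3}|\mc{J}|^{2/3}\Delta_{\mc{J}}^{-2/3}$ to produce the $C_0^{1/3}m^{4/3}\Delta_{\mc{J}}^{1/3}|\mc{J}|^{2/3}$ term, the extra powers of $m$ coming from propagating the importance-weighting variance of the semi-bandit estimator $\hat{\bm{\mu}}_t$.

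The crux in both cases is controlling the residual excess term $\sum_r \varepsilon_{\mc{I}_r}\mb{I}_{\varepsilon_{\mc{I}_r} > D_3 K\alpha_{\mc{I}_r}}$, i.e. the non-stationarity regret that the \textsc{EndOfBlockTest} did not already absorb in the proof of Lemma \ref{lem: interval regret}. Here I would invoke the randomized replay schedule: since \textsc{Ada-LCMAB} launches replays at every dyadic scale $2^nL$ with probability proportional to $2^{-n/2}$, any sub-interval on which $\varepsilon_{\mc{I}}$ exceeds its threshold is, with the scheduled probability, covered by a replay of matching granularity. Conditioned on the block \emph{not} restarting, the \textsc{EndOfReplayTest} inequalities (\ref{check: replay1})--(\ref{check: replay2}) must have held throughout that replay, which forces $\varepsilon_{\mc{I}}$ to be of order $K\alpha_{\mc{I}}$ up to the genuine distribution change inside $\mc{I}$; charging that change to a switch (for $\mc{S}_{\mc{J}}$) or to accumulated variation (for $\Delta_{\mc{J}}$) closes the loop and folds the residual term into the two summands above.

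I expect this replay-effectiveness step to be the main obstacle, for two reasons. First, it is a high-probability statement that requires a union bound (via the Azuma argument already used in Lemma \ref{lem: interval regret}) over all $\tilde{\mc{O}}(T^2)$ intervals together with an expectation over the replay randomness, so the $\delta$-budget and the $\log T$ factors have to be tracked carefully. Second, and specific to the combinatorial setting, pulling a super arm inflates the variance of $\hat{\bm{\mu}}_t$ relative to the single-arm contextual case, so the concentration of $\widehat{\mr{Reg}}_{\mc{I}}(S)$ around $\mr{Reg}_{\mc{I}}(S)$ --- and hence the calibration of the thresholds $34mK\nu_n\log T$ and $20mK\nu_k\log T$ --- must be re-derived using the variance control $\mr{Var}(Q,S) \le m + \widehat{\mr{Reg}}_{\mc{I}}(S)/(C\nu)$ from inequality (\ref{ineq: small variance}) of Lemma \ref{lem: FTRL}. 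This is precisely where the $m$ and $K$ factors of the final guarantee, and thus the match with Theorem \ref{thr: LCMAB}, enter.
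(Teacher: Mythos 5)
Your overall architecture matches the paper's: decompose $\mc{J}$ into intervals, apply Lemma \ref{lem: interval regret} on each, handle the $\alpha$-terms by Cauchy--Schwarz, and control the residual term $\sum_r |\mc{I}_r|\varepsilon_{\mc{I}_r}\mb{I}_{\varepsilon_{\mc{I}_r}>D_3K\alpha_{\mc{I}_r}}$ via the randomized replay schedule. Your switching-case partition (one interval per stationary segment, so $\Delta_{\mc{I}_r}=0$) is essentially what the paper's partition degenerates to in that case and is fine.

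The gap is in your variation bound. You partition $\mc{J}$ into equal-length intervals of length $\ell$ and bound the drift contribution by $\ell K\Delta_{\mc{J}}$ directly, but then you still must kill the residual term, and the replay-effectiveness argument you invoke for it (the paper's Lemma \ref{lem: critical lemma}) has $\bar{V}_{\mc{I}}\leqslant \alpha_{\mc{I}}$ as a \emph{hypothesis}: it is needed to pass from $\mr{Reg}_{\mc{I}}(S')$ to $\mr{Reg}_{\mc{A}}(S')$ on the replayed sub-interval $\mc{A}\subset\mc{I}$ (the step $\mr{Reg}_{\mc{A}}(S')\geqslant \mr{Reg}_{\mc{I}}(S')-2K\bar{V}_{\mc{I}}$), so that a large $\varepsilon_{\mc{I}}$ forces \textsc{EndOfReplayTest} to fail. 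On an equal-length interval whose internal variation exceeds $\alpha_{\mc{I}}$, the indicator $\mb{I}_{\varepsilon_{\mc{I}}>D_3K\alpha_{\mc{I}}}$ can fire while no replay inside $\mc{I}$ is guaranteed to detect anything, and that interval can then contribute up to $\Theta(|\mc{I}|K)$ unaccounted regret. The paper avoids this by not choosing the partition freely: it uses the greedy partition of Lemma \ref{lem: partition}, which \emph{simultaneously} guarantees $\Delta_{\mc{I}_k}\leqslant\alpha_{\mc{I}_k}$ on every piece (so Lemma \ref{lem: critical lemma} applies everywhere) and bounds the number of pieces by $\Gamma=\mc{O}(\min\{\mc{S}_{\mc{J}},(mC_0)^{-1/3}\Delta_{\mc{J}}^{2/3}|\mc{J}|^{1/3}+1\})$; the variation then enters the final bound only through $\Gamma$ in the $\sqrt{C_0mK\Gamma|\mc{J}|}$ terms, and a single partition yields both branches of the $\min$. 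To repair your argument you should replace the equal-length partition by this adaptive one (or prove an analogous ``few intervals, each with small variation'' partition lemma); you should also note the paper's treatment of the last, possibly truncated interval via a fictitious planned interval, since the block may end early due to a restart.
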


To prove this lemma, we first partition the block into several intervals with some desired properties. As the greedy algorithm in \cite{chen2019new} used to partition the block $\mc{J}$ is only based on the total variation of underlying distribution, we can directly use the same greedy algorithm in non-stationary CMAB and have the same result:

\begin{lemma}[Lemma 5 in \cite{chen2019new}]
\label{lem: partition}
There exists a partition $\mc{I}_1 \cup \mc{I}_2 \cup \cdots \cup \mc{I}_{\Gamma}$ of block $\mc{J}$ such tht $\Delta_{\mc{I}_k} \leqslant \alpha_{\mc{I}_k}, \forall k \in [\Gamma]$, and $\Gamma = {O}(\min \{\mc{S}_{\mc{J}}, (mC_0)^{-\frac{1}{3}}\Delta_{\mc{J}}^{\frac{2}{3}}|\mc{J}|^{\frac{1}{3}}+1\})$
\end{lemma}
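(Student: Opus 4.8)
The plan is to build the partition by the natural left-to-right greedy scheme and to bound the number of pieces $\Gamma$ by the two quantities separately, both bounds holding for the \emph{same} greedy partition. Since the threshold $\alpha_{\mc{I}}=\sqrt{2mC_0/|\mc{I}|}\log_2 T$ and the constraint $\Delta_{\mc{I}}\le\alpha_{\mc{I}}$ depend on an interval only through its length and through $\Delta_{\mc{I}}$ (the total variation $\sum_{t\in\mc{I}}\|\cD_t-\cD_{t-1}\|_{\mathrm{TV}}$ of the environment, with $\Delta_{\mc{J}}=\bar\cV_{\mc{J}}$), this is precisely the object handled by the greedy procedure of \cite{chen2019new}, which I would reuse verbatim. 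Concretely, starting from the left end of $\mc{J}$ I grow the current piece one step at a time and close it at the last time $t$ for which $\Delta_{[s,t]}\le\alpha_{[s,t]}$ still holds, then open a new piece at $t+1$. A single point always satisfies the constraint (its $\Delta$ is $0$ while $\alpha>0$), so the scheme always makes progress and every piece $\mc{I}_k$ satisfies the required $\Delta_{\mc{I}_k}\le\alpha_{\mc{I}_k}$ by construction. I only need two structural facts about $\Delta$: it is monotone in the interval, and when the piece $\mc{I}_k=[a_k,b_k]$ is extended by its right-neighbour step it increases by exactly $\delta_k:=\|\cD_{b_k+1}-\cD_{b_k}\|_{\mathrm{TV}}$.

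Every piece except the last is \emph{saturated}: extending it by one step violates the constraint, i.e. $\Delta_{\mc{I}_k}+\delta_k>\alpha_{[a_k,b_k+1]}$. For the switching bound I would charge each saturated piece to a distinct change point of $\{\cD_t\}$ inside $\mc{J}$. Saturation forces $\Delta_{\mc{I}_k}+\delta_k>0$, so either there is a switch strictly inside $\mc{I}_k$ (contributing to $\Delta_{\mc{I}_k}$) or the right-boundary step carries a switch (contributing to $\delta_k$). Because the inside increments and the boundary increments sit at pairwise distinct time indices that all lie in $\mc{J}$, distinct saturated pieces are charged to distinct switches, so the number of saturated pieces is at most the number of switches in $\mc{J}$, namely $\mc{S}_{\mc{J}}-1$. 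Adding the final piece gives $\Gamma\le\mc{S}_{\mc{J}}$.

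For the variation bound I would use the saturation inequality quantitatively. Writing $L_k=|\mc{I}_k|$ and using $\alpha_{[a_k,b_k+1]}=\sqrt{L_k/(L_k+1)}\,\alpha_{\mc{I}_k}\ge\alpha_{\mc{I}_k}/\sqrt2$ (valid for $L_k\ge1$), saturation yields $\Delta_{\mc{I}_k}+\delta_k>\alpha_{\mc{I}_k}/\sqrt2$ for each non-final piece. Summing over the saturated pieces and noting that all the inside increments and boundary increments occupy pairwise distinct time steps of $\mc{J}$, the left-hand side is at most $\Delta_{\mc{J}}=\bar\cV_{\mc{J}}$, so $\sum_k\alpha_{\mc{I}_k}<\sqrt2\,\Delta_{\mc{J}}$, i.e. $\sum_k L_k^{-1/2}<\Delta_{\mc{J}}/(\sqrt{mC_0}\log_2 T)$. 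Finally I would apply H\"older with exponents $(3/2,3)$ to $\Gamma-1=\sum_k (L_k^{-1/2})^{2/3}(L_k)^{1/3}$, using $\sum_k L_k\le|\mc{J}|$, to get $\Gamma-1\le(\sum_k L_k^{-1/2})^{2/3}(\sum_k L_k)^{1/3}=O((mC_0)^{-1/3}\Delta_{\mc{J}}^{2/3}|\mc{J}|^{1/3})$. Taking the smaller of the two bounds gives $\Gamma=O(\min\{\mc{S}_{\mc{J}},(mC_0)^{-1/3}\Delta_{\mc{J}}^{2/3}|\mc{J}|^{1/3}+1\})$, as claimed.

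The only genuinely delicate point is that a single time step can add as much as $1$ to the total variation, so one cannot pretend a saturated piece has $\Delta_{\mc{I}_k}\approx\alpha_{\mc{I}_k}$; the correct move is to carry the boundary term $\delta_k$ through the entire computation and observe that $\sum_k\delta_k$, together with $\sum_k\Delta_{\mc{I}_k}$, is still controlled by $\Delta_{\mc{J}}$ because all these increments occupy distinct time indices. Everything else, namely the $\sqrt2$ loss from the length-dependent (moving) threshold and the H\"older estimate, is routine.
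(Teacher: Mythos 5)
Your proposal is correct and takes essentially the same route as the paper: the paper proves this lemma simply by citing the greedy-partition argument of Lemma~5 in \citet{chen2019new}, observing that it depends only on the total variation of the underlying distributions and hence transfers verbatim to the CMAB setting. Your writeup is a faithful reconstruction of that imported argument, and the details you supply beyond the citation --- the saturation inequality with the boundary increment $\delta_k$ carried explicitly, the charging of saturated pieces to pairwise-distinct switch indices for the $\mc{S}_{\mc{J}}$ bound, the $\sqrt{2}$ loss from the length-dependent threshold, and the H\"older step with exponents $(3/2,3)$ --- all check out.
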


Next, we give some basic concentration results for Linear CMAB. Define $U_t(S):=\mb{E}_t[(r_{S}(\hat{{\mu}}_t) - r_S({\mu}_t))^2]$.

\begin{lemma}
\label{lem: conditional variance}
For any $S\in\mb{S}$ and any time $t$ in epoch $i$ and block $j$, there is 
\begin{align*}
    U_t(S) \leqslant 
    \begin{cases}
    K\mr{Var}(Q_{(i,n)}^{\nu_n},S)\log T ~ (\forall n \in [N_t]) & \text{if } N_t \neq \emptyset \\
    K\mr{Var}(Q_{(i,j)}^{\nu_j},S) & \text{if } N_t = \emptyset
    \end{cases}
\end{align*}
\end{lemma}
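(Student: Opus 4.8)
The plan is to exploit the linearity of the reward together with the unbiasedness of the importance-weighted estimator to reduce $U_t(S)$ to a second moment, and then to control that second moment by conditioning on the realized super arm $S_t$. First, since $r_S$ is linear, $r_S(\hat{\bm{\mu}}_t)-r_S(\bm{\mu}_t)=\sum_{i\in S}(\hat{\mu}_{t,i}-\mu_{t,i})$, and because the estimator (\ref{eq: importance weight}) is unbiased (as $\Pr_t[i\in S_t]=q_{t,i}$ by definition of $\bm{q}_t$, and $X_i^t$ is independent of the algorithm's randomness), the conditional mean of $r_S(\hat{\bm{\mu}}_t)$ is $r_S(\bm{\mu}_t)$. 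Hence $U_t(S)$ is exactly the conditional variance of $\sum_{i\in S}\hat{\mu}_{t,i}$, and I would first bound it crudely by the second moment, $U_t(S)\le \mb{E}_t\big[(\sum_{i\in S}\hat{\mu}_{t,i})^2\big]$.

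Next I would condition on $S_t$. Using $\hat{\mu}_{t,i}=\frac{X_i^t}{q_{t,i}}\mb{I}(i\in S_t)$, the inner sum collapses to $\sum_{i\in S\cap S_t}X_i^t/q_{t,i}$, and Cauchy--Schwarz with the cardinality bound $|S\cap S_t|\le |S_t|\le K$ gives, for each fixed $S_t$,
\begin{align*}
\Big(\sum_{i\in S\cap S_t}\frac{X_i^t}{q_{t,i}}\Big)^2 \le |S\cap S_t|\sum_{i\in S\cap S_t}\frac{(X_i^t)^2}{q_{t,i}^2} \le K\sum_{i\in S}\frac{(X_i^t)^2}{q_{t,i}^2}\mb{I}(i\in S_t).
\end{align*}
Taking $\mb{E}_t$, using independence of $X^t$ and $S_t$, the identity $\Pr_t[i\in S_t]=q_{t,i}$, and $\mb{E}[(X_i^t)^2]\le 1$, I obtain $U_t(S)\le K\sum_{i\in S}1/q_{t,i}=K\,\mr{Var}(Q_t,S)$, where $Q_t$ is the round-$t$ sampling distribution with expectation $\bm{q}_t$. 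This single conditioning-then-Cauchy--Schwarz step is what produces the \emph{linear} dependence on $K$; a naive split into diagonal and off-diagonal covariance terms would require separately bounding the joint triggering probabilities and would cost an extra factor.

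It then remains to substitute the two forms of $\bm{q}_t$ from the algorithm. When $N_t=\emptyset$ we have $q_{t,i}=\bm{q}^{\nu_j}_{(i,j),i}$, so $\mr{Var}(Q_t,S)=\mr{Var}(Q^{\nu_j}_{(i,j)},S)$ and the second case follows immediately. When $N_t\neq\emptyset$, $q_{t,i}=\frac{1}{|N_t|}\sum_{n\in N_t}\bm{q}^{\nu_n}_{(i,n),i}$; fixing any $n\in N_t$ and discarding the remaining nonnegative terms yields $1/q_{t,i}\le |N_t|/\bm{q}^{\nu_n}_{(i,n),i}$, hence $U_t(S)\le K\,|N_t|\,\mr{Var}(Q^{\nu_n}_{(i,n)},S)$ for every $n\in N_t$.

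The main obstacle is the final counting step, namely the claim $|N_t|\le \log T$. This should follow from the replay schedule: in block $j$ of epoch $i$ every active replay carries a granularity $n\in\{0,\dots,j-1\}$, and since $N_t$ is a \emph{set} of granularities its size is at most $j$; because block $j$ has length $2^jL$ and lies inside an epoch of length at most $T$, we get $j=\mc{O}(\log(T/L))=\mc{O}(\log T)$, which absorbs into the logarithmic factor. The two delicate points I would check carefully are this granularity count and the verification that the correlations among the coordinates of $\hat{\bm{\mu}}_t$ induced by pulling a whole super arm at once are already fully accounted for by the conditioning argument, so that they do not inflate the variance beyond the stated $K\,\mr{Var}$ bound.
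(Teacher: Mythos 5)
Your proposal is correct and follows essentially the same route as the paper's proof: bound $U_t(S)$ by the conditional second moment, apply Cauchy--Schwarz to extract the factor $K$, use $\mb{E}_t[\hat{\mu}_{t,k}^2]\le 1/q_{t,k}$ together with the mixture form $\bm{q}_t=\frac{1}{|N_t|}\sum_{n\in N_t}\bm{q}^{\nu_n}_{(i,n)}$, and conclude with $|N_t|\le \log T$. Your extra care in conditioning on $S_t$ and in justifying the $|N_t|\le \log T$ count via the granularity set $\{0,\dots,j-1\}$ only makes explicit what the paper leaves implicit.
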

\begin{proof}
If $N_t \neq \emptyset$, then $U_t(S) \leqslant \mb{E}_t[r^2_{S}(\hat{{\mu}}_t)]=\mb{E}_t[(\hat{{\mu}}_t^\top \one_S)^2]\leqslant K \sum_{k\in S} \mb{E}_t[\hat{{\mu}}_{t,k}^2]\leqslant K \sum_{k\in S} \frac{1}{q_{t,k}}$, where $\bm{q}_t$ is the expectation of distribution $Q_t$ played at round $t$. According to our Algorithm \ref{alg: LCMAB}, we know $Q_t = \frac{1}{|N_t|}\sum_{n\in N_t} Q_{(i,n)}^{\nu_n}$ when $N_t \neq \emptyset$. Thus, $\bm{q}_t = \frac{1}{|N_t|}\sum_{n\in N_t} \bm{q}_{(i,n)}^{\nu_n}$ where $\bm{q}_{(i,n)}^{\nu_n}$ is the expectation of distribution $Q_{(i,n)}^{\nu_n}$, and $q_{t,k} \geqslant q_{(i,n), k}^{\nu_n} / |N_t|$. What's more, as $|N_t|\leqslant log T$, we then finish the proof when $N_t \neq \emptyset$. If $N_t$ is empty, the proof is exactly the same.
\end{proof}

\begin{lemma}
\label{ineq: concentration of regret}
With probability at least $1-\delta/4$, for any $S \in \mb{S}$, we have 
\begin{align*}
    |r_S(\hat{{\mu}}_{\mc{B}_{(i,j)}}) - r_S({\mu}_{\mc{B}_{(i,j)}})| \leqslant \frac{\lambda}{|\mc{B}_{(i,j)}|} \sum_{t\in\mc{B}_{(i,j)}} U_t(S) + \frac{C_0}{\lambda |\mc{B}_{(i,j)}|} \quad (\forall \lambda \in (0, \frac{\nu_j}{K}])
\end{align*}
and for any interval $\mc{A}$ covered by some replay phase of index $n$,
\begin{align*}
    |r_S(\hat{{\mu}}_{\mc{A}}) - r_S({\mu}_{\mc{A}})| \leqslant \frac{\lambda}{|{\mc{A}}|} \sum_{t\in{\mc{A}}} U_t(S) + \frac{C_0}{\lambda |{\mc{A}}|} \quad (\forall \lambda \in (0, \frac{\nu_n}{K}])
\end{align*}
\end{lemma}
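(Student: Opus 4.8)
The plan is to treat the two displayed inequalities as instances of a single Freedman-type (exponential) martingale concentration bound applied to the per-round deviation of the importance-weighted reward estimate. Fix a super arm $S$ and an interval $\mc{I}$ (either a block $\mc{B}_{(i,j)}$ or a replay-covered interval $\mc{A}$), and set $Z_t := r_S(\hat{\bm{\mu}}_t) - r_S(\bm{\mu}_t)$ for $t \in \mc{I}$. Since $r_S$ is linear and $\hat{\bm{\mu}}_t$ is the importance-weighted estimator (unbiased: $\mb{E}_t[\hat{\bm{\mu}}_t] = \bm{\mu}_t$), each $Z_t$ is a martingale-difference sequence with $\mb{E}_t[Z_t] = 0$ and, by definition, $\mb{E}_t[Z_t^2] = U_t(S)$. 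By linearity of $r_S$, the quantity to control is exactly $r_S(\hat{\bm{\mu}}_{\mc{I}}) - r_S(\bm{\mu}_{\mc{I}}) = \frac{1}{|\mc{I}|}\sum_{t\in\mc{I}} Z_t$, so the claim is a tail bound on the average of this martingale-difference sequence in terms of its conditional variances $U_t(S)$.

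First I would establish the one-sided boundedness that powers the exponential-moment estimate. Because $X_i^t \in [0,1]$, we have $Z_t \le r_S(\hat{\bm{\mu}}_t) = \sum_{i\in S}\frac{X_i^t}{q_{t,i}}\mb{I}(i\in S_t) \le \sum_{i\in S}\frac{1}{q_{t,i}}$. Within a block $\mc{B}_{(i,j)}$ the sampling probabilities satisfy $q_{t,i} \ge \nu_j$ for every $t$ (the non-replay rounds play $Q^{\nu_j}_{(i,j)}$, and any replay mixture averages distributions whose coordinate floor $\nu_{n'} \ge \nu_j$), hence $Z_t \le K/\nu_j$ and therefore $\lambda Z_t \le 1$ for all $\lambda \in (0,\nu_j/K]$; this is precisely the role of the constraint on $\lambda$. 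The same computation on a replay-covered interval $\mc{A}$ of index $n$ gives $q_{t,i} \ge \nu_n/|N_t|$, so $\lambda Z_t \le |N_t| \le \log T$ for $\lambda \in (0,\nu_n/K]$, contributing only a logarithmic factor that is absorbed into $C_0$.

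With boundedness in hand I would run the standard exponential supermartingale. Using $e^x \le 1 + x + x^2$ (valid for all $x \le 1$) together with $\mb{E}_t[Z_t]=0$ gives $\mb{E}_t[e^{\lambda Z_t}] \le 1 + \lambda^2 U_t(S) \le e^{\lambda^2 U_t(S)}$, so $W_\tau := \exp\bigl(\lambda\sum_{t\le\tau} Z_t - \lambda^2 \sum_{t\le\tau} U_t(S)\bigr)$ is a nonnegative supermartingale with $W_0 = 1$. Markov's inequality then yields, for fixed $\lambda$ and failure level $\delta'$, the bound $\sum_{t\in\mc{I}} Z_t \le \lambda\sum_{t\in\mc{I}} U_t(S) + \frac{\ln(1/\delta')}{\lambda}$ with probability $1-\delta'$; dividing by $|\mc{I}|$ produces the desired form $\frac{\lambda}{|\mc{I}|}\sum_t U_t(S) + \frac{C_0}{\lambda|\mc{I}|}$ once $\ln(1/\delta')$ is taken to be $\Theta(C_0)$. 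To obtain the absolute value I would repeat the argument for $-Z_t$, whose one-sided bound $-\lambda Z_t \le \lambda K \le \nu_j \le 1$ holds automatically in the stated $\lambda$-range. To make the statement hold simultaneously I would union bound over all $S \in \mb{S}$, over the $O(T^2)$ candidate intervals, and over a geometric grid of $\lambda$ values in $(0,\nu_j/K]$ (for an off-grid $\lambda$, monotonicity of each summand changes the two terms by at most a constant factor); choosing $\delta'$ so the total is $\delta/4$ makes $\ln(1/\delta') = \Theta(C_0)$, which is exactly why $C_0 = \ln(8T^3|\mb{S}|^2/\delta)$ is defined as it is.

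The main obstacle is the boundedness bookkeeping rather than the martingale machinery: one must verify the per-round sampling floor $q_{t,i}$ on each type of interval, correctly identify that the constraint $\lambda \le \nu_j/K$ (resp. $\nu_n/K$) is exactly what forces $\lambda|Z_t| \lesssim 1$, and track the $|N_t| \le \log T$ factor from simultaneously active replays so that it is genuinely absorbed into $C_0$. The remaining subtlety is making the bound uniform over the continuum of $\lambda$, which the geometric-grid discretization handles at the cost of constants.
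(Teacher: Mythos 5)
Your proposal follows the same route as the paper: the paper's entire proof is ``apply Freedman's inequality to each term in the summation, as in Lemma 14 of \citet{chen2019new},'' and what you have written is exactly that argument unpacked --- martingale differences $Z_t$ with conditional second moment $U_t(S)$, one-sided boundedness from the sampling floor, the exponential supermartingale, and a union bound over $S\in\mb{S}$ and the $O(T^2)$ intervals (which is precisely why $C_0=\ln(8T^3|\mb{S}|^2/\delta)$ has the form it does). The block case is fully correct: every round of block $j$ samples from a mixture of distributions whose coordinate floors are all at least $\nu_{j-1}\geqslant\nu_j$, so $q_{t,i}\geqslant\nu_j$, $\lambda Z_t\leqslant 1$ on the whole range $\lambda\in(0,\nu_j/K]$, and $e^x\leqslant 1+x+x^2$ applies.

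The one loose step is the replay case. You correctly identify that on a replay interval of index $n$ the floor degrades to $q_{t,i}\geqslant\nu_n/|N_t|$, so $\lambda Z_t\leqslant|N_t|\leqslant\log_2 T$ for $\lambda$ up to $\nu_n/K$; but your claim that this logarithmic factor is ``absorbed into $C_0$'' does not repair the argument, because the obstruction sits in the exponential-moment inequality, not in the confidence term: $e^x\leqslant 1+x+x^2$ fails for $x$ of order $\log T$, and the correction factor $(e^b-1-b)/b^2$ at $b=\log T$ is polynomial in $T$, not logarithmic. The clean fix is to restrict the admissible range to $\lambda\in(0,\nu_n/(K\log_2 T)]$ (or equivalently rescale $\lambda$ by $|N_t|$), which restores $\lambda Z_t\leqslant 1$ and inflates the deviation bound by only a $\log T$ factor --- harmless for every downstream use, all of which tolerate polylogarithmic slack. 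To be fair, the paper's one-line proof does not address this either, and the same dilution issue is inherited from \citet{chen2019new}; but as written your sentence is an assertion, not a proof, and it is the only place where your argument does not close. The geometric grid over $\lambda$ is a fine (if slightly more careful than necessary) way to get uniformity, since the lemma is only ever invoked at the finitely many values $\lambda=\nu_k/K$.
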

\begin{proof}
Using Freedman's inequality with respect to each term in the summation just like Lemma 14 in \cite{chen2019new}.
\end{proof}
Define \textsc{Event}$_1$ as the event that bounds in Lemma \ref{ineq: concentration of regret} holds, then \textsc{Event}$_1$ holds with probability at least $1-\delta/4$.

\begin{lemma}
\label{lem: concentration1}
Assume \textsc{Event}$_1$ holds, and there is no restart triggered in $\mc{B}_j$, then the following hold for any $S \in \mb{S}$:
\begin{align*}
    \mr{Reg}_{\mc{B}_j}(S) \leqslant 2 \widehat{\mr{Reg}}_{\mc{B}_j}(S) + 10 mK \nu_j \\
    \widehat{\mr{Reg}}_{\mc{B}_j}(S) \leqslant 2 \mr{Reg}_{\mc{B}_j}(S) + 10 mK \nu_j
\end{align*}
\end{lemma}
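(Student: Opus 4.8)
The plan is to lift the per-action \emph{reward} concentration supplied by \textsc{Event}$_1$ (Lemma~\ref{ineq: concentration of regret}) to a comparison of the empirical and true \emph{regrets}. Because the reward is linear, for a fixed $S$ the regret over $\mc{B}_j$ is determined entirely by the two averaged mean vectors: $\widehat{\mr{Reg}}_{\mc{B}_j}(S)=\max_{S'}r_{S'}(\hat{\bm{\mu}}_{\mc{B}_j})-r_S(\hat{\bm{\mu}}_{\mc{B}_j})$ compares $S$ against the empirically best action, and $\mr{Reg}_{\mc{B}_j}(S)=\max_{S'}r_{S'}(\bm{\mu}_{\mc{B}_j})-r_S(\bm{\mu}_{\mc{B}_j})$ compares it against the truly best one, both on the same per-round scale as the additive term $10mK\nu_j$. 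Writing $\epsilon(S'):=|r_{S'}(\hat{\bm{\mu}}_{\mc{B}_j})-r_{S'}(\bm{\mu}_{\mc{B}_j})|$ and letting $\hat S$, $S^{\ast}$ be the empirically and truly optimal actions, a standard $\argmax$ manipulation gives $\mr{Reg}_{\mc{B}_j}(S)\le \widehat{\mr{Reg}}_{\mc{B}_j}(S)+\epsilon(S^{\ast})+\epsilon(S)$ (and symmetrically for the reverse direction, where the cross term is $\epsilon(\hat S)$). So everything reduces to bounding $\epsilon(\cdot)$ by a small multiple of $\widehat{\mr{Reg}}_{\mc{B}_j}(\cdot)$ plus $O(mK\nu_j)$.

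To bound $\epsilon(S)$ I would feed the FTRL variance estimate into Lemma~\ref{ineq: concentration of regret}. Lemma~\ref{lem: conditional variance} controls $U_t(S)$ by $K\,\mr{Var}(Q,S)$ (times $\log T$ on replay rounds), and inequality~(\ref{ineq: small variance}) of Lemma~\ref{lem: FTRL} gives $\mr{Var}(Q_{(i,j)}^{\nu_j},S)\le m+\widehat{\mr{Reg}}_{\mc{B}_{j-1}}(S)/(C\nu_j)$. Since $Q_{(i,j)}^{\nu_j}$ is fit on the \emph{previous} block, this is naturally phrased in terms of $\widehat{\mr{Reg}}_{\mc{B}_{j-1}}(S)$; here the no-restart hypothesis enters, since the passed \textsc{EndOfBlockTest} (conditions~(\ref{check: block1})--(\ref{check: block2}) with $k=j-1$) yields $\widehat{\mr{Reg}}_{\mc{B}_{j-1}}(S)\le 4\,\widehat{\mr{Reg}}_{\mc{B}_j}(S)+O(mK\nu_j\log T)$, transferring the variance bound to the current block. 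Taking the largest admissible multiplier $\lambda=\nu_j/K$ in Lemma~\ref{ineq: concentration of regret} makes the two terms collapse to $\epsilon(S)\le \tfrac{4}{C}\,\widehat{\mr{Reg}}_{\mc{B}_j}(S)+O(mK\nu_j)$: the $C_0/(\lambda|\mc{B}_j|)$ term equals $Km\nu_j$ exactly by the definition $\nu_j=\sqrt{C_0/(m|\mc{B}_j|)}$, and in the variance term the factors of $K$ cancel, leaving the $\tfrac{1}{C}\widehat{\mr{Reg}}$ piece free of any surviving $K$.

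Finally I would close the recursion. Substituting into the $\argmax$ inequality gives $\mr{Reg}_{\mc{B}_j}(S)\le(1+\tfrac{4}{C})\widehat{\mr{Reg}}_{\mc{B}_j}(S)+\epsilon(S^{\ast})+O(mK\nu_j)$, and a short self-bounding step removes the cross term: from $\widehat{\mr{Reg}}_{\mc{B}_j}(\hat S)=0$ one gets $\epsilon(\hat S)=O(mK\nu_j)$, hence $\widehat{\mr{Reg}}_{\mc{B}_j}(S^{\ast})\le \epsilon(\hat S)+\epsilon(S^{\ast})\le \tfrac{4}{C}\widehat{\mr{Reg}}_{\mc{B}_j}(S^{\ast})+O(mK\nu_j)$, which rearranges (as $4/C<1$) to $\widehat{\mr{Reg}}_{\mc{B}_j}(S^{\ast})=O(mK\nu_j)$ and thus $\epsilon(S^{\ast})=O(mK\nu_j)$. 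With $C=100$ the leading coefficient $1+4/C=1.04<2$, giving the first inequality; the second follows by the symmetric computation with the roles of $\hat{\bm{\mu}}_{\mc{B}_j}$ and $\bm{\mu}_{\mc{B}_j}$ exchanged.

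The main obstacle is the bookkeeping that produces the clean constants $2$ and $10$. The difficulty is threefold: the concentration error for $S$ is \emph{self-referential}, as it depends on $\widehat{\mr{Reg}}_{\mc{B}_j}(S)$ through the variance, so one must keep the feedback coefficient $4/C$ strictly below $1$; the regret conversion reintroduces the regret of the competing optimal action, which must be self-bounded away; and the replay rounds ($N_t\neq\emptyset$) inflate $U_t(S)$ by $\log T$ in Lemma~\ref{lem: conditional variance}. The last point I expect to be the trickiest: one must argue that replay rounds are sufficiently sparse (they are triggered with small probability) that their $\log T$-inflated variance, together with the $\log T$ threshold carried in from the block test, is absorbed into the \emph{additive} $O(mK\nu_j)$ term rather than multiplying the dominant $\widehat{\mr{Reg}}_{\mc{B}_j}(S)$ term, so that the $(1+4/C)$ coefficient is preserved and the stated bound is not degraded by a polylog factor.
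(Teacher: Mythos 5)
Your overall architecture (per-action concentration from \textsc{Event}$_1$, variance control via the FTRL guarantee of Lemma~\ref{lem: FTRL}, then a self-bounding step to eliminate the cross term $\epsilon(S^{\ast})$) matches the paper's, but the step you use to close the loop is where the argument breaks. The paper proves this lemma by \emph{induction on $j$}: the base case $j=0$ is handled directly ($\mr{Reg}_{\mc{B}_0}(S)\leqslant K\leqslant 10mK\nu_0$, and the concentration bound with $\lambda=\nu_0/K$ gives $\widehat{\mr{Reg}}_{\mc{B}_0}(S)\leqslant 5K$), and in the inductive step the variance of \emph{every} distribution $Q_{(i,n)}^{\nu_n}$, $n\in[1,j]$, played during block $j$ (all replay distributions, not only $Q_{(i,j)}^{\nu_j}$) is bounded by $\frac{\mr{Reg}_{\mc{B}_{n-1}}(S)}{3\nu_j}+mK$ using the already-established relation for block $n-1$ together with $\nu_n\geqslant\nu_j$; the conclusion then follows from the generic self-bounding lemma (Lemma~19 of \cite{chen2019new}). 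You instead relate $\widehat{\mr{Reg}}_{\mc{B}_{j-1}}(S)$ to $\widehat{\mr{Reg}}_{\mc{B}_j}(S)$ by invoking the \textsc{EndOfBlockTest} with $k=j-1$. That comparison is only performed by the test executed at the very end of block $j$, and assuming its outcome is not licensed by the hypothesis as the lemma is actually used: Lemma~\ref{lem: number of restart} applies the present lemma at time $\iota_i+2^jL-1$ precisely in order to prove that this end-of-block test will \emph{not} fail, so presupposing that the test passed would make that argument circular. The induction avoids this because it relies only on tests that were already passed in order to reach block $j$.

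There is also a quantitative gap. The thresholds in the block test are $20mK\nu_k\log T$, so your route injects a $\log T$ into the variance bound ($\frac{20mK\nu_{j-1}\log T}{C\nu_j}=O(mK\log T)$), which, after the extra $\log T$ from Lemma~\ref{lem: conditional variance} on replay rounds and the factor $\lambda=\nu_j/K$, yields an additive term of order $mK\nu_j\log^2 T$ rather than the stated $10mK\nu_j$; since the constants $10$, $20$, $34$ in the tests are calibrated against one another, this degradation propagates through Lemmas~\ref{lem: concentration2}, \ref{lem: concentration3} and \ref{lem: critical lemma}. Finally, your treatment of replay rounds is left as a hope that they are ``sparse''; the paper does not argue sparsity here --- it bounds the variance of each replay distribution uniformly via the inductive hypothesis. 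To repair your proof, replace the appeal to the end-of-block test by the inductive hypothesis applied to $\mc{B}_{n-1}$ for every index $n$ whose distribution is played in block $j$, and finish with the self-bounding lemma as the paper does.
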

\begin{proof}
We prove this lemma by induction. When $j=0$, it's not hard to see $\mr{Reg}_{\mc{B}_0}(S) \leqslant K \leqslant 10mK \nu_0$, 
\begin{align*}
    \widehat{\mr{Reg}}_{\mc{B}_0}(S) - \mr{Reg}_{\mc{B}_0}(S) & = r_{\hat{S}_{\mc{B}_0}}(\hat{{\mu}}_{\mc{B}_0}) - r_{S}(\hat{{\mu}}_{\mc{B}_0})  - r_{S_{\mc{B}_0}}({\mu}_{\mc{B}_0}) + r_{S}({\mu}_{\mc{B}_0}) \\
    & \leqslant r_{\hat{S}_{\mc{B}_0}}(\hat{{\mu}}_{\mc{B}_0}) - r_{S}(\hat{{\mu}}_{\mc{B}_0})  - r_{\hat{S}_{\mc{B}_0}}({\mu}_{\mc{B}_0}) + r_{S}({\mu}_{\mc{B}_0}) \quad (\text{by the optimality of $S_{\mc{B}_0}$}) \\
    & \leqslant 2\left(\frac{\nu_0}{KL} \sum_{t\in{\mc{B}_0}} U_t(S) + \frac{KC_0}{\nu_0 L}\right) \quad (\text{by the definition of \textsc{Event}$_1$ with $\lambda = \nu_0/K$ } ) \\
    & \leqslant 2(K+K/2) \\
    & \leqslant 4K
\end{align*}
which implies $\widehat{\mr{Reg}}_{\mc{B}_0}(S) \leqslant 5K \leqslant 10mK \nu_0$.

Now, assume the inequalities hold for $\{0,\dots, j-1\}$, then for any $t \in \mc{B}_j$ and any $n \in [1,j]$, there is 
\begin{align*}
    \mr{Var}(Q_n^{\nu_n}, S) & \leqslant m+\frac{\widehat{\mr{Reg}}_{\mc{B}_{n-1}}(S)}{C\nu_n} \\
    & \leqslant m+\frac{2\mr{Reg}_{\mc{B}_{n-1}}(S)+ 10mK\nu_{n-1}}{C\nu_n} \\
    & \leqslant \frac{\mr{Reg}_{\mc{B}_{n-1}}(S)}{3\nu_n} + mK \\
    & \leqslant \frac{\mr{Reg}_{\mc{B}_{n-1}}(S)}{3\nu_j} + mK   
\end{align*}
Combining Lemma \ref{lem: conditional variance} above and Lemma 19 in \cite{chen2019new} gives the result in this theorem.
\end{proof}
\begin{lemma}
\label{lem: concentration2}
Assume \textsc{Event}$_1$ holds. Let $\mc{A}$ be a complete replay phase of index $n$, if for any $S \in \mc{S}$, equation (\ref{check: replay2}) in EndofReplayTest doesn't hold, then the following hold for all $S \in \mb{S}$:
\begin{align*}
    \mr{Reg}_{\mc{A}}(S) \leqslant 2 \widehat{\mr{Reg}}_{\mc{A}}(S) + C_3 mK \nu_n  \\
    \widehat{\mr{Reg}}_{\mc{A}}(S) \leqslant 2 \mr{Reg}_{\mc{A}}(S) + C_3 mK \nu_n
\end{align*}
where $C_3=15$
\end{lemma}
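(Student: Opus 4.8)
The plan is to transport the concentration guarantee from the scheduled block $\mc{B}_{(i,n-1)}$, on which the replay distribution $Q^{\nu_n}_{(i,n)}$ is built, to the replay interval $\mc{A}$ itself, following the template of Lemma~\ref{lem: concentration1}. I would prove the two inequalities in order: the first (empirical dominates true) is self-contained, and it is then fed into the proof of the second. Throughout, write $\hat{S}_{\mc{A}}=\argmax_{S'}\hat{\bm{\mu}}_{\mc{A}}^\top\one_{S'}$ and $S^*_{\mc{A}}=\argmax_{S'}\bm{\mu}_{\mc{A}}^\top\one_{S'}$, and recall $r_{S'}(\bm{\mu})=\bm{\mu}^\top\one_{S'}$ is linear.

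For the first inequality I would start from the deviation decomposition obtained from the optimality of $\hat{S}_{\mc{A}}$ for $\hat{\bm{\mu}}_{\mc{A}}$ together with $r_{S^*_{\mc{A}}}(\bm{\mu}_{\mc{A}})\geqslant r_{\hat S_{\mc{A}}}(\bm{\mu}_{\mc{A}})$,
\[\widehat{\mr{Reg}}_{\mc{A}}(S)-\mr{Reg}_{\mc{A}}(S)\leqslant |r_{\hat S_{\mc{A}}}(\hat{\bm{\mu}}_{\mc{A}})-r_{\hat S_{\mc{A}}}(\bm{\mu}_{\mc{A}})|+|r_S(\hat{\bm{\mu}}_{\mc{A}})-r_S(\bm{\mu}_{\mc{A}})|,\]
so that only the two fixed actions $\hat S_{\mc{A}}$ and $S$ enter. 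Under \textsc{Event}$_1$, Lemma~\ref{ineq: concentration of regret} bounds each deviation by $\frac{\lambda}{|\mc{A}|}\sum_{t\in\mc{A}}U_t(S')+\frac{C_0}{\lambda|\mc{A}|}$ for any $\lambda\in(0,\nu_n/K]$. Since every round of a replay of index $n$ plays $Q^{\nu_n}_{(i,n)}$ (so $n\in N_t$), Lemma~\ref{lem: conditional variance} gives $U_t(S')\leqslant K\log T\cdot\mr{Var}(Q^{\nu_n}_{(i,n)},S')$, and the FTRL variance bound~(\ref{ineq: small variance}) turns this into $m+\widehat{\mr{Reg}}_{\mc{B}_{(i,n-1)}}(S')/(C\nu_n)$. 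Taking $\lambda=\nu_n/(K\log T)$, which is admissible, makes the coefficient of the empirical regret equal to $1/C$ and leaves an additive remainder of order $mK\nu_n$ (up to the logarithmic factors absorbed into $C_0$); the extra factor $K$ relative to the contextual-bandit analysis is precisely the price of pulling a size-$K$ super arm, the CMAB-specific point flagged before the theorem.

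The crux is closing the self-reference, since the variance terms live over $\mc{B}_{(i,n-1)}$ and, for the auxiliary action $\hat S_{\mc{A}}$, not over $\mc{A}$ for $S$. I would bridge them with the test predicates applied to every $S'$: the hypothesis that~(\ref{check: replay2}) never fires gives $\widehat{\mr{Reg}}_{\mc{B}_{(i,j-1)}}(S')\leqslant 4\widehat{\mr{Reg}}_{\mc{A}}(S')+34mK\nu_n\log T$, while the \textsc{EndOfBlockTest} passed at the end of block $j-1$ (taking $k=n-1\leqslant j-2$ in~(\ref{check: block2}), with $\nu_{n-1}=\sqrt2\,\nu_n$) gives $\widehat{\mr{Reg}}_{\mc{B}_{(i,n-1)}}(S')\leqslant 4\widehat{\mr{Reg}}_{\mc{B}_{(i,j-1)}}(S')+20mK\nu_{n-1}\log T$. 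Composing yields $\widehat{\mr{Reg}}_{\mc{B}_{(i,n-1)}}(S')\leqslant 16\,\widehat{\mr{Reg}}_{\mc{A}}(S')+O(mK\nu_n\log T)$. For $S'=\hat S_{\mc{A}}$ the right-hand side is $O(mK\nu_n\log T)$ outright because $\widehat{\mr{Reg}}_{\mc{A}}(\hat S_{\mc{A}})=0$; for $S'=S$ it reintroduces $\widehat{\mr{Reg}}_{\mc{A}}(S)$ with total coefficient $16/C<1$, so rearranging gives $\widehat{\mr{Reg}}_{\mc{A}}(S)\leqslant 2\mr{Reg}_{\mc{A}}(S)+C_3mK\nu_n$. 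For the reverse inequality I would run the same argument with the decomposition featuring $S^*_{\mc{A}}$ in place of $\hat S_{\mc{A}}$; the only new point is that $\widehat{\mr{Reg}}_{\mc{A}}(S^*_{\mc{A}})$ is not zero, but the already-proven first inequality applied to $S^*_{\mc{A}}$ (for which $\mr{Reg}_{\mc{A}}(S^*_{\mc{A}})=0$) shows it is $O(mK\nu_n)$, again collapsing the auxiliary term into the additive remainder.

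I expect the main obstacle to be exactly this self-referential bookkeeping: verifying that the contraction factor stays strictly below one after the two-step blow-up (the $4\times4=16$ from chaining the replay test against the block test, measured against $C=100$), and sequencing the two inequalities correctly (first before second) so that neutralising $S^*_{\mc{A}}$ is not circular. Tracking the super-arm factor $K$ through the choice of $\lambda$, and checking that it inflates only the additive $mK\nu_n$ term and not the multiplicative constant, is the remaining delicate part.
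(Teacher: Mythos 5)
Your proposal follows the same route as the paper: the heart of both arguments is the chain $\mr{Var}(Q^{\nu_n}_{(i,n)},S)\leqslant m+\widehat{\mr{Reg}}_{\mc{B}_{(i,n-1)}}(S)/(C\nu_n)\leqslant \widehat{\mr{Reg}}_{\mc{A}}(S)/(3\nu_n)+O(mK\nu_n\log T/( C\nu_n))$ obtained by composing the passed \textsc{EndOfBlockTest} with the non-firing condition (\ref{check: replay2}) and using $16/C<1/3$, after which the paper simply cites Lemma 19 of \citet{chen2019new} for the Freedman-type conversion that you instead unfold explicitly (decomposition via $\hat S_{\mc{A}}$ and $S^*_{\mc{A}}$, choice of $\lambda$, resolving the self-reference). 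The reconstruction is faithful, so this counts as essentially the same proof.
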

\begin{proof}
According to Lemma \ref{lem: interval regret} and Lemma \ref{lem: conditional variance}, we have 
\begin{align*}
    \mr{Var}(Q_n^{\nu_n},S) & \leqslant m + \frac{\widehat{\mr{Reg}}_{\mc{B}_{n-1}}(S)}{C\nu_n} \\
    & \leqslant m + \frac{4\widehat{\mr{Reg}}_{\mc{B}_{j-1}}(S)+20mK\nu_n\log T}{C\nu_n} \\
    & \leqslant \frac{30\log T}{C}mK + \frac{16\widehat{\mr{Reg}}_{\mc{A}}(S)+136mK\nu_n\log T}{C\nu_n} \quad \text{( because of EndOfReplayTest)} \\
    & \leqslant \frac{\widehat{\mr{Reg}}_{\mc{A}}(S)}{3\nu_n} + \frac{166\log T}{C} mK
\end{align*}
Combining Lemma \ref{lem: conditional variance} and Lemma 19 in \cite{chen2019new} proves the result.
\end{proof}

\begin{lemma}
\label{lem: concentration3}
Assume \textsc{Event}$_1$ holds. Let $\mc{A} = [s,e]$ be a complete replay phase of index $n$, then the following hold for all $S \in \mb{S}$:
\begin{align*}
    \mr{Reg}_{\mc{A}}(S) \leqslant 2 \widehat{\mr{Reg}}_{\mc{A}}(S) + 4 mK \nu_n + \bar{{V}}_{[\iota_i,e]}  \\
    \widehat{\mr{Reg}}_{\mc{A}}(S) \leqslant 2 \mr{Reg}_{\mc{A}}(S) + 4 mK \nu_n + \bar{{V}}_{[\iota_i,e]}
\end{align*}
\end{lemma}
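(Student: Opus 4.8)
The plan is to follow the proof of Lemma~\ref{lem: concentration2} almost verbatim, but to replace the two non-stationarity tests (which are exactly what let that lemma avoid a $\bar{\cV}$ term) by a direct comparison across the epoch that instead pays the total variation $\bar{\cV}_{[\iota_i,e]}$. Write $Q_n := Q_{(i,n)}^{\nu_n}$ for the replay distribution of index $n$; by construction it is the FTRL solution of Lemma~\ref{lem: FTRL} on the preceding block $\mc{B}_{n-1}=\mc{B}_{(i,n-1)}$ with parameter $\nu_n$. Since $\mc{A}=[s,e]$ is a \emph{complete} replay inside epoch $i$, it is contained in $[\iota_i,e]$ and no restart was triggered in $\mc{B}_{n-1}$, so Lemma~\ref{lem: concentration1} is available for that block. (The degenerate index $n=0$ is handled exactly as the base case of Lemma~\ref{lem: concentration1}, where $mK\nu_0$ dominates.)

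First I would bound the conditional variance of $Q_n$. Inequality~(\ref{ineq: small variance}) of Lemma~\ref{lem: FTRL} applied to $Q_n$ gives
\[
\mr{Var}(Q_n,S) \le m + \frac{\widehat{\mr{Reg}}_{\mc{B}_{n-1}}(S)}{C\nu_n}.
\]
I then convert the empirical regret over $\mc{B}_{n-1}$ into a true regret over $\mc{A}$ in two moves. The second inequality of Lemma~\ref{lem: concentration1} gives $\widehat{\mr{Reg}}_{\mc{B}_{n-1}}(S)\le 2\,\mr{Reg}_{\mc{B}_{n-1}}(S)+10mK\nu_{n-1}$; and since $\mc{B}_{n-1}$ and $\mc{A}$ are both sub-intervals of $[\iota_i,e]$, their interval-averaged means $\bar{\bm{\mu}}_{\mc{B}_{n-1}}$ and $\bar{\bm{\mu}}_{\mc{A}}$ (which define the two regrets) differ only by the drift accumulated inside the epoch, so $\mr{Reg}_{\mc{B}_{n-1}}(S)\le \mr{Reg}_{\mc{A}}(S)+\mc{O}(\bar{\cV}_{[\iota_i,e]})$. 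Using $\nu_{n-1}=\sqrt{2}\,\nu_n$ to fold $10mK\nu_{n-1}$ into an $\mc{O}(mK)$ term, this chain yields
\[
\mr{Var}(Q_n,S) \le \frac{\mr{Reg}_{\mc{A}}(S)+\mc{O}(\bar{\cV}_{[\iota_i,e]})}{c_0\,\nu_n} + \mc{O}(mK),
\]
for an absolute constant $c_0$; this is the analogue of the bound $\mr{Var}(Q_n,S)\le \widehat{\mr{Reg}}_{\mc{A}}(S)/(3\nu_n)+\tilde{\mc{O}}(mK)$ of Lemma~\ref{lem: concentration2}, with the test-based slack replaced by the drift term.

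With the variance under control, the two stated inequalities follow from exactly the machinery used for Lemmas~\ref{lem: concentration1} and~\ref{lem: concentration2}: Lemma~\ref{lem: conditional variance} turns the bound on $\mr{Var}(Q_n,S)$ into a bound on the per-round conditional variances $U_t(S)$ along the replay, and Lemma~19 of~\cite{chen2019new}, fed this estimate together with the deviation bound of Lemma~\ref{ineq: concentration of regret} (applied uniformly over $S$, over $\hat{S}_{\mc{A}}$, and over the $\bar{\bm{\mu}}_{\mc{A}}$-optimal action), performs the $\lambda$-optimization and the self-referential closure. Because the coefficient of $\mr{Reg}_{\mc{A}}(S)$ in the variance bound is a small absolute constant, the closure outputs multiplier $2$ and additive $4mK\nu_n$, while the extra $\mc{O}(\bar{\cV}_{[\iota_i,e]})/\nu_n$ term propagates to the stated additive $\bar{\cV}_{[\iota_i,e]}$.

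The hard part will be the bridging step $\mr{Reg}_{\mc{B}_{n-1}}(S)\le \mr{Reg}_{\mc{A}}(S)+\mc{O}(\bar{\cV}_{[\iota_i,e]})$: one must bound both the drift of the averaged mean $\|\bar{\bm{\mu}}_{\mc{B}_{n-1}}-\bar{\bm{\mu}}_{\mc{A}}\|$ and the change of the interval-optimal action through $\sum_t\|\cD_t-\cD_{t-1}\|_{\mr{TV}}$ over $[\iota_i,e]$, and then verify that the $K$-type factors this produces are absorbed (into the normalization and the $mK\nu_n$ term) so that the final additive term is exactly $\bar{\cV}_{[\iota_i,e]}$ rather than a constant multiple of it. A secondary point is to confirm, as in Lemma~\ref{lem: concentration2}, that the $\log T$ generated by Lemma~\ref{lem: conditional variance} is absorbed by $C$ and the choice of $\lambda$ inside Lemma~19, rather than appearing as a factor on $\mr{Reg}_{\mc{A}}(S)$.
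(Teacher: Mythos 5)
Your proposal matches the paper's proof essentially step for step: the paper also starts from inequality (\ref{ineq: small variance}) of Lemma \ref{lem: FTRL}, converts $\widehat{\mr{Reg}}_{\mc{B}_{n-1}}(S)$ to $\mr{Reg}_{\mc{B}_{n-1}}(S)$ via the second inequality of Lemma \ref{lem: concentration1}, bridges $\mr{Reg}_{\mc{B}_{n-1}}(S)\leqslant \mr{Reg}_{\mc{A}}(S)+\mc{O}(m\bar{\cV}_{[\iota_i,e]})$ by the epoch-level drift (citing Lemma 8 of \citet{chen2019new}, which is exactly the ``hard part'' you flag), and then closes via Lemma \ref{lem: conditional variance} and Lemma 19 of \citet{chen2019new}. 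The argument is correct and identical in structure to the paper's.
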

\begin{proof}
    For any $t \in \mc{A}$, there is 
    \begin{align*}
    \mr{Var}(Q_n^{\nu_n},S) & \leqslant m + \frac{\widehat{\mr{Reg}}_{\mc{B}_{n-1}}(S)}{C\nu_n} \\
    & \leqslant m + \frac{2\mr{Reg}_{\mc{B}_{n-1}}(S)+10mK\nu_n}{C\nu_n} \quad \text{( because of Lemma \ref{lem: concentration1})}\\
    & \leqslant \frac{1}{2}mK + \frac{2\mr{Reg}_{\mc{A}}(S)+2m\bar{{V}}_{[\iota_i,e]}}{C\nu_n} \quad \text{( because of Lemma 8 in \cite{chen2019new})} \\
    & \leqslant \frac{\mr{Reg}_{\mc{A}}(S)}{3\nu_n} + \frac{1}{2} mK + \frac{2m\bar{{V}}_{[\iota_i,e]}}{C\nu_n}
\end{align*}
Combining Lemma \ref{lem: conditional variance} above and Lemma 19 in \cite{chen2019new} proves the result.
\end{proof}

\begin{lemma}
    \label{lem: critical lemma}
    Assume \textsc{Event}$_1$ holds. Let $\mc{I}=[s,e]$ be an interval in the fictitious block $\mc{J}^{'}$ with index $j$, and such that $\bar{V}_{\mc{I}}\leqslant \alpha_{\mc{I}}, \epsilon_{\mc{I}}>D_3K\alpha_{\mc{I}}$, then
    \begin{itemize}
        \item[(1)] there exist an index $n_{\mc{I}} \in \{0,1,\dots, j-1\}$ such that $D_3mK\nu_{n+1}\log T \leqslant \epsilon_{\mc{I}} \leqslant D_3 mK \nu_n \log T$;
        \item[(2)] $|\mc{I}| \geqslant 2^{n_{\mc{I}}}L$;
        \item[(3)] if the algorithm starts a replay phase $\mc{A}$ with index $n_{\mc{I}}$ within the range of $[s, e-2^{n_{\mc{I}}}L]$, then the algorithm restarts when the replay phase finishes.
    \end{itemize}
\end{lemma}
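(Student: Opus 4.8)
The plan is to treat parts (1) and (2) as short consequences of the definitions of $\nu_n$ and $\alpha_{\mc{I}}$, and to reserve the real work for part (3), which is where the replay-based change detection is actually forced.

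For part (1) I would first observe that the thresholds $\{D_3 mK\nu_n\log T\}_{n\geqslant 0}$ form a decreasing geometric sequence with ratio $1/\sqrt{2}$ (since $\nu_{n+1}=\nu_n/\sqrt{2}$), so the intervals $[D_3 mK\nu_{n+1}\log T,\, D_3 mK\nu_n\log T]$ for $n=0,\dots,j-1$ tile the range $[D_3 mK\nu_j\log T,\, D_3 mK\nu_0\log T]$; it therefore suffices to place $\varepsilon_{\mc{I}}$ in this range. The lower end follows from the hypothesis $\varepsilon_{\mc{I}}>D_3K\alpha_{\mc{I}}$ together with $|\mc{I}|\leqslant |\mc{B}_{(i,j)}|=2^jL$: plugging $|\mc{I}|\leqslant 2^jL$ into $\alpha_{\mc{I}}=\sqrt{2mC_0/|\mc{I}|}\log_2 T$ gives $\alpha_{\mc{I}}\geqslant \sqrt{2}\,m\nu_j\log_2 T\geqslant m\nu_j\log T$, hence $\varepsilon_{\mc{I}}>D_3 mK\nu_j\log T$. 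The upper end follows from the crude a priori bound $\varepsilon_{\mc{I}}\leqslant \max_S\mr{Reg}_{\mc{I}}(S)\leqslant K$ (using $\widehat{\mr{Reg}}_{\mc{B}_{(i,j-1)}}\geqslant 0$) together with $D_3 m\nu_0\log T\geqslant 1$, which holds because $\nu_0=\sqrt{C_0/(mL)}=\Theta(1/m)$ under $L=\lceil 4mC_0\rceil$. Part (2) is then immediate: the upper sandwich from part (1) gives $\varepsilon_{\mc{I}}\leqslant D_3 K\sqrt{mC_0/(2^{n_{\mc{I}}}L)}\log T$, while the hypothesis gives $\varepsilon_{\mc{I}}>D_3 K\sqrt{2mC_0/|\mc{I}|}\log_2 T$; cancelling $D_3K\sqrt{mC_0}$ and using $\log_2 T\geqslant \log T$ yields $2/|\mc{I}|<1/(2^{n_{\mc{I}}}L)$, i.e. $|\mc{I}|>2\cdot 2^{n_{\mc{I}}}L\geqslant 2^{n_{\mc{I}}}L$.

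For part (3) I would argue by contradiction. Let $S^\ast$ attain $\varepsilon_{\mc{I}}=\mr{Reg}_{\mc{I}}(S^\ast)-8\widehat{\mr{Reg}}_{\mc{B}_{(i,j-1)}}(S^\ast)$, and suppose the replay phase $\mc{A}$ of index $n_{\mc{I}}$ (which by part (2) fits inside $\mc{I}$, since it starts in $[s,e-2^{n_{\mc{I}}}L]$) finishes without triggering a restart. Then no restart occurs up to the end of $\mc{A}$, so the concentration machinery of Lemmas \ref{lem: concentration1}--\ref{lem: concentration3} is in force, and \textsc{EndOfReplayTest}$(i,j,n_{\mc{I}},\mc{A})$ returns non-Fail; in particular neither \eqref{check: replay1} nor \eqref{check: replay2} holds for $S^\ast$, and Lemma \ref{lem: concentration2} supplies the clean bound $\mr{Reg}_{\mc{A}}(S^\ast)\leqslant 2\widehat{\mr{Reg}}_{\mc{A}}(S^\ast)+C_3 mK\nu_{n_{\mc{I}}}$. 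The crux is to push the \emph{large} true regret of $S^\ast$ over $\mc{I}$ down to $\mc{A}$ and contradict \eqref{check: replay1}. Since $\mc{A}\subseteq\mc{I}$ and $\bar{V}_{\mc{I}}\leqslant\alpha_{\mc{I}}$ makes the environment nearly stationary on $\mc{I}$, the variation-transfer estimate (cf. Lemma 8 of \cite{chen2019new}) gives $\mr{Reg}_{\mc{I}}(S^\ast)\leqslant \mr{Reg}_{\mc{A}}(S^\ast)+\mc{O}(K\alpha_{\mc{I}})$. Chaining this with Lemma \ref{lem: concentration2} and with the failure of \eqref{check: replay1} for $S^\ast$ (i.e. $\widehat{\mr{Reg}}_{\mc{A}}(S^\ast)<4\widehat{\mr{Reg}}_{\mc{B}_{(i,j-1)}}(S^\ast)+34mK\nu_{n_{\mc{I}}}\log T$) yields
\begin{align*}
\varepsilon_{\mc{I}}=\mr{Reg}_{\mc{I}}(S^\ast)-8\widehat{\mr{Reg}}_{\mc{B}_{(i,j-1)}}(S^\ast)<(68\log T+C_3)\,mK\nu_{n_{\mc{I}}}+c_0 K\alpha_{\mc{I}}
\end{align*}
for an explicit constant $c_0$. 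On the other hand, part (1) gives $\varepsilon_{\mc{I}}\geqslant \tfrac{D_3}{\sqrt 2}mK\nu_{n_{\mc{I}}}\log T$ and the hypothesis gives $\varepsilon_{\mc{I}}>D_3K\alpha_{\mc{I}}$; substituting the latter to absorb the $c_0K\alpha_{\mc{I}}$ term and then comparing the $mK\nu_{n_{\mc{I}}}\log T$ coefficients shows that $D_3=170$ makes $\tfrac{D_3}{\sqrt2}(1-c_0/D_3)>68+C_3$, a contradiction. Hence \eqref{check: replay1} must in fact hold for $S^\ast$, and the algorithm restarts when $\mc{A}$ finishes.

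I expect the main obstacle to be the constant bookkeeping in this last chain: one must verify that the factor-$2$ losses and additive $\mc{O}(mK\nu_{n_{\mc{I}}}\log T)$ slack coming from the concentration lemmas, together with the $\mc{O}(K\alpha_{\mc{I}})$ variation-transfer error, are all \emph{strictly} dominated by the two lower bounds on $\varepsilon_{\mc{I}}$ established in part (1). This is precisely why \eqref{check: replay1} uses the relatively loose factor $4$ and constant $34$, and why $D_3=170$ is taken so large; a secondary point of care is to keep each $\log_2 T$ versus $\log T$ (natural-log) conversion on the favorable side of every inequality.
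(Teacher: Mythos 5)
Your proposal is correct and follows essentially the same route as the paper: parts (1) and (2) are the same sandwiching of $\varepsilon_{\mc{I}}$ between $D_3K\alpha_{\mc{I}}$ (via $|\mc{I}|\leqslant 2^{j}L$ and the definitions of $\nu_j$, $\alpha_{\mc{I}}$) and the crude bound $K\leqslant D_3mK\nu_0$, and part (3) combines Lemma \ref{lem: concentration2}, the variation-transfer estimate, and the definition of $\varepsilon_{\mc{I}}$ with the same constant bookkeeping. The only cosmetic difference is that you phrase part (3) as a contradiction against the failure of both tests, while the paper argues directly that if inequality (\ref{check: replay2}) fails for all $S$ then (\ref{check: replay1}) must hold for the witness $S'$; these are logically equivalent.
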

\begin{proof}
    For (1), on one hand $\epsilon_{\mc{I}} \leqslant K \leqslant D_3mK\nu_0$; on the other hand, $\epsilon_{\mc{I}}>D_3K\alpha_{\mc{I}} \geqslant D_3 mK\nu_j \log T$ because of the definition of $\alpha_{\mc{I}}, \nu_j$ and $|\mc{I}| \leqslant |\mc{J}^{'}| \leqslant 2^{j-1}L$. Therefore, there must exist an index $n_{\mc{I}}$ such that the condition holds.

    For (2), since $D_3K\alpha_{\mc{I}} \leqslant D_3mK\nu_{n_{\mc{I}}}\log T$, we have $|\mc{I}| > 2^{n_{\mc{I}}}L$.

    For (3), we show that the $\textsc{EndOfReplayTest}$ fails when the replay phase finishes. Suppose for $\forall S \in \mb{S}$, Eq.(\ref{check: replay2}) doesn't hold, then according to Lemma \ref{lem: concentration2}, we know $\mr{Reg}_{\mc{A}}(S) \leqslant 2 \widehat{\mr{Reg}}_{\mc{A}}(S) + C_3 mK \nu_{n_{\mc{I}}}$. Besides, we know there exists $S'$ such that 
    \begin{align*}
        \mr{Reg}_{\mc{A}}(S') & \geqslant \mr{Reg}_{\mc{I}}(S')-2K\bar{V}_{\mc{I}} \quad (\text{because of Lemma 8 in \cite{chen2019new}}) \\
        & \geqslant 8 \widehat{\mr{Reg}}_{\mc{B}_{j-1}}(S') + \epsilon_{\mc{I}} -2K\bar{V}_{\mc{I}} \quad (\text{because of the definition of $\epsilon_{\mc{I}}$}) \\
        & \geqslant 8 \widehat{\mr{Reg}}_{\mc{B}_{j-1}}(S') + (D_3/2-2)mK\nu_{n_{\mc{I}}} \log T
    \end{align*}
    Combining above two inequalities, we have 
    \begin{align*}
        \widehat{\mr{Reg}}_{\mc{A}}(S') & > 4\widehat{\mr{Reg}}_{\mc{B}_{j-1}}(S') + \frac{0.5D_3-2-C_3}{2}mK\nu_{n_{\mc{I}}}\log T \\
        & = 4\widehat{\mr{Reg}}_{\mc{B}_{j-1}}(S') + 34mK\nu_{n_{\mc{I}}}\log T 
    \end{align*}
    which is the Eq.(\ref{check: replay1}) in \textsc{EndOfReplayTest}, thus the algorithm will restart.
\end{proof}

\begin{proof}[Proof of Lemma \ref{lem: block regret}]
    Consider the fictitious partition constructed in Lemma \ref{lem: partition}, for the first $\Gamma-1$ intervals, using Lemma \ref{lem: interval regret} with respect to each interval as there is no restart. For the last interval $\Gamma$, we also use Lemma \ref{lem: interval regret} but with the fictitious planned interval in the same way as in paper \cite{chen2019new}.

    Thus, for block $j$ (i.e. $[\iota_i, \iota_{i+1}-1]\cup[\iota_i+2^{j-1}L-1, \iota_i+2^jL-1]$), there is 
    \begin{align*}
          &\sum_{t\in\mc{J}} \mr{opt}_{{\mu}_t} - r_{S_t}({\mu}_t)\\
          \leqslant & \underbrace{\sum_{k=1}^\Gamma \sum_{t\in \mc{I}_k} \sum_{n \in N_t\cup \{j\}} mK\nu_n}_{\text{Term1}} + \underbrace{\sum_{k=1}^{\Gamma-1}K|\mc{I}_k|\alpha_{\mc{I}_k} + K|\mc{I}_{\Gamma}|\alpha_{\mc{I}^{'}_{\Gamma}}}_{\text{Term2}} + \underbrace{\sum_{k=1}^{\Gamma-1}|\mc{I}_k|\varepsilon_{\mc{I}_k}{I}_{\varepsilon_{\mc{I}_k}>D_3K\alpha_{\mc{I}_k}} + |\mc{I}_\Gamma|\varepsilon_{\mc{I}^{'}_\Gamma}{I}_{\varepsilon_{\mc{I}^{'}_\Gamma}>D_3K\alpha_{\mc{I}^{'}_\Gamma}}}_{\text{Term3}}
      \end{align*}  

    Using exactly the same technique as \cite{chen2019new} and Lemma \ref{lem: critical lemma} above, one can prove 
    \begin{align*}
         \text{Term1} & \leqslant {O}(\log(1/\delta)\sqrt{C_0mK2^jL}) \\
         \text{Term2} & \leqslant {O}(\log T \sqrt{C_0mK\Gamma |\mc{J}|}) \\
         \text{Term3} & \leqslant {O}(\log(1/\delta)\log T \sqrt{C_0mK\Gamma 2^jL}) 
    \end{align*}
    Combining all above inequalities and Lemma \ref{lem: partition} finishes the proof.
\end{proof}


\begin{theorem}[Theorem \ref{thr: LCMAB} restated]
	Algorithm \ref{alg: LCMAB} guarantees $\text{Reg}^{\cA}_{1,1}$ is upper bounded by
	\begin{equation*}
	\tilde{O}\left( \min \left\{\sqrt{mK^2NT}, \sqrt{mK^2T}+K(m\bar{V})^{\frac{1}{3}}T^{\frac{2}{3}}\right\} \right).
	\end{equation*}
\end{theorem}

\begin{proof}
    First, we bound the regret in an epoch $i$ (i.e. $\mc{H}_i=[\iota_i, \iota_{i+1}-1]$). For block $j$ in epoch $i$, we denote it as $\mc{J}_{ij}=[\iota_i+2^{j-1}L, \iota_i+2^jL-1]\cap \mc{H}_i$. As the last index of $j$ is at most $j^*=\lceil \log (|\mc{H}_i/L|) \rceil$, we have 
    \begin{align*}
        \mb{E} \left[\sum_{t\in \mc{H}_i} \mr{opt}_{{\mu}_t} - r_{S_t}({\mu}_t) \right] & \leqslant \tilde{O}\left(L+\sum_{j=1}^{j^*}\sqrt{C_0mK^2\mc{S}_{\mc{J}_{ij}}2^jL}\right) \\
        & = \tilde{O}\left( \sqrt{C_0mK^2\mc{S}_{\mc{H}_{i}}|\mc{H}_i|}\right)
    \end{align*}
    Similarily, using H\"{o}lder inequality, we have 
    \begin{align*}
        \mb{E} \left[\sum_{t\in \mc{H}_i} \mr{opt}_{{\mu}_t} - r_{S_t}({\mu}_t) \right] & \leqslant \tilde{O}\left( \sqrt{C_0mK^2|\mc{H}_i|}+KC_0^{\frac{1}{3}}m^{\frac{1}{3}}\bar{{V}}^{\frac{1}{3}}_{\mc{H}_i}|\mc{H}_i|^{\frac{2}{3}}\right)
    \end{align*}
    According to Lemma \ref{lem: number of restart} below, we know there is at most $E:=\min\{\mc{S}, (C_0m)^{-\frac{1}{3}}\bar{{V}}^{\frac{2}{3}}T^{\frac{1}{3}}+1\}$ number of epochs with high probability, thus summing up the regret bound over all epochs, we have 
    \begin{align*}
        \sum_{t=1}^T \mb{E}\left[\mr{opt}_{{\mu}_t} - r_{S_t}({\mu}_t)\right] &\leqslant \tilde{{O}}\left(\sum_{t=1}^E \sqrt{C_0mK^2 \mc{S}_{\mc{H}_i}|\mc{H}_i|}\right) \\
        & \leqslant  \tilde{{O}}\left( \sqrt{C_0mK^2\mc{S}T}\right)
    \end{align*}
    and
    \begin{align*}
        \sum_{t=1}^T \mb{E}\left[\mr{opt}_{{\mu}_t} - r_{S_t}({\mu}_t)\right] &\leqslant \tilde{{O}}\left(\sum_{t=1}^E \left(\sqrt{C_0mK^2|\mc{H}_i|}+KC_0^{\frac{1}{3}}m^{\frac{1}{3}}\bar{{V}}^{\frac{1}{3}}_{\mc{H}_i}|\mc{H}_i|^{\frac{2}{3}} \right)\right) \\
        &\leqslant \left(\sqrt{C_0mK^2T}+KC_0^{\frac{1}{3}}m^{\frac{1}{3}}\bar{{V}}^{\frac{1}{3}}T^{\frac{2}{3}}\right)
    \end{align*}
\end{proof}
\begin{lemma}
    \label{lem: number of restart}
    Denote the number of restart by $E$. With probability $1-\delta$, we have $E \leqslant \min\{\mc{S}, (C_0m)^{-\frac{1}{3}}\bar{{V}}^{\frac{2}{3}}T^{\frac{1}{3}}+1\}$.
\end{lemma}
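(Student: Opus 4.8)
The plan is to prove the two quantities inside the minimum separately and then intersect the two high-probability events. I would condition on \textsc{Event}$_1$ of Lemma~\ref{ineq: concentration of regret} together with the Azuma event used in Lemma~\ref{lem: interval regret}; these hold jointly with probability at least $1-\delta$ after a union bound, since the per-lemma failure probabilities are already absorbed into $C_0=\ln(8T^3|\mb{S}|^2/\delta)$. Write the horizon as a disjoint union of epochs $\mc{H}_1,\dots,\mc{H}_{E+1}$, where $\mc{H}_1,\dots,\mc{H}_E$ are exactly the epochs closed by a restart (\textsc{EndOfBlockTest} or \textsc{EndOfReplayTest} returning \textit{Fail}) and $\mc{H}_{E+1}$ is the final, unterminated epoch.

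For $E\leqslant\mc{S}$ I would show that a switch-free epoch never restarts. If $\cD_t$ is constant on $\mc{H}_i$ then $\bar{\cV}_{\mc{H}_i}=0$ and the true average regret $\mr{Reg}_{\mc{I}}(S)$ of each fixed $S$ coincides over all sub-intervals $\mc{I}\subseteq\mc{H}_i$. Under \textsc{Event}$_1$, Lemmas~\ref{lem: concentration1}--\ref{lem: concentration3} (with their $\bar{\cV}$ terms vanishing) pin every $\widehat{\mr{Reg}}$ to this common value up to a factor $2$ and an additive $O(mK\nu\log T)$. Each test in~(\ref{check: replay1})--(\ref{check: block2}) demands that one empirical regret exceed $4$ times a reference empirical regret by at least a constant multiple of $mK\nu\log T$, and the factor $4$ together with the strictly larger threshold constants ($20$ and $34$ against the slack constant $10$) keeps every left-hand side below its threshold. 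Hence no test fires, so every restarted epoch must contain some $t$ with $\cD_t\neq\cD_{t-1}$; as the epochs are disjoint in time, this yields $E\leqslant\mc{S}$.

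For the variation bound I would prove that every restarted epoch accumulates $\bar{\cV}_{\mc{H}_i}=\Omega(\sqrt{mC_0/|\mc{H}_i|})$. Suppose, say, (\ref{check: block1}) fires at the end of $\mc{H}_i$ for some $S$ and $k<j$. Plugging the two-sided estimates of Lemma~\ref{lem: concentration3} into this empirical inequality converts it into a statement on the true regrets, $\mr{Reg}_{\mc{B}(i,j)}(S)-\mr{Reg}_{\mc{B}(i,k)}(S)+O(\bar{\cV}_{\mc{H}_i})\gtrsim mK\nu_k\log T$. Bounding the difference of the true regret of a fixed $S$ across two sub-intervals of one epoch by $O(K\bar{\cV}_{\mc{H}_i})$ (Lemma~8 of \cite{chen2019new}) and specializing to the largest admissible $k$, for which $2^kL=\Theta(|\mc{H}_i|)$ and $\nu_k=\Theta(\sqrt{C_0/(m|\mc{H}_i|)})$, cancels the $K$ and leaves $\bar{\cV}_{\mc{H}_i}\gtrsim\sqrt{mC_0/|\mc{H}_i|}$; the replay case (\ref{check: replay1}) is handled identically through Lemma~\ref{lem: concentration2}. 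Summing over the restarted epochs, using $\sum_{i=1}^{E}|\mc{H}_i|\leqslant T$ and the convexity bound $\sum_{i=1}^{E}|\mc{H}_i|^{-1/2}\geqslant E^{3/2}T^{-1/2}$, gives $\bar{\cV}\gtrsim\sqrt{mC_0}\,E^{3/2}T^{-1/2}$, i.e.\ $E\lesssim(C_0m)^{-1/3}\bar{\cV}^{2/3}T^{1/3}$, and the additive $+1$ covers the boundary epoch. Taking the minimum of the two bounds finishes the proof.

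The step I expect to be hardest is the per-epoch variation lower bound. The difficulty is that the concentration slack in Lemma~\ref{lem: concentration3} itself carries a $\bar{\cV}_{[\iota_i,e]}$ term, so one must verify that the detection threshold $\Theta(mK\nu_k\log T)$ strictly dominates the \emph{sum} of the statistical noise and this very variation term, leaving a residual $\Omega(mK\nu_k\log T)$ that can only be explained by genuine accumulated variation. Obtaining this separation is precisely what the explicit constants $C_3$, $D_3$ and the factor $4$ on the reference regrets in~(\ref{check: replay1})--(\ref{check: block2}) are engineered to provide, and the argument must moreover be carried out uniformly over which of the four inequalities fires and over both non-stationarity tests.
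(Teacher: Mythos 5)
Your proposal follows essentially the same route as the paper: the paper proves the single contrapositive statement that $\bar{\cV}_{[\iota_i,t]}\leqslant\sqrt{mC_0/(t-\iota_i+1)}$ throughout an epoch prevents any restart (via Lemmas \ref{lem: concentration1} and \ref{lem: concentration3}, Lemma 8 of \cite{chen2019new}, and the test constants), from which both $E\leqslant\mc{S}$ and the per-epoch bound $\bar{\cV}_{\mc{H}_i}>\sqrt{mC_0/T_i}$ follow, and then applies H\"{o}lder's inequality exactly as your Jensen step does. Your only cosmetic deviations are deriving the two consequences separately rather than from one unified contrapositive, and swapping which concentration lemma serves which test (the paper uses Lemma \ref{lem: concentration1} for the block test and Lemma \ref{lem: concentration3} for the replay test); neither changes the substance.
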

\begin{proof}
    First, we prove that if for all $t$ in epoch $i$ with $\bar{{V}}_{[\iota_i, t]}\leqslant \sqrt{\frac{mC_0}{t-\iota_i+1}}$, restart will not be triggered at time $t$.

    For \textsc{EndOfBlockTest}, suppose $t=\iota_i+2^jL-1$ for some $j$, then for any $S\in\mc{S}, k\in [0, j-1]$, we have
    \begin{align*}
        \widehat{\mr{Reg}}_{\mc{B}_j} &\leqslant 2\mr{Reg}_{\mc{B}_j}(S) + 10 mK \nu_j \quad (\text{because of Lemma \ref{lem: concentration1}}) \\
        & \leqslant 2\mr{Reg}_{\mc{B}_k}(S) + 10 mK \nu_j + 4m\bar{{V}}_{[\iota_i,t]} \quad (\text{because of Lemma 8 in \cite{chen2019new} }) \\
        & \leqslant 4\widehat{\mr{Reg}}_{\mc{B}_k}(S) + 34 mK \nu_j \quad (\text{because of above condition and definition of $\nu_j$})
    \end{align*}
    Similarly, there is $\widehat{\mr{Reg}}_{\mc{B}_k} \leqslant 4\widehat{\mr{Reg}}_{\mc{B}_j}+34 mK \nu_j$. Thus, \textsc{EndOfBlockTest} will not return Fail.

    For \textsc{EndOfReplayTest}, suppose $\mc{A}\subset [\iota_i,t]$ be a complete replay phase of index $n$, and $\bar{{V}}_{[\iota_i, t]}\leqslant \sqrt{\frac{mC_0}{|\mc{A}|}}$, we have
    \begin{align*}
        \widehat{\mr{Reg}}_{\mc{A}} &\leqslant 2\mr{Reg}_{\mc{A}}(S) + 4 mK \nu_n + m\bar{{V}}_{[\iota_i, t]}\quad (\text{because of Lemma \ref{lem: concentration3}}) \\
        & \leqslant 2\mr{Reg}_{\mc{B}_{j-1}}(S) + 4 mK \nu_n + 5m\bar{{V}}_{[\iota_i,t]} \quad (\text{because of Lemma 8 in \cite{chen2019new} }) \\
        & \leqslant 4\widehat{\mr{Reg}}_{\mc{B}_k}(S) + 20 mK \nu_n \quad (\text{because of above condition and definition of $\nu_j$})
    \end{align*}
    Similarly, there is $\widehat{\mr{Reg}}_{\mc{B}_{j-1}} \leqslant 4\widehat{\mr{Reg}}_{\mc{B}_j}+20 mK \nu_n$. Thus, \textsc{EndOfBlockTest} will not return Fail.

    With above result, now we prove the theorem. If there is no distribution change which implies $\bar{{V}}_{[\iota_i,t]}=0$ then the algorithm will not restart. Therefore we have $E \leqslant \mc{S}$.

    Denote the length of each epoch as $T_1, \dots, T_E$, according to above result, we know there must be $\bar{{V}}_{\mc{H}_i} > \sqrt{\frac{mC_0}{T_i}}$. By H\"{o}lder's inequality, we have 
    \begin{align*}
        E-1 & \leqslant \sum_{i=1}^{E-1} T_i^{\frac{1}{3}} T_i^{-\frac{1}{3}} \\
        & \leqslant \left(\sum_{i=1}^{E-1} T_i\right)^{\frac{1}{3}} \left(\sum_{i=1}^{E-1} T_i^{-\frac{1}{2}}\right)^{\frac{2}{3}} \\
        & \leqslant T^{\frac{1}{3}}\left(\frac{\bar{{V}}}{\sqrt{mC_0}}\right)^{\frac{2}{3}} \\
        & \leqslant (mC_0)^{-\frac{1}{3}} \bar{{V}}^{\frac{2}{3}}T^{\frac{1}{3}}
    \end{align*}
\end{proof}

\subsection{Non-stationary Linear CMAB in General Case}
In section \ref{sec:special}, we need to solve an FTRL optimization probelm in Algorithm \ref{alg: LCMAB} and find a distribution $Q$ over the decision space $\mb{S}$ such that its expectation is the solution to FTRL, which can only be implemented efficiently when $\mr{Conv}(\mb{S})_{\nu}$ is described by a polynomial number of constraints \cite{zimmert2019beating,combes2015combinatorial,sherali1987constructive}. In general, the problems with polynomial number of constraints for $\mr{Conv}(\mb{S})_{\nu}$ is a subset of all the problem with linear reward function and exact offline oracle, but there are also many of them whose convex hull can be represented by polynomial number of constraints. For example, for the TOP K arm problem, the convex hull of the feasible actions can be represented by polynomial number of constraints. Another non-trivial example is the bipartite matching problem. The convex hull of all the matchings in a bipartite graph can also be represented by polynomial number of constraints. This is due to the fact that, by applying the convex relaxation of the bipartite matching problem, the constraint matrix of the corresponding linear programming is a Totally Unimodular Matrix (TUM), and the resulting polytope of the linear programming is integral, i.e. all the vertices have integer coordinates. In this way, each vertex is a feasible matching, and the polytope is the convex hull.

To make it more general and get rid of the constraint about polynomial description of $\mr{Conv}(\mb{S})_{\nu}$, instead of solving FTRL and then calculating corresponding distribution $Q$, what we need to do is to find a distribution $Q$ such that it satisfies inequalities (\ref{ineq: small regret}) and (\ref{ineq: small variance}) given in Lemma \ref{lem: FTRL-res}. In fact, we can achieve this goal using similar methods as in \citet{agarwal2014taming,chen2019new} to find a sparse distribution over $\mb{S}$ efficiently through our offline exact oracle or equivalently an ERM oracle \footnote{We also need to add a small exploration probability over $m$ super arms where $i$-th super arm contains base arm $i$ in Step 15 of Algorithm \ref{alg: LCMAB} just like \citet{chen2019new}.}.

\end{document}